\documentclass[11pt]{article}

\usepackage{amsmath,amssymb,amsthm,mathtools}
\usepackage{bbm}
\usepackage{graphicx}
\usepackage{booktabs}
\usepackage{algorithm}
\usepackage{algpseudocode}
\usepackage{xurl}
\usepackage{arxiv}
\usepackage{subcaption}
\usepackage{float}
\usepackage{dsfont}
\usepackage{enumitem}

\newcommand{\sX}{\mathcal{X}}

\newcommand{\sH}{\mathcal{H}}

\newcommand{\sW}{\mathcal{W}}

\newcommand{\E}{\mathbb{E}}

\DeclareMathOperator*{\argmin}{arg\,min}

\theoremstyle{plain}
\newtheorem{theorem}{Theorem}[section]
\newtheorem{corollary}[theorem]{Corollary}

\newtheorem{lemma}[theorem]{Lemma}

\theoremstyle{definition}

\theoremstyle{remark}

\makeatletter
\AddToHook{env/theorem/after}{\@afterindentfalse\@afterheading}
\AddToHook{env/corollary/after}{\@afterindentfalse\@afterheading}
\AddToHook{env/proposition/after}{\@afterindentfalse\@afterheading}
\AddToHook{env/lemma/after}{\@afterindentfalse\@afterheading}
\AddToHook{env/definition/after}{\@afterindentfalse\@afterheading}
\AddToHook{env/remark/after}{\@afterindentfalse\@afterheading}
\AddToHook{env/proof/after}{\@afterindentfalse\@afterheading}
\AddToHook{env/enumerate/after}{\@afterindentfalse\@afterheading}
\AddToHook{env/itemize/after}{\@afterindentfalse\@afterheading}
\makeatother

\title{Bellman Calibration for Marginalized Importance Weighting in Offline Reinforcement Learning}

\author{
  Lars van der Laan \\
 Stanford University \\
  \texttt{vdlaan@stanford.edu}
  \And
  Nathan Kallus \\
  Netflix and Cornell University
}

\begin{document}

\maketitle

 \begin{abstract}
Marginalized importance weighting evaluates a target policy by reweighting
offline state--action samples with its discounted occupancy ratio, characterized
by an adjoint Bellman equation. Existing minimax, primal--dual, and fitted
fixed-point estimators can leave residual occupancy-balance violations because
of function-class approximation, regularization, or incomplete optimization.
These violations are difficult to diagnose and reduce because the objectives
generally lack a direct supervised validation loss for hyperparameter tuning,
model selection, and early stopping. We introduce isotonic Bellman calibration, a one-dimensional, model-agnostic
post-processing method that reduces these violations while preserving the
ranking information in any initial occupancy-ratio estimate. The method corrects
the estimate's scale and shape by applying fitted occupancy-ratio evaluation
(\textsc{FORE}) over a one-dimensional class of nondecreasing transformations.  We characterize Bellman calibration as a conditional fixed-point property
equivalent to occupancy-balance against every test function of the
calibrated ratio. More generally, we derive a calibration--refinement bound
showing that any fitted ratio with small calibration error performs nearly as
well as the best post-processing based on its fitted values. For isotonic
Bellman calibration, we establish finite-sample calibration guarantees and a KL
oracle inequality relative to the best monotone transformation of the initial
estimate. Consequently, isotonic Bellman calibration achieves small
calibration error and KL risk within statistical error of the best monotone
correction, with guarantees for downstream target-occupancy functionals,
including policy-value estimation.  
\end{abstract}

\section{Introduction}

Marginalized importance weighting evaluates a target policy by reweighting
samples from an offline state--action distribution using the target policy's
discounted occupancy ratio
\citep{liuEtAl2018InfiniteHorizonOPE,xieEtAl2019MarginalizedIS,
yinWang2020EfficientTabularOPE}. Unlike trajectory-level importance sampling,
it corrects marginal state--action occupancies without multiplying importance
ratios across time
\citep{jiangLi2016DoublyRobust,liuEtAl2018InfiniteHorizonOPE,
xieEtAl2019MarginalizedIS}. Because the occupancy ratio depends on the target
policy and transition law, but not on the reward function, a single estimate
can evaluate multiple rewards without fitting a separate value function for
each. It can also be combined with a fitted value or (Q)-function to obtain
doubly robust and semiparametrically efficient estimators
\citep{kallusUehara2020DoubleRL,kallusUehara2022BreakingHorizonDRL,
ueharaEtAl2020MWLMQL}. Accurate occupancy-ratio estimation is therefore a
central problem in offline policy evaluation.

The discounted occupancy ratio satisfies an adjoint Bellman equation, which
induces balance identities linking the reweighted offline distribution to the
target policy's initial distribution and one-step dynamics
\citep{nachumEtAl2019DualDICE,ueharaEtAl2021FiniteSampleMinimax}. Many
estimators enforce these identities through saddle-point problems over weight
and critic classes, choosing weights to minimize the worst-case empirical
violation of the resulting Bellman moments. This principle underlies minimax
weighting and primal--dual methods such as \textsc{DualDICE}
\citep{nachumEtAl2019DualDICE,ueharaEtAl2020MWLMQL,
ueharaEtAl2021FiniteSampleMinimax}. Fitted occupancy-ratio evaluation
(\textsc{FORE}) instead targets the adjoint Bellman fixed point directly through
a sequence of ratio-fitting problems analogous to fitted (Q)-evaluation
\citep{van2026fitted}.

Despite these advances, marginalized importance weighting can remain
statistically difficult and unstable. Under limited overlap, the true occupancy
ratio may be large in regions poorly represented in the offline data, making
both the ratio and the resulting value estimates highly variable.
Occupancy-ratio estimation is further complicated by its fixed-point
characterization, which provides neither observed labels nor a canonical
held-out loss for tuning and model selection. In minimax, primal--dual, and
fitted fixed-point methods, regularization, restricted ratio or critic classes,
optimization error, and early termination can also leave substantial
population Bellman imbalance even when the empirical objective is small
\citep{nachumEtAl2019DualDICE,ueharaEtAl2020MWLMQL,
ueharaEtAl2021FiniteSampleMinimax}. Yet an imperfect ratio estimate may still
reliably rank regions from relatively low to relatively high target occupancy,
even when its magnitudes are poorly calibrated for Bellman balance.

This observation motivates post-processing methods that stabilize fitted
ratios while preserving their ordering information. Standard approaches
include normalization, trimming, and clipping, which rescale weights, discard
observations with extreme fitted ratios, or cap extreme values
\citep{robins2000marginal,crump2009dealing,zhou2020propensity,
lee2011weight,ionides2008truncated,munos2016safe,espeholt2018impala,
zhang2022truncated}. These transformations reduce weight dispersion by
targeting prespecified distortions, such as global scale or tail behavior, and
often require tuning parameters whose bias--variance tradeoff is
difficult to assess from observed data
\citep{lee2011weight,ionides2008truncated,munos2016safe,
espeholt2018impala,su2020doubly,orenstein2022robust,
van2025stabilized}.

To directly correct residual imbalance, we adapt post-hoc calibration from
supervised learning
\citep{lichtenstein1977calibration,bella2010calibration,guo2017calibration,
gupta2021distribution,noarov2023scope}. In classification and regression, a
predictor is calibrated if the average outcome among observations receiving a
given prediction equals that prediction. For example, an event assigned
probability $0.6$ should occur approximately $60\%$ of the time.
Analogously, we call a fitted occupancy ratio calibrated if, conditional on its
fitted value, it agrees on average with one application of the adjoint Bellman
operator. This is natural because the true occupancy ratio is a fixed point of
that operator. Equivalently, the adjoint Bellman moments are balanced against
every function of the fitted ratio. Calibration therefore corrects the
magnitudes relevant for Bellman balance while preserving the ordering encoded
by the initial estimate.

We develop a simple post-processing method that applies to any initial
occupancy-ratio estimator and learns a nonparametric one-dimensional monotone
correction. The method
provides finite-sample nonparametric Bellman calibration guarantees. Our
method, \emph{isotonic Bellman calibration}, applies fitted occupancy-ratio
evaluation (\textsc{FORE}) over a one-dimensional class of nondecreasing
transformations, using adjoint Bellman targets in place of observed calibration
labels. It extends classical isotonic calibration
\citep{zadrozny2002transforming,niculescu2005predicting}, which uses isotonic
regression \citep{barlow1972isotonic} to learn a monotone transformation of an
initial predictor. Monotonicity preserves the ordering of the initial estimate,
while the adjoint Bellman equation determines the scale and shape of the
correction. The transformation class contains rescaling, clipping, and other
monotone parametric adjustments, while allowing more flexible, data-adaptive
corrections across the range of fitted ratios. The procedure operates as a
low-dimensional post-processing step for an already fitted ratio model.

\medskip
\noindent\textbf{Contributions.}
Our contributions are threefold.
\begin{enumerate}[leftmargin=1.5em,topsep=0.2em,itemsep=0.25em,
parsep=0pt,partopsep=0pt]

\item We formalize \textbf{Bellman calibration for occupancy ratios} through
adjoint Bellman moments, characterize it as a conditional fixed-point property,
and derive a calibration--refinement bound that controls occupancy-ratio error
by the calibration error and the best approximation to the true ratio by a
transformation of the fitted ratio.

\item We develop \textbf{fitted isotonic Bellman calibration}, a model-agnostic
post-processing method based on fitted occupancy-ratio evaluation
(\textsc{FORE}). Its core form requires only a stopping rule. Each iteration
solves a convex isotonic optimization problem whose optimality conditions imply
an exact empirical Bellman-balance identity.

\item We establish \textbf{finite-sample calibration and accuracy guarantees},
including a nonasymptotic bound on Bellman calibration error under
subexponential initial coverage and one-step smoothing, and a KL regret bound showing that, up to statistical error, calibration preserves and may
improve occupancy-ratio estimation accuracy.

\end{enumerate}

\subsection{Related work}

\paragraph{Off-policy evaluation and occupancy corrections.}
Classical off-policy evaluation uses trajectory-level or per-decision
importance ratios, while doubly robust estimators combine importance weighting
with value-function estimates
\citep{thomasBrunskill2016DataEfficientOPE,jiangLi2016DoublyRobust}.
Marginalized importance sampling avoids products of trajectory ratios by
correcting marginal state or state--action occupancies
\citep{xieEtAl2019MarginalizedIS,yinWang2020EfficientTabularOPE,
liuEtAl2018InfiniteHorizonOPE}. Semiparametric theory likewise identifies the
occupancy ratio and value function as the two nuisance functions underlying
efficient and doubly robust off-policy evaluation
\citep{kallusUehara2019IntrinsicOPE,kallusUehara2020DRL_ICML,
kallusUehara2020DoubleRL,ueharaShiKallus2022OPEReview,
kallusUehara2022BreakingHorizonDRL,vanDerLaanEtAl2025AutomaticDRL,
van2025efficient}. Related stationary-distribution and occupancy corrections
are also used to mitigate distribution shift in off-policy temporal-difference
learning and fitted value-function methods
\citep{suttonEtAl2016EmphaticTD,hallakMannor2017COPTD,
geladaBellemare2019CovariateShift,pattersonEtAl2022GeneralizedPBE,
vanDerLaanKallus2025StationaryWeightedFQE,
vanDerLaanKallus2025SoftFQI}. Under weak overlap, recent work also truncates
estimated occupancy ratios to stabilize doubly robust off-policy estimators
\citep{mehrabiWager2024WeakOverlap}. Such methods rely on prespecified or
theoretically guided truncation levels whose bias--variance tradeoff can be
difficult to assess from the observed data.

\paragraph{Primal--dual, minimax, and fitted ratio estimation.}
Many occupancy-ratio estimators enforce balance or stationarity restrictions
through saddle-point, minimax, or temporal-difference objectives.
\textsc{DualDICE} estimates discounted distribution corrections without
behavior-policy probabilities or products of trajectory ratios
\citep{nachumEtAl2019DualDICE}, while \textsc{GenDICE} and
\textsc{GradientDICE} extend this approach to stationary-distribution
correction
\citep{zhangEtAl2020GenDICE,zhangEtAl2020GradientDICE}. Related work develops
minimax weight and value-function learners, regularized Lagrangian
formulations, confidence procedures, and regression-based variants
\citep{liuEtAl2018InfiniteHorizonOPE,ueharaEtAl2020MWLMQL,
ueharaEtAl2021FiniteSampleMinimax,yangEtAl2020RegularizedLagrangianDICE,
daiEtAl2020CoinDICE,cheEtAl2025AVGDICE}. Related finite-horizon work includes
the FORC estimator of \citet{huangChenJiang2023DensityFeatures}; see also
\citet{huangJiang2024OccupancyPG}. FORC recursively fits stagewise occupancy
ratios by squared-loss regression under stagewise realizability conditions, a
density-ratio analogue of Bellman completeness. In the discounted setting,
fitted occupancy-ratio evaluation (\textsc{FORE}) instead recursively applies
the adjoint Bellman map and projects each image through a single-level
density-ratio objective \citep{van2026fitted}. Its guarantees require
realizability or approximation of the target occupancy ratio, rather than
richness of a separate critic class or adjoint Bellman completeness.

We build on \textsc{FORE} by repurposing its fitted recursion for post-hoc
calibration. Rather than fitting another high-dimensional ratio model, our
method restricts each iteration to a nondecreasing transformation of an
arbitrary initial estimate. This yields a model-agnostic calibration procedure
with no regularization or transformation-shape hyperparameter in its core form.
It provides exact empirical Bellman-balance identities together with
finite-sample guarantees for calibration and occupancy-ratio accuracy.

\paragraph{Calibration.}
Post-hoc calibration transforms an initial predictor to satisfy
self-consistency within strata of its fitted values. In supervised learning,
common methods include Platt scaling, histogram binning, and isotonic regression
\citep{platt1999probabilistic,zadrozny2001obtaining,zadrozny2002transforming,
niculescu2005predicting,guo2017calibration,
kuleshov2015calibrated,kuleshov2018accurate}. General theory for isotonic
calibration in regression and loss-based prediction is developed by
\citet{van2025generalized}. In causal inference, calibration methods have been
used to recalibrate propensity scores and inverse-probability weights, improving
the finite-sample performance of estimators of average treatment effects and
single-stage policy values
\citep{gutman2022propensity,deshpande2023calibrated,
ballinari2024improvingfinitesampleperformance,van2023causal,
van2025stabilized,van2024doubly,
rabenseifner2025calibrationstrategiesrobustcausal}. These methods concern
one-step importance weights; discounted occupancy ratios instead satisfy a
recursive adjoint Bellman equation. In the nondynamic special case
\(\gamma=0\), our method and guarantees recover
and extend the isotonic inverse-probability-weight calibration results of
\citet{van2025stabilized} from static treatment interventions to general
treatment policies. 

The closest related work in reinforcement learning develops Bellman calibration
for value prediction, using fitted \(Q\)-evaluation and histogram binning to
recalibrate value functions against one-step Bellman targets
\citep{van2025bellman}. We develop the adjoint analogue of Bellman calibration
for occupancy ratios. Specifically, we replace the Bellman operator with its
adjoint, use \textsc{FORE} rather than fitted \(Q\)-evaluation, and introduce
an isotonic procedure with finite-sample Bellman-balance and occupancy-ratio
accuracy guarantees. The proof of our calibration guarantees in
Section~\ref{sec:occupancy-calibration-error} builds on calibration arguments
developed for supervised losses in
\cite{van2023causal,whitehouse2024orthogonal,van2025generalized}.

\section{Discounted occupancy ratios and Bellman balance}

\subsection{Discounted MDPs and occupancy ratios}

We consider a discounted MDP
\((\mathcal S,\mathcal A,P,\gamma)\), where \(\mathcal S\) and
\(\mathcal A\) are measurable state and action spaces,
\(P(\cdot\mid s,a)\) is the state-transition kernel, and
\(\gamma\in[0,1)\) is the discount factor. Let
\(X=(S,A)\in\mathcal X:=\mathcal S\times\mathcal A\), and let \(\nu\)
denote the offline state--action sampling distribution. For a stationary
ergodic trajectory generated by a behavior policy, \(\nu\) may be taken as
the stationary distribution of the induced state--action process. When
multiple finite trajectories are pooled, \(\nu\) may instead be taken as the
time-averaged state--action distribution over the sampled time points.

For a target policy \(\pi\) and initial state distribution \(\rho_0\), define
the initial state--action distribution and the policy-induced transition
kernel on state--action pairs by
\[
  \mu_{0,\pi}(ds,da)=\rho_0(ds)\pi(da\mid s),
  \qquad
  P_\pi(dx'\mid x)=P(ds'\mid s,a)\pi(da'\mid s'),
\]
where \(x=(s,a)\) and \(x'=(s',a')\). For any integrable
\(g:\mathcal X\to\mathbb R\), write
\((P_\pi g)(x):=\int g(x')P_\pi(dx'\mid x)\).

The normalized discounted occupancy measure under \(\pi\) is
\[
  \mu_\pi
  =
  (1-\gamma)\sum_{t\geq 0}\gamma^t
  \mathcal L_\pi(X_t),
\]
where \(X_0\sim\mu_{0,\pi}\) and
\(X_{t+1}\mid X_t\sim P_\pi(\cdot\mid X_t)\). Assuming \(\mu_\pi\ll\nu\),
define the discounted occupancy ratio by
\(w_\pi(x):=(d\mu_\pi/d\nu)(x)\).
This ratio underlies marginalized importance weighting
\citep{liuEtAl2018InfiniteHorizonOPE,ueharaEtAl2020MWLMQL}. In particular, for
any integrable reward function \(r:\mathcal X\to\mathbb R\), the normalized
target-policy value satisfies
\[
  V_\pi(r)
  :=
  (1-\gamma)\mathbb E_\pi
  \left[
    \sum_{t\geq 0}\gamma^t r(S_t,A_t)
  \right]
  =
  \mathbb E_{\mu_\pi}\{r(X)\}
  =
  \mathbb E_{\nu}\{w_\pi(X)r(X)\}.
\]

\subsection{Adjoint Bellman equation and balance moments}

The target occupancy measure \(\mu_\pi\) satisfies the adjoint Bellman equation
\citep{ueharaEtAl2021FiniteSampleMinimax}
\begin{equation}
\label{eqn::adjoint}
  \mu_\pi
  =
  (1-\gamma)\mu_{0,\pi}
  +
  \gamma\,\mu_\pi P_\pi .
\end{equation}
For any weight function \(\omega\) such that
\((\omega\nu)P_\pi\ll\nu\), define the adjoint Bellman operator
\[
  \mathcal B_\pi^\star \omega
  :=
  (1-\gamma)\omega_0
  +
  \gamma
  \frac{d\{(\omega\nu)P_\pi\}}{d\nu},
  \qquad
  \omega_0
  :=
  \frac{d\mu_{0,\pi}}{d\nu}.
\]
The occupancy ratio \(w_\pi=d\mu_\pi/d\nu\) is then characterized by the
fixed-point equation
\begin{equation}
\label{eq:adjoint-bellman-fixed-point}
  w_\pi
  =
  \mathcal B_\pi^\star w_\pi .
\end{equation}

Although \(\mathcal B_\pi^\star\omega\) is generally unavailable pointwise,
its action against test functions can be evaluated using one-step transitions
\citep{van2026fitted}. For any measurable \(f\) for which the expectations
exist,
\begin{equation}
\label{eq:bellman-moment-update}
  \mathbb E_{\nu}
  \left[
    \{\mathcal B_\pi^\star\omega\}(X)f(X)
  \right]
  =
  (1-\gamma)\mathbb E_{\mu_{0,\pi}}\{f(X_0)\}
  +
  \gamma\mathbb E_{\nu}
  \left[
    \omega(X)f(X^+)
  \right],
\end{equation}
where \(X\sim\nu\), \(X^+\mid X\sim P_\pi(\cdot\mid X)\), and
\(X_0\sim\mu_{0,\pi}\). Evaluating this identity at the fixed point yields the
occupancy-balance moments
\citep{nachumEtAl2019DualDICE,ueharaEtAl2021FiniteSampleMinimax}
\begin{equation}
\label{eqn::momentident}
  \mathbb E_{\nu}
  \left[
    w_\pi(X)\{g(X)-\gamma g(X^+)\}
  \right]
  =
  (1-\gamma)\mathbb E_{\mu_{0,\pi}}\{g(X_0)\},
\end{equation}
for every measurable \(g:\mathcal X\to\mathbb R\) for which the expectations
exist.

\subsection{Occupancy-ratio estimation and residual imbalance}

A common approach to occupancy-ratio estimation is minimax occupancy balancing
\citep{liuEtAl2018InfiniteHorizonOPE,nachumEtAl2019DualDICE,
ueharaEtAl2020MWLMQL,ueharaEtAl2021FiniteSampleMinimax}. These methods estimate
the occupancy ratio by minimizing violations of
\eqref{eqn::momentident} over a critic class. Schematically, they solve
\begin{equation}
\label{eq:minimax}
  \widehat\omega
  \in
  \argmin_{\omega\in\mathcal W}
  \sup_{f\in\mathcal F}
  \left|
    (1-\gamma)\mathbb E_{\mu_{0,\pi}}\{f(X_0)\}
    -
    \mathbb E_{\nu}
    \left[
      \omega(X)\{f(X)-\gamma f(X^+)\}
    \right]
  \right|,
\end{equation}
using empirical expectations and suitable regularization in practice. The
critic class \(\mathcal F\) determines which violations of the adjoint Bellman
equation are detectable. Thus, converting a small minimax objective into
control of the adjoint Bellman residual generally requires \(\mathcal F\) to
contain witnesses for the residuals generated by candidate weights. A
representative completeness condition is
\[
  \left\{
    \mathcal B_\pi^\star\omega-\omega:
    \omega\in\mathcal W
  \right\}
  \subseteq c\mathcal F
\]
for some \(c>0\)
\citep{ueharaEtAl2021FiniteSampleMinimax}. Without sufficient critic richness,
a candidate weight may satisfy every tested moment while remaining imbalanced
in untested directions. Even when the population classes are sufficiently
rich, finite-sample error, regularization, and imperfect optimization may
leave residual imbalance.

Fitted occupancy-ratio evaluation (\textsc{FORE})
\citep{van2026fitted} instead estimates the ratio through an iterative fitted
fixed-point procedure in which each update approximates the composition of an
adjoint Bellman update with a KL projection onto the ratio class. This avoids both
a separately specified critic class and closure of the ratio class under
adjoint Bellman updates. Nevertheless, statistical error, early stopping, and
the absence of a canonical held-out loss for tuning or model selection may
leave the fitted ratio imperfectly balanced. These considerations motivate a
post-processing step that directly targets residual Bellman imbalance in an
initial occupancy-ratio estimate.

\section{Bellman Calibration for Marginalized Importance Weights}

\subsection{Population Bellman calibration}

An estimated occupancy ratio may retain useful predictive information even
when its fitted values exhibit systematic Bellman imbalance. This is common
for flexible machine-learning estimators whose predictions preserve relative
structure but are distorted in scale or shape
\citep{niculescu2005predicting,guo2017calibration,dwivedi2020stable}.
We therefore post-process an initial estimate
\(\widehat w_\pi\) through a flexible transformation
\(\theta:\mathbb R\to\mathbb R_+\), yielding
\(\bar w_\pi=\theta\circ\widehat w_\pi\).
Such transformations include familiar corrections such as rescaling and
clipping. More generally, our goal is to remove Bellman imbalance detectable
from the fitted values while retaining the information they contain about the
true occupancy ratio. Rather than refitting the underlying high-dimensional
predictor or restricting attention to prespecified corrections, we formalize
this objective by imposing the adjoint Bellman moment identities against test
functions of the calibrated weight itself.

We say that \(\bar w_\pi\) is \textit{perfectly calibrated for occupancy
balance} if, for every bounded measurable \(g:\mathbb R\to\mathbb R\),
\begin{equation}
\label{eqn:perfect-occupancy-calibration}
  \mathbb E_{\nu}
  \left[
    \bar w_\pi(X)
    \{g(\bar w_\pi(X))-\gamma g(\bar w_\pi(X^+))\}
  \right]
  =
  (1-\gamma)\mathbb E_{\mu_{0,\pi}}
  \{g(\bar w_\pi(X_0))\}.
\end{equation}
Equivalently, the adjoint Bellman moment identities in
\eqref{eqn::momentident} hold with \(w_\pi\) replaced by \(\bar w_\pi\) for
all test functions of the form \(g\circ\bar w_\pi\).

To characterize this condition directly through the fitted values, define the
calibration function of a candidate weight \(\omega\) by
\begin{equation}
\label{eq:population-calibration-function}
  \Gamma_\omega(t)
  :=
  \mathbb E_{\nu}
  \left[
    \{\mathcal B_\pi^\star\omega\}(X)
    \,\middle|\,
    \omega(X)=t
  \right].
\end{equation}
Equation~\eqref{eqn:perfect-occupancy-calibration} is equivalent to
\[
\begin{aligned}
  &\mathbb E_{\nu}
    \left[
      \{\mathcal B_\pi^\star\bar w_\pi-\bar w_\pi\}(X)
      g(\bar w_\pi(X))
    \right]=0
    &&\text{for every bounded measurable }g,\\
  &\Longleftrightarrow\quad
    \bar w_\pi=\Gamma_{\bar w_\pi}\circ\bar w_\pi
    &&\nu\text{-almost surely}.
\end{aligned}
\]
Thus, a perfectly calibrated weight satisfies the adjoint Bellman fixed-point
equation after conditioning on its own fitted values. Equivalently, among
state--action pairs assigned the same fitted occupancy ratio, the average
adjoint Bellman update equals that fitted ratio.

The true occupancy ratio \(w_\pi\) is automatically perfectly calibrated:
since \(w_\pi=\mathcal B_\pi^\star w_\pi\) \(\nu\)-almost surely, the law
of iterated expectations yields
\(\Gamma_{w_\pi}\circ w_\pi=w_\pi\) \(\nu\)-almost surely. Perfect
calibration is weaker than pointwise recovery of \(w_\pi\): it requires only
that the Bellman residual vanish on average conditional on the fitted value.
It therefore provides a natural one-dimensional self-consistency condition
for post-processing fitted weights. In particular, a large fitted weight must
be matched by a correspondingly large conditional mean adjoint Bellman update.

\subsection{Calibration, optimal post-processing, and accuracy}
\label{sec:calrefine}

We next show that calibration ensures that an estimated occupancy ratio
effectively uses the information its fitted values contain about the true
ratio. A calibration--refinement bound makes this precise by separating
Bellman calibration error from the residual error achievable by optimal
post-processing. The key implication is that a well-calibrated estimate is
accurate whenever its fitted values can be accurately transformed into the
true occupancy ratio.

We define the \(L^2\) calibration and refinement errors by
\begin{equation}
\label{eq:l2-calibration-refinement-errors}
  \mathrm{Cal}_2^2(\omega)
  :=
  \|
    \Gamma_\omega\circ\omega-\omega
  \|_{L^2(\nu)}^2,
  \qquad
  \mathrm{Ref}_2^2(\omega)
  :=
  \inf_a
  \|
    w_\pi-a\circ\omega
  \|_{L^2(\nu)}^2,
\end{equation}
where the infimum is over measurable transformations
\(a:\mathbb R\to\mathbb R\). When \(w_\pi\in L^2(\nu)\), the optimal
post-processing is \(a\circ\omega=\mathbb E_{\nu}(w_\pi\mid\omega)\).
The calibration error also admits the dual Bellman-balance representation
\begin{equation}
\label{eq:occupancy-calibration-dual}
  \mathrm{Cal}_2(\omega)
  =
  \sup_{\mathbb E_{\nu}[g\{\omega(X)\}^2]\leq 1}
  \left|
    \mathbb E_{\nu}
    \left[
      \omega(X)
      \{g(\omega(X))-\gamma g(\omega(X^+))\}
    \right]
    -
    (1-\gamma)\mathbb E_{\mu_{0,\pi}}
    \{g(\omega(X_0))\}
  \right|.
\end{equation}
Thus, \(\mathrm{Cal}_2(\omega)\) is the largest normalized Bellman imbalance
detectable from the fitted values. It vanishes if and only if \(\omega\) is
perfectly calibrated for occupancy balance.

\begin{theorem}[Calibration--refinement bound for occupancy ratios]
\label{thm:population-l2-calibration-improvement}
For every \(\omega\in L^2(\nu)\) such that
\(\mathcal B_\pi^\star\omega\in L^2(\nu)\) and
\(w_\pi\in L^2(\nu)\),
\[
  \|\omega-w_\pi\|_{L^1(\nu)}
  \leq
  \frac{
    \mathrm{Cal}_2(\omega)+\mathrm{Ref}_2(\omega)
  }{1-\gamma}.
\]
\end{theorem}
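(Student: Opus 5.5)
The plan is to write the error $\delta:=\omega-w_\pi$ as a sum of three pieces: a calibration piece, a refinement piece, and a middle piece that is a conditional expectation of a one-step adjoint-transported version of $\delta$ itself. Each piece is bounded in $L^1(\nu)$, and since the middle piece contracts by the factor $\gamma$, the result is a self-bounding inequality that we rearrange.

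\emph{Step 1 (decomposition).} Write $\bar w:=\mathbb E_\nu(w_\pi\mid\omega)$, which is the optimal post-processing because $w_\pi\in L^2(\nu)$. Then
\[
  \omega-w_\pi
  =
  \{\omega-\Gamma_\omega\circ\omega\}
  +
  \{\Gamma_\omega\circ\omega-\bar w\}
  +
  \{\bar w-w_\pi\}.
\]
By the triangle inequality and $\|\cdot\|_{L^1(\nu)}\le\|\cdot\|_{L^2(\nu)}$ under a probability measure, the first term has $L^1(\nu)$ norm at most $\mathrm{Cal}_2(\omega)$. The third term has $L^1(\nu)$ norm at most $\mathrm{Ref}_2(\omega)$. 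All quantities are finite because $\omega$, $w_\pi$ and $\mathcal B_\pi^\star\omega$ lie in $L^2(\nu)$.

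\emph{Step 2 (middle term).} Since $w_\pi=\mathcal B_\pi^\star w_\pi$ by \eqref{eq:adjoint-bellman-fixed-point}, we have
\[
  \Gamma_\omega\circ\omega-\bar w
  =
  \mathbb E_\nu\bigl[\mathcal B_\pi^\star\omega-\mathcal B_\pi^\star w_\pi\mid\omega\bigr].
\]
The $(1-\gamma)\omega_0$ parts cancel, so this equals $\gamma\,\mathbb E_\nu[K\delta\mid\omega]$, where $K\delta:=d\{(\delta\nu)P_\pi\}/d\nu$. The operator $K$ is well defined and linear on the relevant functions: the hypotheses guarantee $(\omega\nu)P_\pi\ll\nu$, and $(w_\pi\nu)P_\pi=\mu_\pi P_\pi\ll\nu$ by the adjoint equation \eqref{eqn::adjoint}. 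Conditional expectation is an $L^1$ contraction, so the middle term has $L^1(\nu)$ norm at most $\gamma\|K\delta\|_{L^1(\nu)}$.

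\emph{Step 3 (the key contraction).} I would show $\|K\delta\|_{L^1(\nu)}\le\|\delta\|_{L^1(\nu)}$. The left side is the total variation of the signed measure $(\delta\nu)P_\pi$, and pushing a signed measure through a Markov kernel does not increase total variation. Concretely, split $\delta=\delta^+-\delta^-$. Then $|K\delta|\le K\delta^++K\delta^-$, and $\int K\delta^\pm\,d\nu=\int\delta^\pm\,d\nu$ because $P_\pi(\mathcal X\mid x)=1$. This step is the main obstacle, and the care needed is measure-theoretic: the absolute continuity of $(\delta^\pm\nu)P_\pi$ with respect to $\nu$ must be justified, and I expect it to follow from domination by $(|\omega|+w_\pi)\nu$ together with the stated hypotheses. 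If that is delicate, the fallback is a duality argument. Write $\|K\delta\|_1=\sup_{|h|\le1}\mathbb E_\nu[\delta(X)h(X^+)]$ via \eqref{eq:bellman-moment-update}, and bound this supremum by $\|\delta\|_1$.

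\emph{Step 4 (conclusion).} Combining the three bounds gives
\[
  \|\delta\|_{L^1(\nu)}
  \le
  \mathrm{Cal}_2(\omega)+\gamma\|\delta\|_{L^1(\nu)}+\mathrm{Ref}_2(\omega).
\]
Since $\|\delta\|_{L^1(\nu)}<\infty$, rearranging yields the stated bound with the factor $(1-\gamma)^{-1}$.
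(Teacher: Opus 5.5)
Your argument is correct and is essentially the paper's proof. It uses the same three-term split: the calibration piece $\omega-\Gamma_\omega\circ\omega$, the middle piece $\mathbb E_\nu[\mathcal B_\pi^\star\omega-\mathcal B_\pi^\star w_\pi\mid\omega]$, and the refinement piece $\mathbb E_\nu(w_\pi\mid\omega)-w_\pi$. It bounds the middle piece the same way, by $L^1$-contraction of conditional expectation plus total-variation non-expansion of $P_\pi$, and it rearranges the same way.

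One small caution about Step 3. For signed $\omega$, the pieces $(\delta^\pm\nu)P_\pi$ need not be absolutely continuous with respect to $\nu$, so the domination argument you expect may fail. Rely instead on your duality fallback, or equivalently on $|\lambda P_\pi|(\mathcal X)\le|\lambda|(\mathcal X)$ for the signed measure $\lambda=\delta\nu$. That route needs only $\lambda P_\pi\ll\nu$, which the hypotheses supply.
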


The refinement error is the smallest \(L^2(\nu)\) discrepancy between the
true occupancy ratio and any transformation of the fitted weight
\(\omega(X)\). It therefore measures the best accuracy attainable by
post-processing that weight alone.
Theorem~\ref{thm:population-l2-calibration-improvement} bounds the error of
\(\omega\) by this optimal post-processing error plus its calibration error,
with the usual \((1-\gamma)^{-1}\) amplification of Bellman error in
discounted infinite-horizon problems
\citep{munosSzepesvari2008FVI}. Thus, a small calibration error guarantees
that \(\omega\) is accurate whenever some transformation of \(\omega(X)\) can
accurately recover \(w_\pi\). In particular, a perfectly calibrated weight
satisfies $\|\omega-w_\pi\|_{L^1(\nu)}
    \leq (1-\gamma)^{-1} \mathrm{Ref}_2(\omega).$ The \(L^1(\nu)\) error also has a direct functional interpretation:
\[
  \|\omega-w_\pi\|_{L^1(\nu)}
  =
  \sup_{\|g\|_\infty\leq 1}
  \left|
    \mathbb E_{\nu}\{\omega(X)g(X)\}
    -
    \mathbb E_{\mu_\pi}\{g(X)\}
  \right|.
\]
Consequently, the same bound controls the error in every bounded linear
functional of the target discounted occupancy distribution, including
expected rewards, costs, feature moments, and visitation probabilities.

This result is the occupancy-ratio analogue of classical
calibration--refinement decompositions for proper scoring rules
\citep{murphy1973new,degroot1983comparison,brocker2009reliability,
van2023causal}, and the adjoint counterpart of the Bellman
calibration--refinement decomposition for value functions in
\citet{van2025bellman}.

\section{Fitted isotonic Bellman calibration}
\label{sec:fittedalgo}

Motivated by the calibration--refinement bound, we develop a post-processing
procedure that improves Bellman calibration while preserving the ordering
induced by an initial occupancy-ratio estimate \(\widehat w_\pi\).  
The method applies \textsc{FORE} over the class of nondecreasing
transformations of \(\widehat w_\pi\), reducing each update to a
one-dimensional isotonic risk-minimization problem. Monotonicity preserves the
ranking induced by \(\widehat w_\pi\) while allowing its scale and shape to
adapt to the adjoint Bellman equation. The transformation class contains the
identity map and familiar restricted adjustments such as rescaling and
clipping, while also allowing flexible nonlinear corrections. As shown in
Section~\ref{sec:finite-sample-theory}, the resulting estimator achieves
Bellman calibration with a KL accuracy guarantee up to statistical error.

We separate estimation and calibration by sample splitting or cross-fitting,
following \citet{van2023causal}. The formal results below use an external calibration
sample independent of the training data. It contains one-step transitions
\(O_i=(S_i,A_i,S_i')\), \(i=1,\ldots,n\), with \((S_i,A_i)\sim\nu\) and
\(S_i'\mid S_i,A_i\sim P(\cdot\mid S_i,A_i)\). Given \(S_i'\), draw
\(A_i^+\sim\pi(\cdot\mid S_i')\), and define
\(X_i=(S_i,A_i)\) and \(X_i^+=(S_i',A_i^+)\).  We also observe an independent
sample \(X_{0,j}\sim\mu_{0,\pi}\), \(j=1,\ldots,m\), from the target initial
state--action distribution. In the empirical implementation, the fitting data
are instead reused through cross-calibration: estimate \(\widehat w_\pi\) by
cross-fitting
\citep{zheng2011cross,chernozhukov2018double}, as is standard in double
reinforcement learning
\citep{kallusUehara2020DoubleRL,kallusUehara2022BreakingHorizonDRL}, and
calibrate the pooled out-of-fold predictions
\citep{van2023causal,van2025stabilized}. The finite-sample theorem below covers
the external-sample implementation; Section~\ref{sec:experiments} evaluates
the pooled cross-fitted estimator on independent evaluation trajectories.

We now define the \textit{isotonic Bellman calibration} procedure. Let
\(\mathcal F_{\rm iso}\) denote the class of nondecreasing functions
\(h:\mathbb R\to\mathbb R\), and let \(\mathcal F_{{\rm iso},n}\) be the
empirical subclass of nondecreasing step functions determined by their values
at the fitted-value design points
\(\{\widehat w_\pi(X_i)\}_{i=1}^n\) and extended constantly between and beyond
these points. Algorithm~\ref{alg:isotonic-bellman-calibration} repeatedly fits
a transformation in \(\mathcal F_{{\rm iso},n}\) to the current
occupancy-ratio estimate and normalizes the resulting weights. As in classical
isotonic calibration
\citep{zadrozny2002transforming,niculescu2005predicting}, each update is a
convex one-dimensional problem that, after a one-time sorting step, can be
solved in linear time using a generalized pooled adjacent violators algorithm
\citep{barlow1972isotonic,best2000minimizing,de2010isotone}. The procedure
terminates after \(K_{\max}\) iterations or when the relative empirical
\(L^2\) change falls below \(\varepsilon\), where
\(\|f\|_{n,2}^2:=n^{-1}\sum_{i=1}^n f(X_i)^2\). In practice,
\(K_{\max}\) may be chosen large and
\(\varepsilon\) set to machine precision or a statistical tolerance;
the geometric convergence of the underlying \textsc{FORE} recursion implies
that \(O(\log n)\) iterations suffice to reduce the iteration error below any
polynomial statistical tolerance \citep{van2026fitted}. Computational details
are given in Appendix~\ref{app:pava}.

\begin{algorithm}[t]
\caption{Isotonic Bellman calibration for marginalized importance weighting}
\label{alg:isotonic-bellman-calibration}
\begin{algorithmic}[1]
\Require Initial estimate \(\widehat w_\pi\), samples
\(\{(X_i,X_i^+)\}_{i=1}^n\) and \(\{X_{0,j}\}_{j=1}^m\), tolerance
\(\varepsilon>0\), and \(K_{\max}\).
\State Initialize
\(\widehat\omega^{(0)}(x)\gets1\) and \(K\gets K_{\max}\).
\For{\(k=0,\ldots,K_{\max}-1\)}
    \State\label{algline:isotonic-update} Compute
    \[
    \begin{aligned}
    \widehat h_{k+1}
    &\in
    \argmin_{h\in\mathcal F_{{\rm iso},n}}
    \Biggl\{
        \log\left[
            \frac{1}{n}\sum_{i=1}^n
            \exp\{h(\widehat w_\pi(X_i))\}
        \right]
        -(1-\gamma)\frac{1}{m}\sum_{j=1}^m
        h\{\widehat w_\pi(X_{0,j})\} \\
    &\qquad
        -\gamma\frac{1}{n}\sum_{i=1}^n
        \widehat\omega^{(k)}(X_i)
        h\{\widehat w_\pi(X_i^+)\}
    \Biggr\}.
    \end{aligned}
    \]
    \State Set
    \[
        \widehat\omega^{(k+1)}(x)
        \gets
        \frac{
            \exp[\widehat h_{k+1}\{\widehat w_\pi(x)\}]
        }{
            n^{-1}\sum_{i=1}^n
            \exp[\widehat h_{k+1}\{\widehat w_\pi(X_i)\}]
        }.
    \]
    \State \textbf{If }
    \(\|\widehat\omega^{(k+1)}-\widehat\omega^{(k)}\|_{n,2}
    \leq
    \varepsilon\|\widehat\omega^{(k)}\|_{n,2}\),
    set \(K\gets k+1\) and \textbf{break}.
\EndFor
\State \Return
\(\widehat{\bar w}_\pi\gets\widehat\omega^{(K)}\).
\end{algorithmic}
\end{algorithm}

\section{Finite-sample guarantees for isotonic Bellman calibration}
\label{sec:finite-sample-theory}

\subsection{Bellman calibration error}
\label{sec:occupancy-calibration-error}

In this subsection, we derive a finite-sample bound for the calibration error
in the calibration--refinement decomposition of
Section~\ref{sec:calrefine}. The result shows that the exact empirical balance
enforced by the isotonic update translates into population calibration under
weak conditions. In Section~\ref{sec:isotonic-kl-regret}, we complement this
result with an oracle inequality showing that enforcing calibration preserves,
and may improve, occupancy-ratio accuracy up to statistical error.

Throughout
this section, following \cite{chen2019information}, we assume that the transition and initial-state samples are
i.i.d. and mutually independent. The analysis could be extended to dependent
trajectory data under suitable mixing conditions by replacing the i.i.d.
concentration arguments with their mixing-sequence analogues.

The argument begins with an exact empirical balance identity implied by the
isotonic subproblem. For any bounded \(g:\mathbb R\to\mathbb R\), consider the
exponentially tilted density ratio
\[
    \omega_\varepsilon(x)
    =
    \frac{
        \widehat\omega^{(k+1)}(x)
        \exp\!\left[
            \varepsilon
            g\{\widehat\omega^{(k+1)}(x)\}
        \right]
    }{
        P_n\!\left[
            \widehat\omega^{(k+1)}
            \exp\!\left\{
                \varepsilon
                g(\widehat\omega^{(k+1)})
            \right\}
        \right]
    }.
\]
For all sufficiently small positive and negative \(\varepsilon\), this
perturbation preserves normalization and isotonicity across the fitted-value
design points, as well as any fitted zeros. It therefore defines a two-sided
feasible path through the optimizer. Differentiating the empirical objective
along this path and applying first-order optimality at \(\varepsilon=0\)
yields
\begin{equation}
\label{eqn:empirical-calibration-kkt}
  \frac{1}{n}\sum_{i=1}^n
  \left[
    \widehat\omega^{(k+1)}(X_i)
    g\{\widehat\omega^{(k+1)}(X_i)\}
    -
    \gamma\widehat\omega^{(k)}(X_i)
    g\{\widehat\omega^{(k+1)}(X_i^+)\}
  \right]
  =
  (1-\gamma)\frac{1}{m}\sum_{j=1}^m
  g\{\widehat\omega^{(k+1)}(X_{0,j})\}.
\end{equation}
Thus, every isotonic update satisfies an exact empirical occupancy-balance
condition. When successive iterates are close,
\(\widehat\omega^{(k+1)}\approx\widehat\omega^{(k)}\), this identity becomes
the empirical analogue of perfect occupancy calibration in
\eqref{eqn:perfect-occupancy-calibration}.

We next combine \eqref{eqn:empirical-calibration-kkt} with uniform
concentration over functions of the fitted weights to obtain a finite-sample
bound for the population calibration error. The analysis relies on the
following regularity conditions. Throughout, let \(N=n\wedge m\) and
\(M_N:=1\vee A_{\rm env}\log(eN)\), where \(A_{\rm env}>0\) is fixed. For
each \(N\), define
\[
  \mathcal W_N
  :=
  \left\{
    f\circ\widehat w_\pi:
    f\ \text{is nondecreasing},\quad
    0\le f\circ\widehat w_\pi\le M_N,\quad
    \frac12
    \le
    \mathbb E_{\nu}(f\circ\widehat w_\pi)
    \le
    2
  \right\}.
\]

\begin{enumerate}[label=\textbf{A\arabic*}, ref={A\arabic*}, leftmargin=1.5em, series=cond]

\item \label{cond:iso-coverage}
\textit{Subexponential initial coverage and one-step smoothing.}
There are constants \(0<K_0,K_+<\infty\), independent of \(N\), such that
\[
  \left\|\frac{d\mu_{0,\pi}}{d\nu}\right\|_{\psi_1}\le K_0,
  \qquad
  \sup_{N\ge1}\sup_{\omega\in\mathcal W_N}
  \left\|
    \frac{d\{(\omega\nu)P_\pi\}}{d\nu}
  \right\|_{\psi_1}
  \le K_+,
\]
where
\(\|Z\|_{\psi_1}:=\inf\{s>0:E_{\nu}\exp(|Z|/s)\le2\}\).

\item \label{cond:iso-bounded-iterates}
\textit{Boundedness.}
It holds almost surely that
\(0\le \widehat\omega^{(k)}(x)\le M_N\) for \(0\le k\le K\)
for \(\nu\)-almost every \(x\).

\item \label{cond:bv-residual}
\textit{Finite variation.}
There is a finite constant \(V_{\rm cal}\), independent of \(N\), such that,
almost surely, the truncated calibration function
\[
  t
  \mapsto
  \mathbb E_{\nu}\!\left[
    \{\mathcal B_\pi^\star\widehat\omega^{(K)}\}(X)
    \mid \widehat\omega^{(K)}(X)=t
  \right]\wedge M_N
\]
has total variation at most \(V_{\rm cal}\vee M_N\).
\end{enumerate}
Condition~\ref{cond:iso-coverage} holds, for example, when the initial density
ratio and the transition density relative to \(\nu\) are uniformly bounded,
and more generally permits unbounded induced densities with uniformly
exponential tails; see
Lemma~\ref{lem:calibration-bounded-kernel-smoothing}.
Condition~\ref{cond:iso-bounded-iterates} requires the isotonic solutions to
grow at most logarithmically and holds automatically under uniform boundedness.
Finally, Condition~\ref{cond:bv-residual} is standard in isotonic calibration
\citep{van2023causal,van2025stabilized,van2024doubly,van2025bellman} and rules
out highly irregular truncated calibration functions. It holds, for example,
when the function is uniformly bounded and monotone, or when it has bounded
support and is piecewise Lipschitz with finitely many jumps.

\begin{theorem}[Finite-sample Bellman calibration]
\label{thm:isotonic-calibration-error}
Assume Conditions~\ref{cond:iso-coverage}--\ref{cond:bv-residual}. Let
\(K\ge1\), put \(K_{\rm sm}=(1-\gamma)K_0+\gamma K_+\), and suppose that
\(A_{\rm env}>2K_{\rm sm}/3\).
For each sufficiently large \(N=n\wedge m\), conditional on the training data
used to fit \(\widehat w_\pi\), the following bound holds with probability at
least \(1-\delta\) over the calibration sample:
\[
  \mathrm{Cal}_2(\widehat\omega^{(K)})
  \le
  \kappa_{{\rm cal},N}\sqrt{\log(eN)}
  \left\{
     N ^{-1/3}
    +
    \sqrt{\frac{\log(1/\delta)}{N}}
    +
    \|\widehat\omega^{(K)}
      -
      \widehat\omega^{(K-1)}\|_{n,2}
      \right\},
\]
where
\(\|f\|_{n,2}^2=n^{-1}\sum_{i=1}^n f(X_i)^2\).
The constant may be chosen so that $ \kappa_{{\rm cal},N}
  \le
  C
  \{1+K_0+K_++V_{\rm cal}+M_N\}^{8},$ where \(C<\infty\) is universal.
\end{theorem}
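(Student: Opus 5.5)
We combine the exact empirical balance identity \eqref{eqn:empirical-calibration-kkt} at the final iteration \(k=K-1\) with the dual representation \eqref{eq:occupancy-calibration-dual}. Write \(\omega=\widehat\omega^{(K)}\) and \(\omega'=\widehat\omega^{(K-1)}\). Define the population functional and its empirical analogue
\[
  \Phi(\omega,\omega';g):=\mathbb E_\nu\bigl[\omega(X)g(\omega(X))-\gamma\omega'(X)g(\omega(X^+))\bigr]-(1-\gamma)\mathbb E_{\mu_{0,\pi}}g(\omega(X_0)),
\]
with \(\Phi_n\) obtained by replacing expectations by empirical means over the calibration sample. By \eqref{eq:bellman-moment-update}, \(\Phi(\omega,\omega;g)=\mathbb E_\nu[(\omega-\mathcal B_\pi^\star\omega)g\circ\omega]\). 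The choice \(g^\star=(\omega-\Gamma_\omega\circ\omega)/\mathrm{Cal}_2(\omega)\) attains the supremum in \eqref{eq:occupancy-calibration-dual}.

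\emph{Truncation and the test-function class.} The maximizer \(g^\star\) need not be of bounded variation. We therefore work with its truncated version. Let \(\tilde\Gamma:=\Gamma_\omega\wedge M_N\). Since \(\omega\le M_N\) by \ref{cond:iso-bounded-iterates}, we have \(|\omega-\tilde\Gamma\circ\omega|\le|\omega-\Gamma_\omega\circ\omega|\). Moreover, \(\mathrm{Cal}_2^2(\omega)\) exceeds \(\mathbb E[(\omega-\Gamma_\omega\circ\omega)(\omega-\tilde\Gamma\circ\omega)]\) only through the region \(\{\Gamma_\omega\circ\omega>M_N\}\). That region contributes an error controlled by the subexponential tails in \ref{cond:iso-coverage}: \(\mathcal B_\pi^\star\omega\) has \(\psi_1\)-norm at most \(K_{\rm sm}\) because \(\omega\in\mathcal W_N\). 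The tail term is then of order \(e^{-M_N/K_{\rm sm}}\)-type, which is polynomially small, about \(N^{-A_{\rm env}/K_{\rm sm}}\). The assumption \(A_{\rm env}>2K_{\rm sm}/3\) is exactly what makes this \(o(N^{-2/3})\), i.e. below the squared target rate. So it suffices to bound \(\Phi(\omega,\omega;\tilde g)\) with \(\tilde g:=\omega\mapsto t-\tilde\Gamma(t)\) restricted to \([0,M_N]\). By \ref{cond:bv-residual}, this \(\tilde g\) lies in the class \(\mathcal G_N\) of functions on \([0,M_N]\) with sup-norm and total variation at most \(O(M_N+V_{\rm cal})\).

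\emph{Decomposition.} We write
\[
  \Phi(\omega,\omega;\tilde g)=\underbrace{\Phi(\omega,\omega';\tilde g)-\Phi_n(\omega,\omega';\tilde g)}_{\text{empirical process}}+\underbrace{\Phi_n(\omega,\omega';\tilde g)}_{=0\text{ by \eqref{eqn:empirical-calibration-kkt}}}+\underbrace{\gamma\,\mathbb E_\nu[(\omega'-\omega)(X)\tilde g(\omega(X^+))]}_{\text{iteration gap}}.
\]
\emph{Iteration gap.} We bound this term by \(\|\tilde g\|_\infty\|\omega-\omega'\|_{L^1(\nu)}\). We then replace the \(L^1(\nu)\) norm by the empirical \(\|\cdot\|_{n,2}\) norm plus a uniform deviation term. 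This is legitimate because \(\omega-\omega'\) is a difference of two bounded monotone functions of \(\widehat w_\pi\), with the normalization giving membership in \(\mathcal W_N\) with high probability.

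\emph{Empirical process (main obstacle).} We need a sup over \(\omega,\omega'\in\mathcal W_N\) and \(g\in\mathcal G_N\) of the three empirical-process terms. The function classes involved are:
\begin{itemize}
\item \(x\mapsto f(\widehat w_\pi(x))\,g(f(\widehat w_\pi(x)))\), which is a bounded-variation function of the scalar \(\widehat w_\pi(x)\);
\item \((x,x^+)\mapsto\omega'(x)g(\omega(x^+))\);
\item \(x_0\mapsto g(\omega(x_0))\).
\end{itemize}
Since \(\widehat w_\pi\) is fixed given the training data, compositions of monotone and BV functions of one scalar have bracketing entropy \(\log N_{[]}(\epsilon)\lesssim (M_N+V_{\rm cal})/\epsilon\). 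This holds uniformly in the distribution, using the scalar push-forward laws of \(\widehat w_\pi(X)\), \(\widehat w_\pi(X^+)\) and \(\widehat w_\pi(X_0)\). Products of such classes retain this entropy up to constants.

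The difficulty is that the products are unbounded in the envelope sense only through \(M_N\), and that the \(X^+\) and \(X_0\) terms have densities relative to \(\nu\) that are only subexponential. We handle this with a Bernstein-type bracketing bound using the \(\psi_1\) norms from \ref{cond:iso-coverage} (for instance, van der Vaart--Wellner Theorem 2.14.2 or a Bernstein-norm version). Combined with the variance bound \(\|\tilde g\circ\omega\|_{L^2}\lesssim\mathrm{Cal}_2\), this yields a localized rate. Solving the resulting fixed-point inequality
\[
  \mathrm{Cal}_2^2\lesssim \mathrm{Cal}_2\cdot\bigl(\text{entropy integral}/\sqrt N\bigr)
\]
with integral \(\int_0^{r}\sqrt{1/\epsilon}\,d\epsilon\sim\sqrt r\) gives \(\mathrm{Cal}_2\lesssim N^{-1/3}\). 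Talagrand/Bousquet concentration supplies the \(\sqrt{\log(1/\delta)/N}\) term. The \(\sqrt{\log(eN)}\) factor and the polynomial dependence on \(M_N\), \(K_0\), \(K_+\), \(V_{\rm cal}\) come from the envelopes.

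Finally, we divide the resulting inequality by \(\mathrm{Cal}_2(\omega)\), which is the dual normalization, and collect the constants into \(\kappa_{{\rm cal},N}\).
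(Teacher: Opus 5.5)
\textbf{Gap: the iteration-gap step.} Your architecture matches the paper's. You truncate the residual at $M_N$ and test the exact empirical balance identity against $\tilde g(t)=t-\Gamma_\omega(t)\wedge M_N$. You control the empirical-process terms by localized $1/\epsilon$-entropy bounds for bounded-variation functions of $\widehat w_\pi$, and you absorb the tail using $A_{\rm env}>2K_{\rm sm}/3$. The iteration-gap step, however, does not give the stated bound.

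Write $e_T:=\omega-\tilde\Gamma\circ\omega$ and $a:=\|e_T\|_{L^2(\nu)}$. You bound $\gamma\,\mathbb E_\nu[(\omega'-\omega)(X)\tilde g\{\omega(X^+)\}]$ by $\|\tilde g\|_\infty\|\omega-\omega'\|_{L^1(\nu)}$. This decouples the term from the residual, because $\|\tilde g\|_\infty$ can be of order $M_N$ and does not shrink with $\mathrm{Cal}_2(\omega)$. Your inequality then reads $a^2\lesssim\sqrt{\log(eN)}(\rho_Na+\rho_N^2)+M_N\|\omega-\omega'\|_{L^1(\nu)}$, with $\rho_N=N^{-1/3}+\sqrt{\log(1/\delta)/N}$. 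Solving it gives only $a\lesssim\sqrt{\log(eN)}\,\rho_N+\{M_N\|\omega-\omega'\|_{L^1(\nu)}\}^{1/2}$.
\begin{itemize}
\item This is a square-root, not linear, dependence on the iteration error. An iteration error of order $N^{-1/3}$ would yield only $N^{-1/6}$.
\item Dividing by $\mathrm{Cal}_2(\omega)$ at the end cannot repair this, since the term carries no factor of $\mathrm{Cal}_2(\omega)$.
\item A plain uniform deviation of order $N^{-1/2}$, incurred when you convert $L^1(\nu)$ to $\|\cdot\|_{n,2}$, would also enter under the square root and cap the rate at $N^{-1/4}$.
\end{itemize}

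\textbf{Missing idea: Cauchy--Schwarz plus a change of measure to the successor law.} The paper keeps $\omega$, not $\omega'$, in the transition empirical-process term. The remainder is then the empirical quantity $\gamma Q_n\{(\omega-\omega')(X)e_T(X^+)\}$, and Cauchy--Schwarz on the transition sample gives $|Q_n\{(\omega-\omega')(X)e_T(X^+)\}|\le\|\omega-\omega'\|_{n,2}\{Q_ne_T(X^+)^2\}^{1/2}$.

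The paper then shows $Q_ne_T(X^+)^2\lesssim\log(eN)(a+\rho_N)^2$. This uses the localized bound for the class $\{b(X^+)^2\}$ together with the transfer $Q_{\nu,\pi}\{r(X^+)^2\}\le t\|r\|_{L^2(\nu)}^2+2K_+\|r\|_\infty^2e^{-t/K_+}$ at $t\asymp\log(eN)$. Condition A1 supplies this transfer because $1\in\mathcal W_N$.

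The iteration term therefore becomes $\sqrt{\log(eN)}\,\Delta_K(a+\rho_N)$, with $\Delta_K=\|\omega-\omega'\|_{n,2}$. The master inequality becomes $a^2\lesssim\sqrt{\log(eN)}(\rho_N+\Delta_K)(a+\rho_N)$, and solving it gives the stated linear dependence on $\Delta_K$.

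Your population-level split can be repaired the same way, using Cauchy--Schwarz in $L^2(Q_{\nu,\pi})$. The $O(M_N\rho_N)$ cost of converting $\|\omega-\omega'\|_{L^2(\nu)}$ to $\|\omega-\omega'\|_{n,2}$ then multiplies $(a+\rho_N)$ and is harmless.

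\textbf{Secondary point.} The same norm transfer, including its initial-law version with $K_0$, is what localizes your $X_0$- and $X^+$-terms at $\|e_T\|_{L^2(\nu)}$. It is also the source of the leading $\sqrt{\log(eN)}$ factor. The functions in those classes are bounded by $O(M_N)$, so Condition A1 enters only through this change of measure. Your Bernstein/$\psi_1$ sketch leaves this step implicit.
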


The first two terms in Theorem~\ref{thm:isotonic-calibration-error} capture
statistical calibration error, while the final term, $ \bigl\|
        \widehat\omega^{(K)}
        -
        \widehat\omega^{(K-1)}
    \bigr\|_{n,2},$ captures the remaining iteration error. This term is directly observable and
therefore provides a computable stopping criterion. It arises because the
transition term in \eqref{eqn:empirical-calibration-kkt} uses
\(\widehat\omega^{(K-1)}\), whereas the remaining terms use
\(\widehat\omega^{(K)}\).

Once the difference between successive iterates is no larger than the
statistical error, the fitted weight achieves \(L^2\) calibration error of
order \((n\wedge m)^{-1/3}\), up to logarithmic terms and
high-probability factors. Equivalently, its squared calibration error is of
order \((n\wedge m)^{-2/3}\), matching the classical squared-error rate for
isotonic regression and calibration
\citep{groeneboom2014nonparametric,chatterjee2015risk,van2023causal}.
Combined with the calibration--refinement bound in
Section~\ref{sec:calrefine}, this implies that the \(L^1(\nu)\)
occupancy-ratio error is controlled by the best \(L^2(\nu)\) approximation
of the true ratio by a transformation of the calibrated weights, plus a
calibration contribution of order \((n\wedge m)^{-1/3}\), all scaled by
\((1-\gamma)^{-1}\).

\subsection{KL regret and accuracy preservation}
\label{sec:isotonic-kl-regret}

The preceding results establish that isotonic Bellman calibration achieves
small Bellman calibration error. We now derive a generalized-KL oracle
inequality relative to the best monotone transformation of the initial
estimate. Generalized KL divergence controls \(L^1(\nu)\) error and, for
bounded rewards, policy-value error; the argument uses the KL contraction of
\textsc{FORE} \citep{van2026fitted}.

The KL analysis requires fitted ratios to remain strictly positive because the
logarithmic loss is singular at zero \citep{van2026fitted}. Unconstrained
isotonic risk minimization, however, may assign zero weight to the leftmost
fitted block, a boundary phenomenon at the lower end of the observed
fitted-value range
\citep{barlow1972isotonic,dai2020bias,lim2025estimator}. Under standard
isotonic regularity conditions, this block occupies a shrinking region of
order \(N^{-1/3}\)
\citep{isotonicKnotPoints,IsotonicNumberJumps}. Although zero weights are
admissible for reward reweighting, they lead to infinite KL divergence.

For the KL analysis, we therefore consider a strictly positive,
constrained variant of
Algorithm~\ref{alg:isotonic-bellman-calibration}. Each update is computed over
the same empirical isotonic step-function class, but with the fitted
transformation constrained to \([\varepsilon_{\rm b},T_N]\), or equivalently
its logarithm constrained to
\([\log\varepsilon_{\rm b},\log T_N]\), where
\(T_N=1\vee A_{\rm env}\log(eN)\) and
\(0<\varepsilon_{\rm b}\le1\) is a fixed user-chosen lower bound. Denote
the resulting iterates by \(\widehat\omega_{\rm KL}^{(k)}\). When the
constraints are inactive, the constrained and unconstrained updates coincide.
With a small lower bound and the growing upper bound \(T_N\), the constraints
otherwise affect only boundary behavior. We therefore use the constrained
variant for the KL analysis but recommend the unconstrained procedure in
practice.

For measurable \(f\geq 0\) and \(g>0\), define the generalized KL divergence
\begin{equation}
\label{eq:generalized-kl-calibration-refinement}
  D_{\nu}^{\rm KL}(f\|g)
  :=
  \mathbb E_{\nu}
  \left[
    f(X)\log\frac{f(X)}{g(X)}-f(X)+g(X)
  \right].
\end{equation}
This reduces to the usual KL divergence when both \(f\) and \(g\) integrate to
one under \(\nu\). Define the following class of normalized ratios generated
by bounded transformations in the empirical isotonic step class:
\[
  \mathcal W_{{\rm b},N}
  =
  \left\{
    \frac{f\{\widehat w_\pi(\cdot)\}}
         {\mathbb E_{\nu}f\{\widehat w_\pi(X)\}}:
    f\in\mathcal F_{{\rm iso},n},\
    \varepsilon_{\rm b}\le f\circ\widehat w_\pi\le T_N
  \right\}.
\]
Define the corresponding KL approximation error by
\[
  \varepsilon_{{\rm iso},N}
  :=
  \inf_{v\in\mathcal W_{{\rm b},N}}
  D_{\nu}^{\rm KL}(v\|w_\pi).
\]

\begin{enumerate}[label=\textbf{A\arabic*}, ref={A\arabic*},
                  leftmargin=1.5em, resume=cond]
\item \label{cond:kl-target}
\textit{Soft lower-tail margin.}
The target occupancy ratio is positive \(\nu\)-almost surely, and there exist
constants \(A<\infty\) and \(\alpha>0\) such that, for every \(t\in(0,1]\),
\[
\nu\{x:0<w_\pi(x)\le t\}\le A t^\alpha.
\]
\end{enumerate}
Condition~\ref{cond:kl-target} is the soft-margin condition used in analyses
of fitted \textsc{FORE} \citep{van2026fitted}. It permits \(w_\pi\) to
approach zero provided its lower tail has polynomially small \(\nu\)-mass,
and contributes only a logarithmic factor to the statistical error. The
adjoint Bellman equation gives
\(w_\pi\ge(1-\gamma)\omega_0\) \(\nu\)-almost everywhere, so if
\(\operatorname*{ess\,inf}_{x\sim\nu}\omega_0(x)\ge c_0>0\), then
\(w_\pi\ge m_\star:=(1-\gamma)c_0\) and
Condition~\ref{cond:kl-target} holds for any finite \(\alpha>0\) with
\(A=m_\star^{-\alpha}\). In particular, if \(\nu=\mu_{0,\pi}\), then
\(m_\star=1-\gamma\).

\begin{theorem}[Isotonic \textsc{FORE} calibration KL regret]
\label{thm:isotonic-kl-regret}
Assume Conditions~\ref{cond:iso-coverage} and~\ref{cond:kl-target}, with the
smoothing supremum in Condition~\ref{cond:iso-coverage} extended to
\(\mathcal W_N\) and to the normalized ratios induced by all
\(f\in\mathcal F_{\rm iso}\) satisfying
\(\varepsilon_{\rm b}\le f\circ\widehat w_\pi\le T_N\). Conditional on the
training data used to fit \(\widehat w_\pi\), with probability at least
\(1-\delta\) over the calibration sample,
\[
\begin{aligned}
  D_{\nu}^{\rm KL}(\widehat\omega_{\rm KL}^{(K)}\|w_\pi)
  \le{}&
  C_N\left(\frac{1+\gamma}{2}\right)^K
  D_{\nu}^{\rm KL}(\widehat\omega_{\rm KL}^{(0)}\|w_\pi) \\
  &+\frac{C_N}{1-\gamma}\varepsilon_{{\rm iso},N}
  +\frac{C_N}{(1-\gamma)^2}\log^2(eN)
  \left\{N^{-2/3}+\frac{\log(1/\delta)}{N}\right\},
\end{aligned}
\]
where \(N=n\wedge m\). For universal finite exponents \(p,q\) and a finite
constant \(C_0=C_0(A,\alpha)\), the envelope factor may be chosen so that
\[
  C_N
  \le
  C_0(1+K_0+K_+)^q
  \left\{1+(T_N/\varepsilon_{\rm b})^2\right\}^{p}.
\]
\end{theorem}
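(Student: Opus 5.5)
We follow the fitted fixed-point analysis of \textsc{FORE} \citep{van2026fitted} with the working class $\mathcal W_{{\rm b},N}$. The plan is to compare each constrained isotonic update with an oracle update $\omega^{\star}_{k+1}$. We define $\omega^{\star}_{k+1}$ as the population KL projection of $\mathcal B_\pi^\star\widehat\omega^{(k)}_{\rm KL}$ onto the same class, namely the minimizer over $f$ of the population analogue of the objective in line~\ref{algline:isotonic-update}:
\[
\log \mathbb E_\nu e^{h}-\mathbb E_\nu[(\mathcal B_\pi^\star\widehat\omega^{(k)}_{\rm KL})\,h],
\]
where \eqref{eq:bellman-moment-update} rewrites the linear term as the initial and transition terms.

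\textbf{One-step perturbed contraction.} The first step is a deterministic inequality of the form
\[
D^{\rm KL}_\nu(\widehat\omega^{(k+1)}_{\rm KL}\|w_\pi)\le \tfrac{1+\gamma}{2}\,D^{\rm KL}_\nu(\widehat\omega^{(k)}_{\rm KL}\|w_\pi)+C_N\,\varepsilon_{{\rm iso},N}+C_N\,\Delta_{k+1}.
\]
Here $\Delta_{k+1}$ is the excess population risk of the empirical minimizer relative to the oracle. To get it, we use three facts.
\begin{enumerate}[leftmargin=1.5em]
\item By joint convexity of KL and the mixture form of $\mathcal B_\pi^\star$ (a mixture of $\omega_0$ and a Markov push-forward), the operator contracts: $D(\mathcal B_\pi^\star\omega\|w_\pi)\le\gamma D(\omega\|w_\pi)$.
\item The population risk minus its value at the target equals $D(v\|\mathcal B_\pi^\star\omega)$ up to normalization. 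Because the class is not convex in the KL geometry, we compare with the best approximant $v^\dagger$ of $w_\pi$ through a three-point inequality. This costs a factor that depends on $T_N/\varepsilon_{\rm b}$, since log-ratios are bounded by $\log(T_N/\varepsilon_{\rm b})$. That factor enters $C_N$.
\item The contraction factor is relaxed from $\gamma$ to $(1+\gamma)/2$ so that the cross term can be absorbed with a Young inequality.
\end{enumerate}
Unrolling this recursion over $k$ gives the first term of the bound. Summing the geometric series gives the factors $(1-\gamma)^{-1}$ on $\varepsilon_{{\rm iso},N}$ and on the statistical term. The extra $(1-\gamma)^{-1}$ on the statistical term comes from converting the risk-scale error into KL through the $\gamma$-weighted transition component.

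\textbf{Uniform statistical control.} The second step, which we expect to be the main obstacle, is to show that with probability $1-\delta$, simultaneously for all $k$,
\[
\Delta_{k+1}\lesssim \frac{\log^2(eN)}{1-\gamma}\Bigl\{N^{-2/3}+\frac{\log(1/\delta)}{N}\Bigr\}.
\]
The difficulty is that $\widehat\omega^{(k)}_{\rm KL}$ is data dependent and enters the transition term. We handle this with a localized empirical-process bound that is uniform over the product class of pairs $(\omega,h)$, where $\omega$ ranges over the normalized bounded monotone ratios and $h$ over bounded monotone log-transforms. The inputs are as follows.
\begin{enumerate}[leftmargin=1.5em]
\item Bounded monotone functions of $\widehat w_\pi$ have bracketing entropy of order $\epsilon^{-1}$ for the log-transforms, which are bounded by $\log(T_N/\varepsilon_{\rm b})$. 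The ratios are bounded by $T_N/\varepsilon_{\rm b}$.
\item The product $\omega(X)h(X^+)$ has entropy controlled by that of each factor.
\item Condition~\ref{cond:iso-coverage} gives subexponential envelopes for the initial-state term and the one-step smoothed term, so a Bernstein-type chaining inequality applies.
\end{enumerate}
We localize using the fact that the loss curvature, which is the log-partition term, is strongly convex on bounded log-ratios with modulus of order $(T_N/\varepsilon_{\rm b})^{-1}$. A fixed-point argument for the local modulus with entropy exponent $1$ then gives the $N^{-2/3}$ rate. The $\log^2$ factor comes from the envelope $M_N$ and the subexponential truncation. Condition~\ref{cond:kl-target} is used to control the $w_\pi$-dependent terms, $\mathbb E_\nu[v\log w_\pi]$, for $v$ near zero mass. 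It contributes only logarithmic factors, which are absorbed into $C_0(A,\alpha)$.

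**Assembly.** Finally, we insert the uniform bound into the recursion, sum the geometric series, and collect all polynomial dependence on $K_0$, $K_+$ and $T_N/\varepsilon_{\rm b}$ into $C_N$.
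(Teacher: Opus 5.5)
Your overall architecture matches the paper's. The proof applies the restated fitted-\textsc{FORE} guarantee (Theorem~\ref{thm:fore-restated}) to the bounded monotone log-class $\mathcal H_{{\rm b},N}$. Its only isotonic-specific analytic input is the $1/\epsilon$ monotone entropy, which gives $\mathfrak r_{N,\rm fit}\le C\{1+T_N/\varepsilon_{\rm b}\}^4N^{-1/3}$ (Lemma~\ref{lem:explicit-isotonic-localized-complexity}). Your statistical step identifies this correctly, including the uniformity over pairs $(\omega,h)$ needed for the data-dependent transition weight.

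The genuine problem is in the deterministic step you re-derive. The population analogue of the isotonic objective is $\Lambda(h)-\mathbb E_\nu[(\mathcal B_\pi^\star\omega)h]$. For normalized $\omega$, up to an $h$-independent constant, this equals $D_\nu^{\rm KL}(\mathcal B_\pi^\star\omega\|\omega_h)$, not $D_\nu^{\rm KL}(\omega_h\|\mathcal B_\pi^\star\omega)$. So each update is an M-projection of the Bellman image. In that orientation the relevant structure is convexity of the \emph{log}-ratio class, which does hold: it is Condition~\ref{cond:fore-class}, verified in the paper. First-order optimality of the oracle $h^\star$, with $v^\star=\omega_{h^\star}$, then yields the exact Pythagorean inequality
\[
  D_\nu^{\rm KL}(Y\|v)\ \ge\ D_\nu^{\rm KL}(Y\|v^\star)+D_\nu^{\rm KL}(v^\star\|v),
  \qquad Y=\mathcal B_\pi^\star\omega,
\]
for every $v=\omega_h$ in the class, because the difference equals $\mathbb E_\nu[(v^\star-Y)(h-h^\star)]\ge0$. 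If $w_\pi$ lies in the class, taking $v=w_\pi$ gives $D_\nu^{\rm KL}(v^\star\|w_\pi)\le D_\nu^{\rm KL}(Y\|w_\pi)\le\gamma D_\nu^{\rm KL}(\omega\|w_\pi)$. In the realizable case the contraction therefore passes to the estimate-first divergence at no cost. Your claim that the class is not convex in the relevant geometry, and that the three-point step is where $T_N/\varepsilon_{\rm b}$ is lost, is a misdiagnosis. As written, your one-step inequality rests on a false identity. In the cited bound, the $T_N/\varepsilon_{\rm b}$ dependence enters through the log-envelope $R_N=\log(T_N/\varepsilon_{\rm b})$, via $(1+e^{2R_N})^p=\{1+(T_N/\varepsilon_{\rm b})^2\}^p$, and through the complexity constant.

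The paper never re-derives this recursion. The proof's work is checking the cited theorem's hypotheses for the realized class, which your proposal skips:
\begin{itemize}
\item $0\in\mathcal H_{{\rm b},N}$ because $\varepsilon_{\rm b}\le1\le T_N$, so the recursion starts at $\widehat\omega^{(0)}_{\rm KL}\equiv1$.
\item $D_\nu^{\rm KL}(1\|w_\pi)<\infty$ by Condition~\ref{cond:kl-target} and Tonelli, so $\varepsilon_{{\rm iso},N}<\infty$.
\item $\overline{\mathcal H}_{{\rm b},N}$ is convex, totally bounded, and closed in $L^2(\nu)$, the last by choosing a monotone representative of an $L^2$ limit. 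The empirical step subclass is a compact convex subset containing $0$, so the oracle projections exist.
\item The population normalization of each empirically normalized iterate is $\omega_{\widehat h_k}\in\mathcal W_{{\rm b},N}$, so the approximation error is exactly $\varepsilon_{{\rm iso},N}$.
\item Since $\mathcal F_{{\rm iso},n}$ depends on the calibration design points, the empirical-process event is taken uniformly over the sample-independent class $\overline{\mathcal H}_{{\rm b},N}$. The projection argument is then applied deterministically to the realized subclass. Your uniform control over all bounded monotone pairs implicitly covers this point.
\end{itemize}
Once these are verified, invoking Theorem~\ref{thm:fore-restated} with the critical radius above gives the stated bound and envelope $C_N$ directly, and no one-step re-derivation is needed.
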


Theorem~\ref{thm:isotonic-kl-regret} gives an oracle bound comprising a
geometrically decaying iteration error, a bounded isotonic approximation error,
and a statistical error from the calibration sample. Choosing
\(K\geq \log(N)/\log\{2/(1+\gamma)\}\) makes the geometric factor at most
\(N^{-1}\), so logarithmically many iterations suffice to make the iteration
error negligible. Moreover, because \(T_N\) grows logarithmically and
\(\varepsilon_{\rm b}\) is fixed, \(C_N\) contributes only a fixed power of
\(\log(eN)\). Whenever the identity transformation is admissible, the
approximation error satisfies
\[
  \varepsilon_{{\rm iso},N}
  \le
  D_{\nu}^{\rm KL}
  \left(
    \frac{\widehat w_\pi}
         {\mathbb E_{\nu}\{\widehat w_\pi(X)\}}
    \middle\|
    w_\pi
  \right),
\]
and may be strictly smaller when a monotone transformation corrects systematic
miscalibration.

Whenever the empirically normalized unconstrained updates remain in
\([\varepsilon_{\rm b},T_N]\) at every
iteration, the constrained and unconstrained procedures coincide. Combined
with Theorem~\ref{thm:isotonic-calibration-error}, this yields a simultaneous
calibration-and-accuracy guarantee: if the bounded-class approximation and
iteration errors are no larger than the statistical error, the estimator
achieves small Bellman calibration error and KL risk within statistical error
of the best normalized ratio in the monotone-transformation class.

\section{Experimental evaluation}
\label{sec:experiments}

We test whether isotonic Bellman calibration improves out-of-sample Bellman
balance and off-policy value estimation. We evaluate on D4RL MuJoCo and
InfiniteCartPole under a common protocol.

\subsection{Experimental design and evaluation metrics}
\label{sec:experiments-design}

We calibrate neural \textsc{FORE}, \textsc{DualDICE}
\citep{nachumEtAl2019DualDICE}, SCOPE-RL minimax weight learning (MWL)
\citep{ueharaEtAl2020MWLMQL,kiyohara2023scope}, and \textsc{NeuralDICE}
\citep{yangEtAl2020RegularizedLagrangianDICE}. For the three comparison
methods, we use the authors' implementations identified in
Appendix~\ref{app:experiments-estimators}. The hyperparameters for each base
estimator are fixed across tasks and replications; reference policy values do
not enter fitting or hyperparameter selection. Ten-fold grouped cross-fitting
produces out-of-fold fitted scores.
We fit one isotonic map to the pooled out-of-fold scores, apply it to the ten
fold-specific predictors, and aggregate their predictions pointwise by the
median. The uncalibrated base estimator fitted once on the full training
sample serves as the control.

For D4RL, we use twelve matched dataset--policy pairs from HalfCheetah,
Hopper, and Walker2d \citep{fuEtAl2020D4RL}, two sample sizes, and ten
independent replications. For InfiniteCartPole, we use four behavior policies
and ten independent replications. With four base estimators, the design gives
960 paired D4RL comparisons and 160 paired CartPole comparisons.

The primary evaluation metrics are absolute policy-value error and projected
Bellman calibration error. The latter measures the component of the population
calibration residual detected by indicator functions of fitted-ratio quantile
bins. We estimate it on behavior-policy data independent of fitting and
calibration, using sample splitting to remove the upward noise bias from
squaring one empirical moment vector. Appendix~\ref{app:experiments-endpoints}
gives the estimator. All comparisons are paired within
experimental setting, sample size, replication, and base estimator, with
cluster bootstrap confidence intervals.

\subsection{Results}

\paragraph{Calibration and policy-value accuracy.}
Calibration reduces both projected calibration error and policy-value error in
D4RL and CartPole
(Table~\ref{tab:calibration-main-results}). Mean policy-value error falls from
\(2.99\) to \(0.710\) in D4RL and from \(0.176\) to \(0.110\) in CartPole.

\begin{table}[t]
\centering
\caption{Cross-calibrated estimator versus the uncalibrated estimator fitted
on the full training sample. Lower is better. Projected calibration error is
evaluated on an independent evaluation sample using the sample-split estimator in
Appendix~\ref{app:experiments-endpoints}. \(\Delta\) is calibrated minus
uncalibrated; brackets give paired 95\% cluster bootstrap confidence
intervals.}
\label{tab:calibration-main-results}
\small
\resizebox{\textwidth}{!}{%
\begin{tabular}{lrrr@{\qquad}rrr}
\toprule
& \multicolumn{3}{c}{Projected calibration error}
& \multicolumn{3}{c}{Absolute policy-value error} \\
\cmidrule(lr){2-4} \cmidrule(lr){5-7}
Benchmark & Uncalibrated & Calibrated & \(\Delta\) [95\% CI]
& Uncalibrated & Calibrated & \(\Delta\) [95\% CI] \\
\midrule
D4RL & 16.1 & 0.120 & \(-16.0\;[-44.3,-1.02]\)
& 2.99 & 0.710 & \(-2.28\;[-3.28,-1.39]\) \\
CartPole & 0.00207 & \(1.96\!\times\!10^{-4}\)
& \(-0.00187\;[-0.00248,-0.00133]\)
& 0.176 & 0.110 & \(-0.0658\;[-0.0733,-0.0580]\) \\
\bottomrule
\end{tabular}%
}
\end{table}

\begin{figure}[t]
\centering
\includegraphics[width=\textwidth]{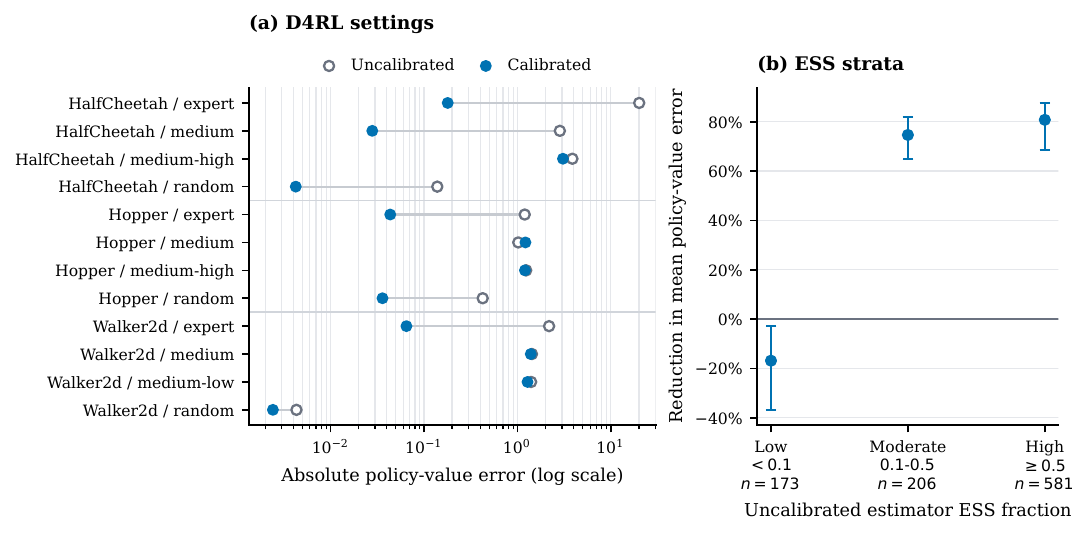}
\caption{D4RL policy-value error across tasks and ESS strata.
(a) Mean absolute policy-value error for each matched dataset--policy pair;
lines connect paired uncalibrated and calibrated estimators.
(b) Relative change in cluster-balanced mean policy-value error across strata
defined by the uncalibrated estimator's ESS fraction. Positive values favor
calibration; bars give paired 95\% cluster bootstrap confidence intervals.}
\label{fig:d4rl-settings-coverage}
\end{figure}

\paragraph{Task and ESS heterogeneity.}
Calibration lowers value error in eleven of twelve D4RL dataset--policy pairs
(Figure~\ref{fig:d4rl-settings-coverage}a). The direction and magnitude of the
change vary across ESS strata: mean policy-value error decreases by
80.8\% in the high-ESS stratum and 74.7\% in the moderate-ESS stratum, but
increases by 16.8\% in the low-ESS stratum
(Figure~\ref{fig:d4rl-settings-coverage}b).

\paragraph{Base estimators.}
Calibration reduces mean projected calibration error for every base estimator in
both benchmarks and reduces policy-value error for \textsc{DualDICE}, MWL, and
\textsc{NeuralDICE}. Neural \textsc{FORE} is the only exception: its projected
calibration error decreases, but its mean value error increases from \(0.465\) to \(1.03\)
in D4RL and from \(0.123\) to \(0.133\) in CartPole. The uncalibrated
\textsc{FORE} estimator has the lowest mean policy-value error among
the four base estimators in both benchmarks.
Estimator-level and task-level results appear in
Appendix~\ref{app:experiments-additional-results}.

\section{Conclusion}

An occupancy-ratio estimate may retain useful information about the true ratio
while remaining systematically inconsistent with adjoint Bellman balance.
Isotonic Bellman calibration corrects this residual imbalance through a
data-driven monotone transformation that adjusts the fitted weights while
preserving their ordering. This low-dimensional post-processing can improve
Bellman balance and, when the initial fitted values remain informative about
the true ratio, improve downstream off-policy value estimation.

The method is most useful when the behavior data adequately cover the target
occupancy and the initial ratio estimate preserves meaningful ranking
information. Under limited coverage or small calibration samples,
lower-complexity transformations, such as log-linear corrections, may provide
a better bias--variance tradeoff.

Several extensions are natural. First, Bellman calibration could be combined
with coverage-stopped FORE, calibrating occupancy-ratio estimates on the
well-supported region while avoiding extrapolation into poorly covered parts
of the target occupancy \citep{van2026fitted}. Second, calibration could be strengthened from the
one-dimensional tests induced by the fitted ratio to richer multicalibration
conditions involving additional state--action features or subgroups \citep{noarov2023scope}. Finally,
extending the framework to finite-horizon and nonstationary settings would
require time-indexed occupancy ratios and calibration conditions that may vary
across decision times.

\bibliographystyle{abbrvnat}
\bibliography{refs}

\appendix

\section{Finite-dimensional isotonic implementation}
\label{app:pava}

\subsection{Generalized PAVA}

This section derives the finite-dimensional form of the fitting step on
line~\ref{algline:isotonic-update} of
Algorithm~\ref{alg:isotonic-bellman-calibration} and describes its generalized
pooled adjacent violators algorithm (PAVA)
implementation. Fix an iteration \(k\). Denote the ordered distinct fitted
values observed in the behavior sample and their induced cells by
\[
  \mathcal T_n
  :=\left\{\widehat w_\pi(X_i):i=1,\ldots,n\right\}
  =\{t_1<\cdots<t_J\},
  \qquad
  I_j:=
  \begin{cases}
    (t_{j-1},t_j], & j=1,\ldots,J-1,\\
    (t_{J-1},\infty), & j=J,
  \end{cases}
\]
where \(t_0=-\infty\).
We represent the calibration transformation by a nondecreasing step function
that takes the value \(u_j=\exp\{h(t_j)\}\) on \(I_j\), allowing \(u_j=0\) as
a boundary value. Thus, the transformation is constant below the smallest
fitted-value design point, between successive design points, and above the
largest design point.

For \(j=1,\ldots,J\), define
\begin{equation}
\label{eq:pava-cell-masses}
\begin{aligned}
  a_j
  &=
  \frac{1}{n}\sum_{i=1}^n
  \mathbf 1\{\widehat w_\pi(X_i)=t_j\},\\
  b_j
  &=
  (1-\gamma)\frac{1}{m}\sum_{\ell=1}^m
  \mathbf 1\{\widehat w_\pi(X_{0,\ell})\in I_j\}
  +
  \gamma
  \frac{
    \sum_{i=1}^n
    \widehat\omega^{(k)}(X_i)
    \mathbf 1\{\widehat w_\pi(X_i^+)\in I_j\}
  }{
    \sum_{i=1}^n\widehat\omega^{(k)}(X_i)
  }.
\end{aligned}
\end{equation}
Thus, \(a_j\) is the empirical frequency of the fitted-value design point
\(t_j\) in the behavior sample, whereas \(b_j\) is the mass of the corresponding
fitted-value cell under the discounted mixture of the initial-state fitted
values and the self-normalized, \(\widehat\omega^{(k)}\)-weighted successor
fitted values. For the
empirically normalized iterates in
Algorithm~\ref{alg:isotonic-bellman-calibration},
\(\sum_i\widehat\omega^{(k)}(X_i)=n\); the denominator in
\eqref{eq:pava-cell-masses} is retained only to make the self-normalization
explicit. In particular, \(a_j>0\), \(b_j\ge 0\), and
\(\sum_j a_j=\sum_j b_j=1\).

With this notation, the fitting criterion reduces to
\begin{equation}
\label{eq:pava-log-objective}
  \min_{\substack{0\le u_1\le\cdots\le u_J\\
      \sum_{j=1}^J a_j u_j>0}}
  \left\{
    \log\left(\sum_{j=1}^J a_j u_j\right)
    -
    \sum_{j=1}^J b_j\log u_j
  \right\}.
\end{equation}
Here and below, \(0\log0=0\), whereas \(-b\log0=+\infty\) for
\(b>0\).
The objective is unchanged when all coordinates of \(u\) are multiplied by
the same positive constant. Its normalized representative is therefore
obtained by solving the separable convex problem
\begin{equation}
\label{eqn:pava-separable}
  \min_{0\le u_1\le\cdots\le u_J}
  \sum_{j=1}^J\{a_j u_j-b_j\log u_j\}.
\end{equation}
Indeed, for any fixed monotone vector \(u\), minimizing the criterion in
\eqref{eqn:pava-separable} over \(cu\), \(c>0\), gives
\(c=(\sum_j a_j u_j)^{-1}\). Substitution yields one plus the logarithmic
objective in \eqref{eq:pava-log-objective}. Hence, the solutions of
\eqref{eqn:pava-separable} are precisely the normalized solutions to the
fitting step in Algorithm~\ref{alg:isotonic-bellman-calibration}.

Problem \eqref{eqn:pava-separable} is a generalized isotonic regression
problem over a total order and can be solved by generalized PAVA
\citep{barlow1972isotonic,best2000minimizing,de2010isotone}. To obtain the
block update, consider a consecutive block
\(B\subseteq\{1,\ldots,J\}\) whose coordinates are constrained to share a
common value \(u\). Its contribution to the criterion is
\[
  A_Bu-C_B\log u,
  \qquad
  A_B=\sum_{j\in B}a_j,
  \qquad
  C_B=\sum_{j\in B}b_j.
\]
When \(A_B>0\), the blockwise minimizer is \(u_B=C_B/A_B\), with \(u_B=0\)
understood as a boundary value when \(C_B=0\). Because \(a_j>0\) for every
\(j\), each nonempty block has \(A_B>0\). Thus, every block update is well
defined, PAVA terminates after finitely many poolings, and the normalized
problem attains its minimum.

Generalized PAVA begins with singleton blocks and repeatedly pools adjacent
blocks whose block values violate monotonicity. Each pooled block is assigned
the updated value \(C_B/A_B\). Equivalently, this is ordinary weighted PAVA
applied to the pseudo-outcomes
\(b_j/a_j\) with weights \(a_j\).

Finally, the fitted calibration transformation takes the value \(\widehat u_j\)
on \(I_j\). When the fitted block values are positive, this is equivalent to
setting \(\widehat h=\log\widehat u_j\) on \(I_j\). Because the resulting
transformation is piecewise constant and nondecreasing as a function of
\(\widehat w_\pi\), the same finite-dimensional problem
can also be implemented using univariate regression-tree software that
supports monotonicity constraints and custom losses, including the
\texttt{R} and \texttt{Python} implementations of \texttt{xgboost}
\citep{chen2016xgboost}.

\subsection{Reference Python implementation}
\begin{verbatim}
import numpy as np

def pava(a, b, tol=1e-12):
    """Solve min_{u increasing} sum_j a_j u_j - b_j log u_j."""
    a = np.asarray(a, dtype=float).reshape(-1)
    b = np.asarray(b, dtype=float).reshape(-1)
    if a.shape != b.shape:
        raise ValueError("a and b must have the same shape")
    if np.any(a <= tol) or np.any(b < -tol):
        raise ValueError("a must be positive and b must be nonnegative")
    b = np.maximum(b, 0.0)

    def block_value(A, B):
        return B / A

    blocks = []

    for j in range(len(a)):
        A, B = float(a[j]), float(b[j])
        blocks.append([j, j + 1, A, B, block_value(A, B)])

        while len(blocks) > 1 and blocks[-2][4] > blocks[-1][4] + tol:
            r = blocks.pop()
            l = blocks.pop()
            A, B = l[2] + r[2], l[3] + r[3]
            blocks.append([l[0], r[1], A, B, block_value(A, B)])

    u = np.zeros_like(a)
    for start, stop, A, B, v in blocks:
        u[start:stop] = v

    return u / np.sum(a * u)


def isotonic_fore_weights(s, sp, s0, gamma, n_iter=50, tol=1e-10):
    """
    Exact isotonic FORE calibration on scalar fitted values.

    s  : fitted values at behavior-sample points X_i
    sp : fitted values at successor points X_i^+
    s0 : fitted values at initial-state points X_{0j}
    """
    n, m = len(s), len(s0)

    grid = np.unique(s)
    J = len(grid)

    def cell_index(x):
        return np.minimum(np.searchsorted(grid, x, side="left"), J - 1)

    ix = cell_index(s)
    ip = cell_index(sp)
    i0 = cell_index(s0)

    a = np.bincount(ix, minlength=J) / n
    b0 = np.bincount(i0, minlength=J) / m

    w = np.ones(n)

    for _ in range(n_iter):
        old = w

        bp = np.bincount(ip, weights=w / np.sum(w), minlength=J)
        b = (1.0 - gamma) * b0 + gamma * bp

        u = pava(a, b)
        w = u[ix]
        w = w / np.mean(w)

        if np.sqrt(np.mean((w - old) ** 2)) <= tol:
            break

    return w
\end{verbatim}

\section{Experimental details}
\label{app:experiments}

\subsection{Calibration and aggregation}
\label{app:experiments-cross-calibration}

\paragraph{\normalfont\bfseries Grouped cross-fitting.}
All base estimators use the same ten-fold trajectory-level partition, so
transitions and initial observations from a trajectory remain in the same
fold. For each fold \(k\), we fit the base ratio estimator on the other nine
folds and evaluate it on the held-out behavior observations \(X_i\),
target-policy successor observations \(X_i^+\), and initial observations
\(X_{0,j}\).

We pool the out-of-fold scores and fit a single isotonic calibration map. The
map is applied to each of the ten fold-specific predictors, and their
calibrated predictions are aggregated pointwise by the median. The comparison
estimator is the uncalibrated output of the same base estimator fitted on the
full training sample, using the same architecture and optimization schedule as
the fold-specific fits. The cross-calibrated estimator uses the normalized
isotonic map in Algorithm~\ref{alg:isotonic-bellman-calibration}.

\paragraph{\normalfont\bfseries Independent evaluation sample.}
Projected Bellman calibration error is evaluated on a behavior-policy sample
independent of all fitting and calibration data. For each fitted estimator, we
evaluate the resulting ratio on behavior observations \(X_i\), target-policy
successor observations \(X_i^+\), and initial observations \(X_{0,j}\). The
evaluation trajectories are partitioned into three disjoint subsamples.
Subsample \(C\) defines the estimator-specific quantile-bin basis and Gram
matrix, while subsamples \(A\) and \(B\) estimate the two Bellman-moment
vectors entering the Gram-weighted product estimator. All estimators in a
paired comparison use the same partition.

InfiniteCartPole uses independently simulated behavior trajectories for
evaluation. For D4RL, we split the logged episodes before constructing the
training and evaluation observations. We reserve \(20\%\) of episodes for
evaluation and use the remainder for training. Behavior, successor, and
initial-state observations are sampled from the corresponding episode sets,
so training and evaluation use disjoint episodes.

\paragraph{\normalfont\bfseries Isotonic map.}
The isotonic map has knots at the pooled observed behavior scores and is
constant outside their range. After PAVA, each boundary block containing fewer
than ten behavior observations is merged successively with the adjacent block
using behavior-frequency weights. This preserves monotonicity and total fitted
mass while leaving interior blocks unchanged. The resulting map is applied to
all fold-specific predictors.

\paragraph{\normalfont\bfseries Preprocessing of base-estimator outputs.}
The calibration method accepts any real-valued initial score. For base
estimators producing signed ratio estimates, we replace negative estimates by
zero in both the cross-fitted and full-sample estimators. Neural
\textsc{FORE} instead returns log-ratio scores. Within each fold, we truncate
these scores to the finite range observed for the held-out behavior
observations; the full-sample estimator uses the range observed in its
training data. The same bounds are used for successor, initial-state, and
independent evaluation observations.

\subsection{Base estimators and benchmarks}
\label{app:experiments-estimators}

\paragraph{\normalfont\bfseries Estimator settings.}
We use one fixed set of hyperparameters per base estimator across tasks,
replications, folds, and full-sample fits
(Table~\ref{tab:calibration-estimator-schedules}).

DualDICE uses the Google Research implementation\footnote{\url{https://github.com/google-research/google-research/tree/master/dual_dice}},
MWL uses SCOPE-RL version 0.2.1\footnote{\url{https://github.com/hakuhodo-technologies/scope-rl}},
and NeuralDICE uses the Google Research DICE-RL implementation\footnote{\url{https://github.com/google-research/dice_rl}}.

\begin{table}[t]
\centering
\caption{Base occupancy-ratio estimators and fixed hyperparameters.}
\label{tab:calibration-estimator-schedules}
\small
\begin{tabular}{@{}p{0.23\textwidth}p{0.70\textwidth}@{}}
\toprule
Estimator & Settings \\
\midrule
Neural \textsc{FORE}
& Two hidden layers of width 128; at most 300 outer iterations; 30 variational
steps per iteration; batch size 8,192; learning rate \(10^{-3}\); weight decay
\(0.1\); early stopping by Bellman loss on a 20\% validation subset of the
training fold; early stopping begins after 10 iterations and occurs after 20
iterations without improvement. \\
\textsc{DualDICE}
& Google Research implementation; 5,000 updates; batch size 256. \\
SCOPE-RL MWL
& State--action minimax objective; 10,000 updates; hidden width 128; batch size
128; training-fold standardization; median kernel bandwidth. \\
\textsc{NeuralDICE}
& Google Research DICE-RL configuration; primal, dual, and normalization
regularizers \((0,1,1)\); positive \(\zeta\) output; reward term included;
two hidden layers of width 64; 5,000 updates; batch size 256; learning rate
\(10^{-4}\). \\
\bottomrule
\end{tabular}
\end{table}

\paragraph{\normalfont\bfseries D4RL MuJoCo.}
We use twelve matched behavior-dataset and target-policy pairs from D4RL
\citep{fuEtAl2020D4RL}: random, medium, medium-high, and expert for
HalfCheetah and Hopper, and random, medium-low, medium, and expert for
Walker2d. We evaluate sample sizes \(10{,}000\) and \(50{,}000\), discount
\(\gamma=0.99\), and ten independent replications, giving 960 comparisons
across the four base estimators.

\paragraph{\normalfont\bfseries Reference policy values.}
D4RL target values are estimated from independent target-policy rollouts. We
increase the rollout sample until the two-sided 95\% confidence interval has
half-width at most
\(0.02\max\{|\widehat V|,\widehat{\operatorname{sd}},1\}\), subject to a
maximum of 32,768 trajectories. Every reported reference value satisfies this
criterion.

\paragraph{\normalfont\bfseries InfiniteCartPole.}
InfiniteCartPole uses full-support behavior policies with policy parameter
\(\alpha\in\{0,0.33,0.66,0.9\}\) and target policy parameter \(\alpha=1\).
Each setting uses 400 training trajectories of horizon 250, a sample of
\(50{,}000\) transitions, discount \(\gamma=0.995\), and ten independent
replications, giving 160
comparisons across the four base estimators. Projected Bellman calibration
error is evaluated on 400 additional behavior-policy trajectories independent
of training. Reference policy values are estimated by independent target-policy
rollouts using the same precision criterion as for D4RL.

\subsection{Evaluation metrics and uncertainty quantification}
\label{app:experiments-endpoints}

\paragraph{\normalfont\bfseries Policy-value error.}
For an estimated occupancy ratio \(\widehat\omega\), we estimate policy value by
\(\widehat V_{\widehat\omega}
:=n^{-1}\sum_{i=1}^n\widehat\omega(S_i,A_i)R_i\).
We report absolute error relative to rollout-based reference values using the
training behavior sample.

\paragraph{\normalfont\bfseries Projected Bellman calibration error.}
This metric measures the component of the population Bellman calibration
residual detected by indicator functions of fitted-ratio quantile bins. We
estimate the squared \(L^2(\nu)\) norm of this projection with three disjoint
evaluation subsamples. Subsample \(C\) defines the bin basis and empirical Gram
matrix. We set
\[
J_0
=
\min\left\{
20,\,
\max\left(5,\left\lfloor n_{AB}^{1/3}\right\rfloor\right)
\right\},
\]
where \(n_{AB}\) is the number of evaluation transition observations outside subsample
\(C\). Empirical quantiles define up to \(J_0\) nonempty bins with boundaries
at observed fitted values; ties are not split and may reduce the number of
bins to \(J\le J_0\).

Let
\(\phi_C(x)
:=(\mathbbm 1\{\widehat\omega(x)\in I_{C,j}\})_{j=1}^J\)
denote the bin indicators defined by subsample \(C\), and let
\(\mathbb P_D\) and \(\mathbb P_{0,D}\) denote empirical averages over the
transition and initial-state observations in subsample \(D\), respectively. For
\(D\in\{A,B\}\), define
\[
\begin{aligned}
b_D
&=
(1-\gamma)\mathbb P_{0,D}\{\phi_C(X_0)\}
+\gamma\mathbb P_D\{\widehat\omega(X)\phi_C(X^+)\}
-\mathbb P_D\{\widehat\omega(X)\phi_C(X)\}, \\
G_C
&=
\mathbb P_C\{\phi_C(X)\phi_C(X)^\top\}.
\end{aligned}
\]

At the population level, for a fixed basis \(\phi\), write
\(r_\omega:=\Gamma_\omega\circ\omega-\omega\),
\(b_\phi:=\E_\nu(r_\omega\phi)\), and
\(G_\phi:=\E_\nu(\phi\phi^\top)\). The corresponding projected Bellman
calibration error is
\[
\mathcal E_\phi(\omega)
=
b_\phi^\top G_\phi^\dagger b_\phi
=
\left\|
\Pi_{\operatorname{span}(\phi)}r_\omega
\right\|_{L^2(\nu)}^2
\le
\mathrm{Cal}_2^2(\omega),
\]
where \(G_\phi^\dagger\) is the Moore--Penrose inverse. We estimate this
quantity by
\[
\widehat{\mathcal E}_{\mathrm{proj}}
=
b_A^\top
\left\{
G_C
+
10^{-8}\frac{\operatorname{tr}(G_C)}{J}I_J
\right\}^{-1}
b_B.
\]

The diagonal term stabilizes the matrix inverse numerically. Conditional on
the fitted estimator, bin basis, and \(G_C\), the vectors \(b_A\) and \(b_B\)
independently estimate the same Bellman-moment vector. Their product therefore
omits the variance term that would enter if one empirical moment vector were
squared. The product estimator is signed, so
\(\widehat{\mathcal E}_{\mathrm{proj}}\) can be negative in finite samples;
we report it without truncation.

\paragraph{\normalfont\bfseries Effective sample size.}
For D4RL, the ESS fraction summarizes the concentration of the weights from the
uncalibrated full-sample estimator,
\[
\frac{\operatorname{ESS}(\widehat\omega)}{n}
=
\frac{\left\{\sum_{i=1}^n\widehat\omega_i\right\}^2}
{n\sum_{i=1}^n\widehat\omega_i^2}.
\]
We define low, moderate, and high ESS strata by fractions below \(0.1\), from
\(0.1\) to \(0.5\), and at least \(0.5\), respectively. The strata are computed
from the uncalibrated estimator and used to summarize heterogeneity in the
paired value-error comparisons.

\paragraph{\normalfont\bfseries Confidence intervals.}
All primary comparisons pair the full-sample and calibrated estimators within
experimental setting, sample size, discount, replication, and base estimator.
We aggregate paired differences within clusters defined by benchmark,
experimental setting, and replication. Percentile 95\% confidence intervals
use 10,000 cluster bootstrap resamples.

\subsection{Additional results}
\label{app:experiments-additional-results}

\paragraph{\normalfont\bfseries Base estimators.}
Table~\ref{tab:calibration-method-results} disaggregates projected calibration
and policy-value error by base estimator.

\begin{table}[H]
\centering
\caption{Mean error by base estimator. Each arrow compares the uncalibrated estimator
with the ten-fold cross-calibrated estimator. Lower is better.}
\label{tab:calibration-method-results}
\small
\begin{tabular}{llcc}
\toprule
Benchmark & Base estimator & Projected calibration error & Policy-value error \\
\midrule
D4RL & Neural \textsc{FORE} & $63.9\rightarrow0.238$ & $0.465\rightarrow1.03$ \\
D4RL & \textsc{DualDICE} & $0.308\rightarrow0.0686$ & $1.90\rightarrow0.495$ \\
D4RL & SCOPE-RL MWL & $0.228\rightarrow0.113$ & $0.685\rightarrow0.579$ \\
D4RL & \textsc{NeuralDICE} & $0.0633\rightarrow0.0608$ & $8.91\rightarrow0.741$ \\
\addlinespace
CartPole & Neural \textsc{FORE}
& $6.05\!\times\!10^{-4}\rightarrow3.28\!\times\!10^{-4}$
& $0.123\rightarrow0.133$ \\
CartPole & \textsc{DualDICE}
& $0.00673\rightarrow1.40\!\times\!10^{-4}$
& $0.212\rightarrow0.124$ \\
CartPole & SCOPE-RL MWL
& $4.27\!\times\!10^{-5}\rightarrow1.19\!\times\!10^{-5}$
& $0.183\rightarrow0.127$ \\
CartPole & \textsc{NeuralDICE}
& $9.03\!\times\!10^{-4}\rightarrow3.06\!\times\!10^{-4}$
& $0.186\rightarrow0.0568$ \\
\bottomrule
\end{tabular}
\end{table}

\paragraph{\normalfont\bfseries Task-level heterogeneity.}

Table~\ref{tab:calibration-d4rl-tasks} reports results for every matched D4RL
dataset--policy pair, averaging over sample sizes, replications, and
base estimators. Calibration lowers mean policy-value error in eleven of the
twelve pairs.

\begin{table}[H]
\centering
\caption{D4RL error by matched dataset--policy pair, averaged over sample
sizes, replications, and base estimators.}
\label{tab:calibration-d4rl-tasks}
\small
\begin{tabular}{lrr}
\toprule
Dataset--policy pair & Uncalibrated & Calibrated \\
\midrule
HalfCheetah--expert & 20.1 & 0.179 \\
HalfCheetah--medium & 2.84 & 0.0280 \\
HalfCheetah--medium-high & 3.87 & 3.07 \\
HalfCheetah--random & 0.139 & 0.00425 \\
Hopper--expert & 1.20 & 0.0435 \\
Hopper--medium & 1.02 & 1.22 \\
Hopper--medium-high & 1.24 & 1.20 \\
Hopper--random & 0.425 & 0.0360 \\
Walker2d--expert & 2.18 & 0.0649 \\
Walker2d--medium & 1.43 & 1.39 \\
Walker2d--medium-low & 1.40 & 1.28 \\
Walker2d--random & 0.00432 & 0.00242 \\
\bottomrule
\end{tabular}
\end{table}

\section{Fixed-image KL calibration--refinement identity}
\label{sec:calrefineKL}

In this section, we give the generalized-KL analogue of the fixed-image
identity in Lemma~\ref{lem:fixed-image-calibration-refinement}.

For measurable \(f\geq0\) and \(g>0\), recall the generalized KL divergence
\(D_{\nu}^{\rm KL}(f\|g)\) from
\eqref{eq:generalized-kl-calibration-refinement}.
We use the adjoint Bellman KL discrepancy
\(D_{\nu}^{\rm KL}(\mathcal B_\pi^\star\omega\|\omega)\) to measure
departure from the adjoint Bellman fixed-point equation. Indeed, for any
\(\omega>0\) satisfying \(\mathbb E_{\nu}\{\omega(X)\}=1\), the
\(L^1(\nu)\) contraction of \(\mathcal B_\pi^\star\), its fixed-point
property, and Pinsker's inequality give
\[
  \|\omega-w_\pi\|_{L^1(\nu)}
  \leq
  \frac{
    \sqrt{
      2D_{\nu}^{\rm KL}
      (\mathcal B_\pi^\star\omega\|\omega)
    }
  }{1-\gamma}.
\]
Thus, the adjoint Bellman KL discrepancy controls the \(L^1(\nu)\) error of
the estimated occupancy ratio.

We decompose this discrepancy into calibration and refinement components:
\begin{equation}
\label{eq:kl-calibration-refinement-errors}
  \mathrm{Cal}_{\rm KL}(\omega)
  :=
  D_{\nu}^{\rm KL}
  (\Gamma_\omega\circ\omega\|\omega),
  \qquad
  \mathrm{Ref}_{\rm KL}(\omega)
  :=
  \inf_a
  D_{\nu}^{\rm KL}
  (\mathcal B_\pi^\star\omega\|a\circ\omega),
\end{equation}
where the infimum is over measurable transformations
\(a:\mathbb R\to(0,\infty)\). The following result gives an exact decomposition
of the adjoint Bellman KL discrepancy.

\begin{theorem}[Fixed-image KL calibration--refinement identity]
\label{thm:population-kl-calibration-improvement}
For every \(\omega\) for which the quantities in
\eqref{eq:kl-calibration-refinement-errors} are finite,
\begin{equation}
\label{eq:kl-fixed-image-calibration-refinement}
  D_{\nu}^{\rm KL}
  (\mathcal B_\pi^\star\omega\|\omega)
  =
  \mathrm{Ref}_{\rm KL}(\omega)
  +
  \mathrm{Cal}_{\rm KL}(\omega).
\end{equation}
\end{theorem}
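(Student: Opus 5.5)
The plan is a Bregman/Pythagorean argument for the generalized KL divergence, conditioning on the fitted value \(\omega(X)\). Write \(Y:=\mathcal B_\pi^\star\omega\) and \(\bar Y:=\Gamma_\omega\circ\omega=\mathbb E_\nu(Y\mid\omega(X))\). The proof has three steps.

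\emph{Step 1: the refinement infimum is attained at \(a=\Gamma_\omega\).} For any positive measurable \(a\), expanding the integrand gives
\[
D_\nu^{\rm KL}(Y\|a\circ\omega)=\mathbb E_\nu\{Y\log Y - Y\}-\mathbb E_\nu\{Y\log a(\omega)\}+\mathbb E_\nu\{a(\omega)\}.
\]
Apply the tower property to the middle term: \(\mathbb E_\nu\{Y\log a(\omega)\}=\mathbb E_\nu\{\bar Y\log a(\omega)\}\), since \(\log a(\omega)\) is \(\sigma(\omega)\)-measurable. This yields the exact decomposition
\[
D_\nu^{\rm KL}(Y\|a\circ\omega)=D_\nu^{\rm KL}(Y\|\bar Y)+D_\nu^{\rm KL}(\bar Y\|a\circ\omega).
\]
To justify it, add and subtract \(\mathbb E_\nu\{\bar Y\log\bar Y\}\) and use \(\mathbb E_\nu\{Y\log\bar Y\}=\mathbb E_\nu\{\bar Y\log \bar Y\}\) and \(\mathbb E_\nu Y=\mathbb E_\nu\bar Y\). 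The second divergence is nonnegative and vanishes at \(a=\Gamma_\omega\). Hence \(\mathrm{Ref}_{\rm KL}(\omega)=D_\nu^{\rm KL}(Y\|\bar Y)\).

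\emph{Step 2: apply the same decomposition with \(a\) equal to the identity.} Taking \(a(t)=t\), so that \(a\circ\omega=\omega\) (which is \(\sigma(\omega)\)-measurable), the display from Step 1 gives
\[
D_\nu^{\rm KL}(Y\|\omega)=D_\nu^{\rm KL}(Y\|\bar Y)+D_\nu^{\rm KL}(\bar Y\|\omega)=\mathrm{Ref}_{\rm KL}(\omega)+\mathrm{Cal}_{\rm KL}(\omega).
\]
This is \eqref{eq:kl-fixed-image-calibration-refinement}.

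\emph{Step 3: integrability and boundary cases, which I expect to be the main obstacle.} The algebra in Step 1 splits \(D_\nu^{\rm KL}\) into separate expectations, and some of these may be individually infinite or involve \(\log 0\). The plan is as follows.
\begin{itemize}
\item Work with the nonnegative integrand \(\phi(Y,\bar Y)+\phi(\bar Y,\omega)\), where \(\phi(u,v)=u\log(u/v)-u+v\ge0\), and verify the pointwise identity
\[
\phi(Y,\omega)=\phi(Y,\bar Y)+\phi(\bar Y,\omega)+(Y-\bar Y)\log(\bar Y/\omega).
\]
\item The cross term \((Y-\bar Y)\log(\bar Y/\omega)\) has conditional mean zero given \(\omega(X)\). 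Since \(\bar Y/\omega\) is a function of \(\omega(X)\), it suffices to justify taking conditional expectations.
\item Handle this by truncation. On the sets where \(|\log(\bar Y/\omega)|\le k\), the cross term is integrable and has mean zero. Then let \(k\to\infty\), using finiteness of \(\mathrm{Ref}_{\rm KL}\) and \(\mathrm{Cal}_{\rm KL}\) together with monotone convergence on the nonnegative parts.
\item On \(\{\bar Y=0\}\) we have \(Y=0\) almost surely, because \(Y\ge0\); there the convention \(0\log0=0\) makes every term vanish consistently.
\end{itemize}
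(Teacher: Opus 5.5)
Your proposal is correct and follows the paper's proof. Both use the tower-property Pythagorean identity \(D_{\nu}^{\rm KL}(Y\|a\circ\omega)=D_{\nu}^{\rm KL}(Y\|\Gamma_\omega\circ\omega)+D_{\nu}^{\rm KL}(\Gamma_\omega\circ\omega\|a\circ\omega)\), minimized at \(a=\Gamma_\omega\) to identify \(\mathrm{Ref}_{\rm KL}(\omega)\) and then evaluated at the identity map. Your truncation argument on the \(\sigma(\omega)\)-measurable sets \(\{|\log(\bar Y/\omega)|\le k\}\) is a sound piece of rigor that the paper omits, since the paper simply asserts the cross term vanishes. One small refinement: where \(\Gamma_\omega=0\), the infimum over \((0,\infty)\)-valued \(a\) is approached (e.g., by \(\Gamma_\omega\vee\varepsilon\)) rather than attained, which leaves its value unchanged.
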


With the adjoint Bellman image held fixed, the refinement error is the smallest
KL discrepancy attainable by transforming only the fitted value \(\omega(X)\).
The calibration error is the remaining removable component; in particular,
\(\mathrm{Cal}_{\rm KL}(\omega)=0\) if and only if \(\omega\) is perfectly
calibrated. Moreover,
\[
  D_{\nu}^{\rm KL}
  (\mathcal B_\pi^\star\omega\|\omega)
  -
  D_{\nu}^{\rm KL}
  (\mathcal B_\pi^\star\omega\|
    \Gamma_\omega\circ\omega)
  =
  \mathrm{Cal}_{\rm KL}(\omega).
\]
Thus, with the adjoint Bellman image \(\mathcal B_\pi^\star\omega\) held fixed,
the calibration error is exactly the maximal reduction in KL discrepancy
attainable through post-processing based only on the fitted values. A perfectly
calibrated weight therefore cannot be further improved by such post-processing.
This is the occupancy-ratio analogue of classical calibration--refinement
decompositions for proper scoring rules
\citep{murphy1973new,degroot1983comparison,brocker2009reliability,
van2023causal}.

\section{Auxiliary results for the finite-sample analysis}
\label{app:finite-sample-proofs}

This section is organized by proof role. We first introduce the
empirical-process notation and restate the fitted \textsc{FORE} guarantee used
in the KL analysis. We then collect the population calibration identities,
establish the KL-regret bound for the bounded isotonic update, and derive the
finite-sample Bellman calibration bound from the exact empirical balance
identity.

Throughout the appendix, the training sample used to fit \(\widehat w_\pi\) is
conditioned on. All probabilities are conditional on that sample unless stated
otherwise.

\subsection{Notation and empirical-process conventions}
\label{app:proof-conventions}

Let
\[
  P_nf=\frac1n\sum_{i=1}^n f(X_i),\qquad
  Q_n\varphi=\frac1n\sum_{i=1}^n\varphi(X_i,X_i^+),\qquad
  P_{0,m}f=\frac1m\sum_{j=1}^m f(X_{0,j}).
\]
Let \(Q_{\nu,\pi}\) be the law of \((X,X^+)\) under
\(X\sim\nu\) and \(X^+\mid X\sim P_\pi(\cdot\mid X)\).

For this subsection, let \(Z_1,\ldots,Z_N\) be independent observations with
common law \(P\), let
\(P_N=N^{-1}\sum_{i=1}^N\delta_{Z_i}\), and let
\(\sigma_1,\ldots,\sigma_N\) be independent Rademacher variables. For a class
\(\mathcal G\) of square-integrable functions under \(P\), define
\[
  \mathcal R_N(\mathcal G,r;P)
  =
  \mathbb E_{Z,\sigma}
  \sup_{\substack{g\in\mathcal G:\\
      \|g\|_{L^2(P)}\le r}}
  \left|
    \frac{1}{N}\sum_{i=1}^N\sigma_i g(Z_i)
  \right|.
\]

\subsection{Fitted FORE guarantee}

We restate the fitted \textsc{FORE} finite-sample bound of
\citet{van2026fitted} in the notation of this paper.

Let \(\sH\) be a class of log-ratio functions on \(\sX\), and define
\[
  \Lambda_{\nu}(h)=\log \mathbb E_{\nu}\{\exp h(X)\},
  \qquad
  \omega_h(x)=\exp\{h(x)-\Lambda_{\nu}(h)\},
  \qquad
  \sW=\{\omega_h:h\in\sH\}.
\]
Let
\[
  \varepsilon_{\sH}
  :=
  \inf_{v\in\sW}
  D_{\nu}^{\rm KL}(v\|w_\pi).
\]
Set
\[
  \sH^\circ
  =
  \{h-\mathbb E_{\nu} h(X):h\in\sH\},
  \qquad
  \mathcal H_\Delta
  =
  \{h_1-h_2:h_1,h_2\in\sH^\circ\}.
\]
Let \(Q_{\nu,\Delta}\) denote the law of \((X,X)\) for \(X\sim\nu\), and let
\(Q_{\nu,\pi}\) denote the law of \((X,X^+)\) for \(X\sim\nu\) and
\(X^+\mid X\sim P_\pi(\cdot\mid X)\). Define
\[
  \mathcal G_\times
  =
  \left\{
    (x,x^+)\mapsto f(x)h_\Delta(x^+):
    f\in\sW,\ h_\Delta\in\mathcal H_\Delta
  \right\}
\]
and
\[
  \mathfrak C_N(r)
  =
  \max\left\{
    \mathcal R_N(\mathcal H_\Delta,r;\nu),
    \mathcal R_N(\mathcal H_\Delta,r;\mu_{0,\pi}),
    \mathcal R_N(\mathcal G_\times,r;Q_{\nu,\Delta}),
    \mathcal R_N(\mathcal G_\times,r;Q_{\nu,\pi})
  \right\}.
\]
Define the fitted critical radius by
\begin{equation}
\label{eq:calibration-critical-radius}
  \mathfrak r_{N,\rm fit}
  =
  N^{-1/2}
  \vee
  \inf\left\{r>0:\mathfrak C_N(r)\le r^2\right\}.
\end{equation}

\begin{enumerate}[label=\textbf{F\arabic*}, ref={F\arabic*}, leftmargin=1.5em]
\item \label{cond:fore-class}
\textit{Closed convex log-ratio class.}
\(\sH\) is convex, closed, and totally bounded as a subset of \(L^2(\nu)\).

\item \label{cond:fore-bounded}
\textit{Bounded log class, initial coverage, and one-step smoothing.}
There exist a measurable set \(\mathcal X_R\) with
\(\nu(\mathcal X_R)=1\) and finite constants \(R,K_0,K_+\) such that
\[
\begin{aligned}
  \sup_{h\in\sH}
  \left|h(x)-\mathbb E_{\nu}h(X)\right|\le R,
  \qquad x\in\mathcal X_R,\\
  \left\|\frac{d\mu_{0,\pi}}{d\nu}\right\|_{\psi_1}\le K_0,
  \qquad
  \sup_{\omega\in\sW}
  \left\|
    \frac{d\{(\omega\nu)P_\pi\}}{d\nu}
  \right\|_{\psi_1}
  \le K_+ .
\end{aligned}
\]

\item \label{cond:fore-target}
\textit{Target lower tail and KL approximation.}
The target occupancy ratio is positive \(\nu\)-almost surely,
\(\varepsilon_{\sH}<\infty\), and there are constants \(0<A<\infty\) and
\(\alpha>0\) such that, for every \(t\in(0,1]\),
\[
  \nu\{x:0<w_\pi(x)\le t\}
  \le
  A t^\alpha .
\]
\end{enumerate}

\begin{theorem}[Fitted FORE with empirical normalization]
\label{thm:fore-restated}
Let \(\gamma\in[0,1)\). Assume Conditions~\ref{cond:fore-class}--
\ref{cond:fore-target} and suppose that \(0\in\sH\). Let
\(\widehat h_0,\ldots,\widehat h_K\in\sH\) be the log-ratio representatives
generated by the exact fitted \textnormal{\textsc{FORE}} recursion over \(\sH\),
initialized at \(\widehat h_0=0\).
Write
\[
  \widehat\Lambda_n(h)
  =
  \log\left\{
    \frac{1}{n}\sum_{i=1}^n \exp h(X_i)
  \right\},
  \qquad
  \widehat\omega^{(k)}(x)
  =
  \exp\{\widehat h_k(x)-\widehat\Lambda_n(\widehat h_k)\},
  \qquad 0\le k\le K .
\]
In particular, \(\widehat\omega^{(0)}\equiv1\).
The iterates are constructed from independent transition
draws from \(\nu\) and independent initial draws from \(\mu_{0,\pi}\), and
each empirical subproblem is solved exactly.
Then, with probability at least \(1-\delta\),
\begin{equation}
\label{eq:fore-restated-bound}
\begin{aligned}
  D_{\nu}^{\rm KL}(\widehat\omega^{(K)}\|w_\pi)
  \le{}&
  C_{\rm env}
  \left(\frac{1+\gamma}{2}\right)^K
  D_{\nu}^{\rm KL}(\widehat\omega^{(0)}\|w_\pi)\\
  &+\frac{C_{\rm env}}{1-\gamma}\varepsilon_{\sH}\\
  &+\frac{C_{\rm env}}{(1-\gamma)^2}
  \log^2(eN)
  \left\{
    \mathfrak r_{N,\rm fit}^2
    +\frac{\log(1/\delta)}{N}
  \right\},
\end{aligned}
\end{equation}
where \(N=n\wedge m\). For universal finite exponents \(p,q\),
\begin{equation}
\label{eq:fore-restated-envelope}
  C_{\rm env}
  \le
  C_0(A,\alpha)(1+K_0+K_+)^q(1+e^{2R})^p.
\end{equation}
\end{theorem}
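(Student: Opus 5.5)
This result is a restatement of the fitted \textsc{FORE} bound of \citet{van2026fitted}, so the plan follows that analysis: a one-step KL contraction of the population adjoint Bellman update, an oracle inequality for each empirical KL-projection step, and unrolling of the resulting recursion.

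\emph{Step 1: population contraction.} Write the population fitted update as \(\omega\mapsto\Pi_{\sW}\mathcal B_\pi^\star\omega\). Here \(\Pi_{\sW}\) minimizes the population version of the subproblem objective, \(\Lambda_\nu(h)-\mathbb E_\nu[\{\mathcal B_\pi^\star\omega\}h]\), which by \eqref{eq:bellman-moment-update} is the KL projection of \(\mathcal B_\pi^\star\omega\) onto \(\sW\). I would first show that
\[
D_\nu^{\rm KL}(w_\pi\|\mathcal B_\pi^\star\omega)\le\gamma D_\nu^{\rm KL}(w_\pi\|\omega).
\]
This follows from joint convexity of KL. The fixed point is \(w_\pi=(1-\gamma)\omega_0+\gamma\,d(w_\pi\nu P_\pi)/d\nu\), the image is \(\mathcal B_\pi^\star\omega=(1-\gamma)\omega_0+\gamma\,d(\omega\nu P_\pi)/d\nu\), and the Markov kernel \(P_\pi\) does not increase divergence (data processing). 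The KL in the statement is written in the direction \(D(\widehat\omega\|w_\pi)\), so I would either run the argument in that orientation or pass between directions using the envelope bounds. Any constant lost there is absorbed into \(C_{\rm env}\) together with the lower-tail Condition~\ref{cond:fore-target}. This is why the rate is \((1+\gamma)/2\) rather than \(\gamma\).

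\emph{Step 2: per-step oracle inequality.} Fix \(\widehat\omega^{(k)}\). By convexity of \(\sH\) (Condition~\ref{cond:fore-class}), the empirical objective satisfies a first-order variational inequality at \(\widehat h_{k+1}\). Comparing it with the population objective shows that the excess population risk of \(\widehat h_{k+1}\) over the projection is at most an empirical-process term. That term involves \(P_n-\nu\) on \(\mathcal H_\Delta\), \(P_{0,m}-\mu_{0,\pi}\) on \(\mathcal H_\Delta\), and \(Q_n-Q_{\nu,\pi}\) on the products \(\widehat\omega^{(k)}(x)h_\Delta(x^+)\) in \(\mathcal G_\times\). The class \(\mathcal G_\times\) under \(Q_{\nu,\Delta}\) handles the empirical normalization.

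The log-partition is strongly convex on \(\sH\) with modulus of order \(e^{-2R}\) by Condition~\ref{cond:fore-bounded}, and the envelopes are subexponential with parameters \(K_0\) and \(K_+\). Truncating at level \(\log(eN)\) and applying a localized Talagrand/Bernstein inequality at the critical radius \(\mathfrak r_{N,\rm fit}\) then gives
\[
\mathrm{excess}\lesssim C\log^2(eN)\{\mathfrak r_{N,\rm fit}^2+\log(1/\delta)/N\}.
\]
A union bound over \(k\le K\) costs only a logarithmic factor. One could avoid it by making the bound uniform over \(\widehat\omega^{(k)}\in\sW\), since \(\mathcal G_\times\) already ranges over all \(f\in\sW\). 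I would take the uniform route.

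\emph{Step 3: combine and unroll.} Let \(v^\star\) attain \(\varepsilon_{\sH}\). Using Step 1, the excess-risk bound, and a three-point inequality for KL projections onto the convex set \(\sW\) (in log-coordinates), I would aim for
\[
D(\widehat\omega^{(k+1)}\|w_\pi)\le\tfrac{1+\gamma}{2}D(\widehat\omega^{(k)}\|w_\pi)+C(\varepsilon_{\sH}+\mathrm{stat}).
\]
The main obstacle is this step. Projection is not a KL contraction in a clean orientation, so converting the \(\gamma\)-contraction together with the approximation and statistical excess into a linear recursion requires sacrificing half of the gap \(1-\gamma\). It also requires controlling the likelihood ratios \(\widehat\omega/w_\pi\) where \(w_\pi\) is small. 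For the latter I would split on the event \(\{w_\pi\le t\}\), whose \(\nu\)-mass is at most \(At^\alpha\) by Condition~\ref{cond:fore-target}, and take \(t=N^{-1}\); this explains the extra logarithmic factor. Summing the geometric series, with \(\sum_k\{(1+\gamma)/2\}^k=2/(1-\gamma)\), gives the \((1-\gamma)^{-1}\) factor on \(\varepsilon_{\sH}\). The statistical term picks up an additional \((1-\gamma)^{-1}\) from variance inflation of the \(\gamma\)-weighted transition term in the localization, giving \((1-\gamma)^{-2}\). The envelope bound \eqref{eq:fore-restated-envelope} results from tracking \(e^{2R}\) and \(K_0,K_+\) through these steps.
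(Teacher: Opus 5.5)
\textbf{Genuine gap: the step that carries the theorem is missing.} Everything rests on the one-step inequality $D_{\nu}^{\rm KL}(\widehat\omega^{(k+1)}\|w_\pi)\le\tfrac{1+\gamma}{2}D_{\nu}^{\rm KL}(\widehat\omega^{(k)}\|w_\pi)+C(\varepsilon_{\sH}+\mathrm{stat})$. In Step~3 you only state it as a goal and call it the main obstacle.

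\emph{What Step~3 still has to do.} The population update minimizes $D_{\nu}^{\rm KL}(\mathcal B_\pi^\star\omega\|v)$ over the log-convex class $\sW$. Its three-point inequality yields $D_{\nu}^{\rm KL}(\bar\omega\|v)\le D_{\nu}^{\rm KL}(\mathcal B_\pi^\star\omega\|v)$, but only for comparators $v\in\sW$, and in general $w_\pi\notin\sW$.
\begin{itemize}
\item One natural route is to compare through a near-minimizer $v^\star\in\sW$ for $\varepsilon_{\sH}$. That leaves the cross term $\mathbb E_\nu[(\bar\omega-\mathcal B_\pi^\star\omega)\log(v^\star/w_\pi)]$. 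Its integrand is unbounded where $w_\pi$ is small, and it must be bounded by $\varepsilon_{\sH}$, up to logarithms, using Condition~\ref{cond:fore-target} and the envelopes.
\item You must also turn the excess population risk of the empirical minimizer into a bound on $D_{\nu}^{\rm KL}(\widehat\omega^{(k+1)}\|w_\pi)$.
\item The resulting cross terms must then be absorbed against the contraction by Young's inequality.
\end{itemize}
That absorption is the natural place to give up half the gap $1-\gamma$ and to pick up the extra $(1-\gamma)^{-1}$ on the statistical term. ``Variance inflation'' of the transition term is not an argument for either. None of this is carried out, so the proposal does not yet prove the theorem.

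\emph{Your Step~1 rationale is also wrong.} The same joint-convexity and data-processing argument gives $D_{\nu}^{\rm KL}(\mathcal B_\pi^\star\omega\|w_\pi)\le\gamma D_{\nu}^{\rm KL}(\omega\|w_\pi)$ directly in the statement's orientation (Lemma~\ref{lem:calibration-adjoint-kl-contraction}). That is also the orientation the projection geometry produces, so no switching is needed. Switching by envelope bounds inside the recursion would fail anyway:
\begin{itemize}
\item $w_\pi$ has no two-sided envelope, only the soft lower tail of Condition~\ref{cond:fore-target}.
\item A constant $c>1$ paid at each step turns $\gamma$ into $c\gamma$. This need not stay below one and cannot be absorbed into $C_{\rm env}$.
\end{itemize}
So orientation is not the source of $(1+\gamma)/2$.

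\textbf{How the paper proceeds.} The paper does not reprove any of this. It applies Theorem~4.2 of \citet{van2026fitted}, after checking that Conditions~\ref{cond:fore-class}--\ref{cond:fore-target} are that theorem's hypotheses. It then makes three adaptations:
\begin{itemize}
\item The cited proof bounds the transition and initial-state empirical processes separately, so unequal $n,m$ reduce to $N=n\wedge m$.
\item The population-normalized iterates $\omega_{\widehat h_k}$ lie in $\sW$. This is what the uniform smoothing bound requires, even though the algorithm normalizes empirically.
\item Tracking constants gives \eqref{eq:fore-restated-envelope}.
\end{itemize}
Since you already recognize the result as a restatement, you can either make exactly these checks or genuinely prove Step~3 to get a self-contained argument.

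\textbf{A smaller point.} The same sample is used at every iteration (as in Algorithm~\ref{alg:isotonic-bellman-calibration}), so $\widehat\omega^{(k)}$ is data-dependent. A union bound over $k$ of fixed-weight bounds is therefore invalid. The uniformity over $f\in\sW$ built into $\mathcal G_\times$ is necessary, not merely a way to save a logarithm.
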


\begin{proof}
Apply \citet[Theorem~4.2]{van2026fitted} with behavior distribution \(\nu\),
target occupancy ratio \(w_\pi\), target initial state--action distribution
\(\mu_{0,\pi}\), and log-ratio class \(\sH\).
Conditions~\ref{cond:fore-class}--\ref{cond:fore-target}
correspond, respectively, to the closed convex class, bounded log-envelope and
smoothing, and target lower-tail conditions of that theorem.
The quantity \(\varepsilon_{\sH}\) is its KL approximation error. Exact
empirical minimization and normalization give the required update, while
\(\omega_{\widehat h_k}\in\sW\) verifies uniform smoothing at every
population-normalized iterate.

The cited theorem assumes equal numbers of transition and initial observations,
but its proof treats the two empirical processes separately. Applying the two
bounds at sample sizes \(n\) and \(m\) and then setting \(N=n\wedge m\) gives
\eqref{eq:fore-restated-bound}. Tracking the constants in that proof gives
the envelope bound in \eqref{eq:fore-restated-envelope}.

The argument also applies to an exact recursion over a possibly
sample-dependent closed convex subclass of \(\sH\) that contains zero. The
empirical-process event is uniform over \(\sH\), and, on this event, the
projection argument applies deterministically to the realized subclass. The
approximation error is then computed over that subclass, while the critical
radius and envelope constants are inherited from \(\sH\). This proves the
restated guarantee.
\end{proof}

\subsection{Empirical-process tools}

We use Bousquet's version of Talagrand's maximal inequality
\citep{bousquet2002bennett} and Dudley's localized entropy bound
\citep{bartlettEtAl2005LocalRademacher,wainwright2019HighDimensionalStatistics}.

\begin{lemma}[Bousquet's inequality]
\label{lem:tool-bousquet}
Let \(\mathcal G\) be a countable class of measurable functions satisfying
\(Pg=0\), \(\|g\|_\infty\le b\), and \(Pg^2\le v\) for all
\(g\in\mathcal G\). Then, for every \(u\ge0\), with probability at least
\(1-e^{-u}\),
\[
  \sup_{g\in\mathcal G}|(P_N-P)g|
  \le
  \mathbb E\sup_{g\in\mathcal G}|(P_N-P)g|
  +
  \sqrt{
    \frac{2u}{N}
    \left\{
      v+2b\mathbb E\sup_{g\in\mathcal G}|(P_N-P)g|
    \right\}
  }
  +
  \frac{bu}{3N}.
\]
\end{lemma}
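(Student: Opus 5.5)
Bousquet's inequality is a standard tool, so rather than a new argument I would verify that the cited statement follows from Talagrand's concentration inequality for suprema of empirical processes in Bousquet's form \citep{bousquet2002bennett}. Let \(Z:=\sup_{g\in\mathcal G}\sum_{i=1}^N (g(Z_i)-Pg)\), so that \(N\sup_{g}(P_N-P)g=Z\).

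\textbf{Step 1 (Bousquet's theorem).} For a countable class of centered functions with \(g\le b\) and \(Pg^2\le v\), define the effective variance \(\sigma_N^2:=Nv+2b\,\mathbb E Z\). Bousquet's theorem gives, for every \(u\ge0\), with probability at least \(1-e^{-u}\),
\[
  Z\le \mathbb EZ+\sqrt{2u\sigma_N^2}+\frac{bu}{3}.
\]
This follows from the entropy-method bound on the logarithmic moment generating function,
\[
  \log\mathbb E e^{\lambda(Z-\mathbb EZ)}\le \sigma_N^2\,\phi(\lambda b)/b^2,
  \qquad \phi(x)=e^x-x-1 .
\]
The Bennett-type tail is then inverted using \(h^{-1}\)-type bounds, namely the elementary inequality \(b^2h^{-1}(ub^{-2}\sigma^2)\) giving \(\sqrt{2u\sigma^2}+bu/3\). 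I would cite these bounds rather than reprove them.

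\textbf{Step 2 (absolute values).} To pass to \(\sup|(P_N-P)g|\), apply Step 1 to the symmetrized class \(\mathcal G\cup(-\mathcal G)\). This class is still countable and centered, and it satisfies \(|g|\le b\) and \(Pg^2\le v\). Its supremum equals \(N\sup_{\mathcal G}|(P_N-P)g|\).

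\textbf{Step 3 (rescaling).} Divide by \(N\). We have \(\mathbb EZ/N=\mathbb E\sup|(P_N-P)g|\), and
\[
  \sqrt{2u(Nv+2bN\,\mathbb E\sup|\cdot|)}/N=\sqrt{(2u/N)\{v+2b\,\mathbb E\sup|\cdot|\}},
\]
with the last term \(bu/(3N)\). This reproduces the displayed bound exactly.

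\textbf{Main obstacle.} The only delicate point is the role of the uniform bound. Bousquet's theorem is usually stated for \(g-Pg\le b\) (one-sided) with the factor \(2b\,\mathbb EZ\). Under the stated assumptions \(Pg=0\) and \(\|g\|_\infty\le b\), this one-sided condition holds for both \(g\) and \(-g\). So no extra constant appears, and the countability assumption handles measurability of the supremum.
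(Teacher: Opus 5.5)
Your proposal is correct and matches the paper, which gives no argument of its own and simply invokes Bousquet's (2002) form of Talagrand's inequality. Applying it to $\mathcal G\cup(-\mathcal G)$ and dividing by $N$ is exactly what yields the stated two-sided, normalized bound. One small slip in Step 1: the Bennett inversion should read $t=(\sigma_N^2/b)\,h^{-1}(ub^2/\sigma_N^2)\le\sqrt{2u\sigma_N^2}+bu/3$ with $h(x)=(1+x)\log(1+x)-x$, though this does not affect the argument since you are citing the result.
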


\begin{lemma}[Localized entropy bound for Rademacher averages]
\label{lem:localized-entropy-bound}
Let \(\mathcal G\) have uniform envelope \(M\), and suppose that, uniformly over
probability measures \(Q\),
\[
  \log N\{\epsilon,\mathcal G,L^2(Q)\}\le H(\epsilon).
\]
Set \(J(r):=\int_0^r\sqrt{1+H(\epsilon)}\,d\epsilon\), with the integral
truncated at \(M\). Then, up to a universal constant,
\[
  \mathcal R_N(\mathcal G,r;P)
  \lesssim
  \frac{J(r)}{\sqrt N}
  +
  \frac{M J(r)^2}{r^2N}.
\]
\end{lemma}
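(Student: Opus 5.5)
The plan is the standard localization-by-self-bounding argument: first bound the Rademacher average conditionally on the sample by Dudley's entropy integral at the \emph{empirical} radius, then relate the empirical radius back to the population radius \(r\) through the Rademacher average itself, and finally solve the resulting quadratic inequality.

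\textbf{Step 1 (conditional chaining).} Fix \(Z_1,\ldots,Z_N\) and write \(\mathcal G_r=\{g\in\mathcal G:\|g\|_{L^2(P)}\le r\}\) and
\[
\widehat r^2:=\sup_{g\in\mathcal G_r}P_Ng^2 .
\]
Conditionally on the sample, the Rademacher process is sub-Gaussian with respect to \(L^2(P_N)\). All of \(\mathcal G_r\) lies in an \(L^2(P_N)\)-ball of radius \(\widehat r\) (chaining from \(0\), or from a fixed element). Dudley's entropy bound, applied with the uniform entropy bound \(H\) at \(Q=P_N\), therefore gives
\[
\mathbb E_\sigma\sup_{g\in\mathcal G_r}\Bigl|\tfrac1N\sum_i\sigma_ig(Z_i)\Bigr|\lesssim \frac{J(\widehat r)}{\sqrt N}.
\]
The "\(1+\)" inside \(J\) absorbs the single-function term, and truncation at \(M\) is harmless since \(\widehat r\le M\). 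Taking expectations over \(Z\), and using that \(J\) is concave and nondecreasing (its integrand is nonincreasing), Jensen gives
\[
\mathcal R_N(\mathcal G,r;P)\lesssim \frac{J\bigl(\sqrt{\mathbb E\widehat r^{\,2}}\bigr)}{\sqrt N}.
\]

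\textbf{Step 2 (controlling the empirical radius).} We have
\[
\mathbb E\widehat r^{\,2}\le r^2+\mathbb E\sup_{g\in\mathcal G_r}|(P_N-P)g^2|.
\]
Symmetrization bounds the supremum by \(2\,\mathbb E\sup_g|N^{-1}\sum_i\sigma_ig^2(Z_i)|\). Since \(t\mapsto t^2\) is \(2M\)-Lipschitz on \([-M,M]\), the Ledoux--Talagrand contraction inequality then gives
\[
\mathbb E\widehat r^{\,2}\le r^2+CM\,\mathcal R_N(\mathcal G,r;P)=:r^2+CMR .
\]

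\textbf{Step 3 (solving the fixed-point inequality).} Concavity of \(J\) with \(J(0)=0\) implies that \(J(s)/s\) is nonincreasing, so \(J(cr)\le cJ(r)\) for \(c\ge1\). Hence
\[
R\lesssim \frac{J(r)}{\sqrt N}\sqrt{1+\frac{CMR}{r^2}}\le\frac{J(r)}{\sqrt N}+\frac{J(r)}{r}\sqrt{\frac{CMR}{N}} .
\]
This is a quadratic inequality in \(\sqrt R\). Solving it, or applying \(ab\le a^2/2+b^2/2\) to the second term, yields
\[
R\lesssim \frac{J(r)}{\sqrt N}+\frac{MJ(r)^2}{r^2N},
\]
which is the claim.

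\textbf{Main obstacle.} The only delicate point is Step 3: it requires the self-referential bound to be closed using the sublinearity \(J(cr)\le cJ(r)\), which in turn requires checking that \(J\) is concave, including after truncation at \(M\). Measurability and countability of \(\mathcal G\) are handled in the usual way, by restricting to a countable dense subclass.
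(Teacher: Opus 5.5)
The paper gives no proof of this lemma. It imports it as a standard tool, citing Bartlett--Bousquet--Mendelson and Wainwright, so your proposal is a self-contained derivation of what the paper outsources.

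Your argument is correct. It is the standard proof of exactly this form of the bound, essentially the argument behind van der Vaart and Wellner's local maximal inequality under uniform entropy, rewritten for Rademacher averages. The steps are:
\begin{enumerate}
\item Dudley's bound conditionally on the sample at the empirical radius $\widehat r$.
\item Jensen's inequality, using concavity and monotonicity of $J$.
\item Symmetrization plus the Ledoux--Talagrand contraction, giving $\mathbb E\widehat r^{\,2}\le r^2+CM\,\mathcal R_N(\mathcal G,r;P)$.
\item Absorption via $J(cr)\le cJ(r)$ for $c\ge1$, which is legitimate because $\mathcal R_N(\mathcal G,r;P)\le M<\infty$.
\end{enumerate}
Compared with the bare citation, your route shows that the envelope enters only through the contraction step controlling the empirical radius. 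That step is the sole source of the term $MJ(r)^2/(r^2N)$. It also shows that the ``$1+$'' inside $J$ is what absorbs the single-function term in the chaining.

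The only implicit assumption is that $H$ is nonincreasing, which you need for concavity of $J$ when you say ``its integrand is nonincreasing.'' This costs nothing. The map $\epsilon\mapsto\inf_{0<\epsilon'\le\epsilon}H(\epsilon')$ still dominates $\log N\{\epsilon,\mathcal G,L^2(Q)\}$, because covering numbers are nonincreasing in $\epsilon$. It is nonincreasing and can only decrease $J$, while the right-hand side of the bound is increasing in $J$. With that remark, and your observation that truncation at $M$ preserves concavity (the truncated integrand is still nonincreasing), the proof is complete.
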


\subsection{Population calibration and refinement identities}

\begin{proof}[Proof of Theorem~\ref{thm:population-l2-calibration-improvement}]
Write \(\Pi_\omega f=\mathbb E_{\nu}(f\mid\omega)\). Since
\(\mathcal B_\pi^\star w_\pi=w_\pi\), the triangle inequality gives
\begin{equation}
\label{eq:l2-calibration-refinement-chain}
\begin{aligned}
  \|\omega-w_\pi\|_{L^1(\nu)}
  &\leq
  \|\omega-\Pi_\omega w_\pi\|_{L^1(\nu)}
  +
  \|\Pi_\omega w_\pi-w_\pi\|_{L^1(\nu)} \\
  &\leq
  \|\omega-\Pi_\omega\mathcal B_\pi^\star\omega\|_{L^1(\nu)}
  +
  \|\Pi_\omega(
    \mathcal B_\pi^\star\omega
    -\mathcal B_\pi^\star w_\pi)
  \|_{L^1(\nu)} \\
  &\qquad+
  \|\Pi_\omega w_\pi-w_\pi\|_{L^1(\nu)} \\
  &\leq
  \mathrm{Cal}_2(\omega)
  +
  \gamma\|\omega-w_\pi\|_{L^1(\nu)}
  +
  \mathrm{Ref}_2(\omega).
\end{aligned}
\end{equation}
The final step in \eqref{eq:l2-calibration-refinement-chain} uses the
\(L^1\)-contraction property of conditional expectation and
\begin{equation}
\label{eq:adjoint-bellman-l1-contraction}
  \|\mathcal B_\pi^\star\omega
      -\mathcal B_\pi^\star w_\pi\|_{L^1(\nu)}
  \leq
  \gamma\|\omega-w_\pi\|_{L^1(\nu)}.
\end{equation}
The left-hand side of \eqref{eq:adjoint-bellman-l1-contraction} is the
\(\nu\)-density of
\(\gamma\{(\omega-w_\pi)\nu\}P_\pi\), while a Markov kernel cannot increase
the total variation norm of a signed measure. The \(L^2\)-projection property
of conditional expectation also gives
\begin{equation}
\label{eq:true-ratio-projection-bound}
  \|\Pi_\omega w_\pi-w_\pi\|_{L^1(\nu)}
  \leq
  \|\Pi_\omega w_\pi-w_\pi\|_{L^2(\nu)}
  =
  \mathrm{Ref}_2(\omega).
\end{equation}
Combining \eqref{eq:l2-calibration-refinement-chain}--
\eqref{eq:true-ratio-projection-bound} and rearranging gives the inequality in
Theorem~\ref{thm:population-l2-calibration-improvement}.
\end{proof}

We use the adjoint Bellman \(L^2\) error
\(\|\mathcal B_\pi^\star\omega-\omega\|_{L^2(\nu)}\) to measure departure
from the adjoint Bellman fixed-point equation. Indeed, the
\(L^1(\nu)\) contraction of \(\mathcal B_\pi^\star\), its fixed-point
property, and the inequality
\(\|f\|_{L^1(\nu)}\leq\|f\|_{L^2(\nu)}\) give
\[
  \|\omega-w_\pi\|_{L^1(\nu)}
  \leq
  \frac{
    \|\mathcal B_\pi^\star\omega-\omega\|_{L^2(\nu)}
  }{1-\gamma}.
\]
Thus, the adjoint Bellman \(L^2\) error controls the \(L^1(\nu)\) error of
the estimated occupancy ratio.

\begin{lemma}[Fixed-image calibration--refinement identity]
\label{lem:fixed-image-calibration-refinement}
For every \(\omega\in L^2(\nu)\) such that
\(\mathcal B_\pi^\star\omega\in L^2(\nu)\), define
\[
  \mathrm{Ref}_{\mathrm{img},2}^2(\omega)
  :=
  \inf_a
  \|\mathcal B_\pi^\star\omega-a\circ\omega\|_{L^2(\nu)}^2,
\]
where the infimum is over measurable transformations
\(a:\mathbb R\to\mathbb R\). Then
\begin{equation}
\label{eq:l2-fixed-image-calibration-refinement}
  \|\mathcal B_\pi^\star\omega-\omega\|_{L^2(\nu)}^2
  =
  \mathrm{Ref}_{\mathrm{img},2}^2(\omega)
  +
  \mathrm{Cal}_2^2(\omega).
\end{equation}
Equivalently,
\begin{equation}
\label{eq:l2-fixed-image-postprocessing}
  \|\mathcal B_\pi^\star\omega-\omega\|_{L^2(\nu)}^2
  -
  \|\mathcal B_\pi^\star\omega
    -\Gamma_\omega\circ\omega\|_{L^2(\nu)}^2
  =
  \mathrm{Cal}_2^2(\omega).
\end{equation}
\end{lemma}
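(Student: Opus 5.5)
The identity is a Pythagorean decomposition in \(L^2(\nu)\) around the conditional expectation given the fitted value. Write \(\Pi_\omega f:=\mathbb E_{\nu}\{f(X)\mid\omega(X)\}\) for the orthogonal projection of \(L^2(\nu)\) onto the closed subspace \(L^2(\sigma(\omega))\) of square-integrable functions of \(\omega(X)\). By the definition \eqref{eq:population-calibration-function}, we have \(\Gamma_\omega\circ\omega=\Pi_\omega\mathcal B_\pi^\star\omega\) \(\nu\)-almost surely. This function is square integrable because \(\mathcal B_\pi^\star\omega\in L^2(\nu)\) and conditional expectation is an \(L^2\) contraction.

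Starting from \(\omega\in L^2(\nu)\), we split
\[
  \mathcal B_\pi^\star\omega-\omega
  =
  (\mathcal B_\pi^\star\omega-\Pi_\omega\mathcal B_\pi^\star\omega)
  +
  (\Gamma_\omega\circ\omega-\omega).
\]
The first term lies in the orthogonal complement of \(L^2(\sigma(\omega))\). The second term is a function of \(\omega(X)\), since \(\omega\) itself is one, so it lies in \(L^2(\sigma(\omega))\). Hence the cross term \(\mathbb E_{\nu}[(\mathcal B_\pi^\star\omega-\Pi_\omega\mathcal B_\pi^\star\omega)(\Gamma_\omega\circ\omega-\omega)]\) vanishes by the tower property. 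It follows that
\[
  \|\mathcal B_\pi^\star\omega-\omega\|_{L^2(\nu)}^2
  =
  \|\mathcal B_\pi^\star\omega-\Gamma_\omega\circ\omega\|_{L^2(\nu)}^2
  +
  \mathrm{Cal}_2^2(\omega).
\]
This is exactly \eqref{eq:l2-fixed-image-postprocessing}.

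To obtain \eqref{eq:l2-fixed-image-calibration-refinement}, it remains to show that \(\mathrm{Ref}_{\mathrm{img},2}^2(\omega)=\|\mathcal B_\pi^\star\omega-\Gamma_\omega\circ\omega\|_{L^2(\nu)}^2\). First, every measurable \(a\) with \(a\circ\omega\in L^2(\nu)\) gives an element of \(L^2(\sigma(\omega))\). By the Doob--Dynkin lemma, every element of that subspace has this form. Since \(\Gamma_\omega\) is itself such an \(a\), the infimum is attained at the best approximation, which is the projection \(\Pi_\omega\mathcal B_\pi^\star\omega=\Gamma_\omega\circ\omega\).

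The only care needed is for transformations \(a\) with \(a\circ\omega\notin L^2(\nu)\). For these, \(\|\mathcal B_\pi^\star\omega-a\circ\omega\|_{L^2(\nu)}=\infty\) because \(\mathcal B_\pi^\star\omega\in L^2(\nu)\), so they do not affect the infimum. Everything else is routine Hilbert-space projection, so I expect no real obstacle. The one point to handle is the identification of \(\Gamma_\omega\circ\omega\) with a version of the conditional expectation, which holds \(\nu\)-a.s. by definition of the regular conditional mean.
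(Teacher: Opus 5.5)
Your proof is correct and follows essentially the paper's argument: both are the Pythagorean decomposition around $\Gamma_\omega\circ\omega=\mathbb E_{\nu}(\mathcal B_\pi^\star\omega\mid\omega)$, with the cross term removed by the tower property. The paper writes the decomposition once for a general square-integrable $a\circ\omega$ and then specializes it (taking the infimum, and setting $a$ to the identity), whereas you handle the identity case first and identify the infimum via the projection; your remark about non-square-integrable $a$ supplies a detail the paper leaves implicit.
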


\begin{proof}
Let \(Y_\omega=\mathcal B_\pi^\star\omega(X)\) and
\(S_\omega=\omega(X)\). For every square-integrable measurable transform
\(a(S_\omega)\), conditional expectation gives
\begin{equation}
\label{eq:l2-conditional-decomposition}
\begin{aligned}
  \mathbb E_{\nu}\{Y_\omega-a(S_\omega)\}^2
  &=
  \mathbb E_{\nu}\{Y_\omega-\Gamma_\omega(S_\omega)\}^2 \\
  &\quad+
  \mathbb E_{\nu}\{\Gamma_\omega(S_\omega)-a(S_\omega)\}^2,
\end{aligned}
\end{equation}
because the cross term has conditional expectation zero given \(S_\omega\).
The infimum over \(a\) is attained at
\(a(S_\omega)=\Gamma_\omega(S_\omega)\). Taking the infimum in
\eqref{eq:l2-conditional-decomposition} gives
\eqref{eq:l2-fixed-image-calibration-refinement}; setting
\(a(S_\omega)=S_\omega\) gives \eqref{eq:l2-fixed-image-postprocessing}.
\end{proof}

\begin{proof}[Proof of Theorem~\ref{thm:population-kl-calibration-improvement}]
Let \(Y_\omega=\mathcal B_\pi^\star\omega\) and \(S_\omega=\omega(X)\). For
any positive measurable transformation \(a(S_\omega)\) with finite
divergence, the tower property gives
\begin{equation}
\label{eq:kl-conditional-decomposition}
  D_{\nu}^{\rm KL}(Y_\omega\|a\circ\omega)
  =
  D_{\nu}^{\rm KL}(Y_\omega\|\Gamma_\omega\circ\omega)
  +
  D_{\nu}^{\rm KL}(\Gamma_\omega\circ\omega\|a\circ\omega).
\end{equation}
Here the cross term is zero because
\(\log\{\Gamma_\omega(S_\omega)/a(S_\omega)\}\) is a function of \(S_\omega\)
and
\(\mathbb E_{\nu}\{Y_\omega\mid S_\omega\}=\Gamma_\omega(S_\omega)\).
The second divergence is nonnegative, so the infimum over
such transformations is attained at
\(a\circ\omega=\Gamma_\omega\circ\omega\).
Thus
\(\mathrm{Ref}_{\rm KL}(\omega)
=D_{\nu}^{\rm KL}(Y_\omega\|\Gamma_\omega\circ\omega)\). Taking
\(a(S_\omega)=S_\omega\) in \eqref{eq:kl-conditional-decomposition} gives the
decomposition in \eqref{eq:kl-fixed-image-calibration-refinement}.
\end{proof}

\paragraph{Relation to squared calibration error.}
A second-order Taylor expansion of \(z\mapsto z\log z\) shows that, if
\(\omega\) and \(\Gamma_\omega\circ\omega\) are bounded below by \(m\), then
\[
  D_{\nu}^{\rm KL}
  (\Gamma_\omega\circ\omega\|\omega)
  \le
  \frac{1}{2m}
  \|\Gamma_\omega\circ\omega-\omega\|_{L^2(\nu)}^2
  =
  \frac{1}{2m}\mathrm{Cal}_2^2(\omega).
\]

For a positive fitted ratio \(\omega\), define the true-ratio KL refinement
error by
\[
  \mathrm{Ref}_{\pi,{\rm KL}}(\omega)
  :=
  \inf_a
  D_{\nu}^{\rm KL}(w_\pi\|a\circ\omega),
\]
where the infimum is over measurable transformations
\(a:\mathbb R\to(0,\infty)\).

\begin{lemma}[KL calibration--refinement bound for occupancy ratios]
\label{lem:calibration-refinement-fixed-point}
Suppose that \(w_\pi>0\) \(\nu\)-almost surely. For every normalized
\(\omega>0\) for which \(\mathrm{Cal}_{\rm KL}(\omega)\) and
\(\mathrm{Ref}_{\pi,{\rm KL}}(\omega)\) are finite,
\[
  \|\omega-w_\pi\|_{L^1(\nu)}
  \leq
  \frac{1}{1-\gamma}
  \left\{
    \sqrt{2\,\mathrm{Cal}_{\rm KL}(\omega)}
    +
    \sqrt{2\,\mathrm{Ref}_{\pi,{\rm KL}}(\omega)}
  \right\}.
\]
\end{lemma}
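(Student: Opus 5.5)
We mimic the proof of Theorem~\ref{thm:population-l2-calibration-improvement}, replacing the $L^2$ projection steps with Pinsker-type bounds. Write $\Pi_\omega f=\mathbb E_{\nu}(f\mid\omega)$. Since $\mathcal B_\pi^\star w_\pi=w_\pi$, the triangle inequality gives
\[
  \|\omega-w_\pi\|_{L^1(\nu)}
  \le
  \|\omega-\Gamma_\omega\circ\omega\|_{L^1(\nu)}
  +\|\Pi_\omega(\mathcal B_\pi^\star\omega-\mathcal B_\pi^\star w_\pi)\|_{L^1(\nu)}
  +\|\Pi_\omega w_\pi-w_\pi\|_{L^1(\nu)}.
\]
Here we use $\Gamma_\omega\circ\omega=\Pi_\omega\mathcal B_\pi^\star\omega$. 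The middle term is at most $\gamma\|\omega-w_\pi\|_{L^1(\nu)}$, by the $L^1$ contraction of conditional expectation together with \eqref{eq:adjoint-bellman-l1-contraction}. Once the two outer terms are bounded, we rearrange and divide by $1-\gamma$.

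\textbf{Calibration term.} Both $\omega$ and $\Gamma_\omega\circ\omega$ are nonnegative with $\nu$-mean one. For $\omega$ this is normalization. For $\Gamma_\omega\circ\omega$, the tower property gives $\mathbb E_\nu\Gamma_\omega(\omega)=\mathbb E_\nu\mathcal B_\pi^\star\omega=(1-\gamma)+\gamma\,\mathbb E_\nu\omega=1$, since $(\omega\nu)P_\pi$ has total mass $\mathbb E_\nu\omega$. Hence the generalized KL divergence reduces to an ordinary KL divergence between the probability measures $(\Gamma_\omega\circ\omega)\nu$ and $\omega\nu$. Pinsker's inequality then gives $\|\Gamma_\omega\circ\omega-\omega\|_{L^1(\nu)}\le\sqrt{2\,\mathrm{Cal}_{\rm KL}(\omega)}$.

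\textbf{Refinement term.} Take any positive measurable $a$ with $D_\nu^{\rm KL}(w_\pi\|a\circ\omega)<\infty$. Exactly as in \eqref{eq:kl-conditional-decomposition}, with $Y=w_\pi$ and $\mathbb E_\nu(w_\pi\mid\omega)=\Pi_\omega w_\pi$, we get the Pythagorean identity
\[
D_\nu^{\rm KL}(w_\pi\|a\circ\omega)=D_\nu^{\rm KL}(w_\pi\|\Pi_\omega w_\pi)+D_\nu^{\rm KL}(\Pi_\omega w_\pi\|a\circ\omega).
\]
The cross term vanishes because $\log\{\Pi_\omega w_\pi/a(\omega)\}$ is $\omega$-measurable. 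Since the second summand is nonnegative, taking the infimum over $a$ gives $D_\nu^{\rm KL}(w_\pi\|\Pi_\omega w_\pi)\le\mathrm{Ref}_{\pi,{\rm KL}}(\omega)$. Both $w_\pi$ and $\Pi_\omega w_\pi$ are probability densities with respect to $\nu$, so Pinsker yields $\|\Pi_\omega w_\pi-w_\pi\|_{L^1(\nu)}\le\sqrt{2\,\mathrm{Ref}_{\pi,{\rm KL}}(\omega)}$.

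\textbf{Main obstacle.} The delicate step is justifying the Pythagorean identity. It needs $\Pi_\omega w_\pi>0$, which follows from $w_\pi>0$ almost surely. It also needs integrability of $w_\pi\log(\Pi_\omega w_\pi/a(\omega))$ so the tower property applies; this follows from finiteness of $\mathrm{Ref}_{\pi,{\rm KL}}(\omega)$ for a near-optimal $a$, via a truncation or monotone-convergence argument. A similar check is needed so that $\mathrm{Cal}_{\rm KL}(\omega)$ coincides with a genuine KL divergence of probability measures, which is handled above.
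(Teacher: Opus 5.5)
Your proposal is correct and follows essentially the same argument as the paper: the same three-term triangle decomposition through $\Pi_\omega\mathcal B_\pi^\star\omega$ and $\Pi_\omega w_\pi$, the $\gamma$-contraction bound on the middle term, Pinsker's inequality for the calibration term, and the conditional-expectation Pythagorean identity followed by Pinsker for the refinement term. Two small differences are harmless: you check explicitly that $\Gamma_\omega\circ\omega$ has $\nu$-mean one, and you use only the inequality $D_\nu^{\rm KL}(w_\pi\|\Pi_\omega w_\pi)\le\mathrm{Ref}_{\pi,{\rm KL}}(\omega)$ rather than the equality the paper states.
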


\begin{proof}
Write \(\Pi_\omega f=\mathbb E_{\nu}(f\mid\omega)\) and set
\(q_\omega=\Pi_\omega w_\pi\). For every positive measurable transform
\(a\circ\omega\) with finite divergence, conditional expectation gives
\[
  D_{\nu}^{\rm KL}(w_\pi\|a\circ\omega)
  =
  D_{\nu}^{\rm KL}(w_\pi\|q_\omega)
  +
  D_{\nu}^{\rm KL}(q_\omega\|a\circ\omega).
\]
The cross term is
\(
  \mathbb E_{\nu}[
    \{w_\pi-q_\omega\}\log\{q_\omega/(a\circ\omega)\}
  ]=0
\)
because the logarithm is a function of \(\omega\). Hence
\[
  \mathrm{Ref}_{\pi,{\rm KL}}(\omega)
  =
  D_{\nu}^{\rm KL}(w_\pi\|q_\omega),
\]
with the infimum attained at
\(a\circ\omega=q_\omega\).

Because \(\omega\) is normalized, so are
\(\Gamma_\omega\circ\omega\), \(q_\omega\), and \(w_\pi\). Using
\(w_\pi=\mathcal B_\pi^\star w_\pi\), conditional-expectation contraction,
the \(L^1(\nu)\)-contraction property of the adjoint Bellman update, and
Pinsker's inequality gives
\[
\begin{aligned}
  \|\omega-w_\pi\|_{L^1(\nu)}
  &\le
  \|\omega-\Pi_\omega\mathcal B_\pi^\star\omega\|_{L^1(\nu)}
  +
  \|\Pi_\omega(
    \mathcal B_\pi^\star\omega
    -\mathcal B_\pi^\star w_\pi)
  \|_{L^1(\nu)} \\
  &\qquad+
  \|q_\omega-w_\pi\|_{L^1(\nu)} \\
  &\le
  \sqrt{2\,\mathrm{Cal}_{\rm KL}(\omega)}
  +
  \gamma\|\omega-w_\pi\|_{L^1(\nu)}
  +
  \sqrt{2\,\mathrm{Ref}_{\pi,{\rm KL}}(\omega)}.
\end{aligned}
\]
Rearranging gives the bound in
Lemma~\ref{lem:calibration-refinement-fixed-point}.
\end{proof}

For any bounded measurable \(g:\mathcal X\to\mathbb R\), define the target
functional and, for \(\omega\in L^1(\nu)\), its weighted analogue by
\[
  \Psi_\pi(g)
  :=\mathbb E_{\mu_\pi}\{g(X)\}
  =\mathbb E_{\nu}\{w_\pi(X)g(X)\},
  \qquad
  \Psi_\omega(g):=\mathbb E_{\nu}\{\omega(X)g(X)\}.
\]

\begin{corollary}[Target-occupancy functional error]
\label{cor:calibrated-target-functional}
For any \(\omega\in L^2(\nu)\) with
\(\mathcal B_\pi^\star\omega\in L^2(\nu)\) and
\(w_\pi\in L^2(\nu)\),
\[
  \sup_{\|g\|_\infty\leq 1}
  \left|
    \Psi_\omega(g)-\Psi_\pi(g)
  \right|
  \leq
  \frac{
    \mathrm{Ref}_2(\omega)
    +
    \mathrm{Cal}_2(\omega)
  }{1-\gamma}.
\]
Moreover, for any \(\omega\in L^1(\nu)\) such that
\(w_\pi>0\) and \(\omega\geq0\) \(\nu\)-almost surely and
\(D_{\nu}^{\rm KL}(\omega\|w_\pi)<\infty\),
\[
  \sup_{\|g\|_\infty\leq 1}
  \left|
    \Psi_\omega(g)-\Psi_\pi(g)
  \right|
  \leq
  \left\{
    2\left[
      \mathbb E_{\nu}\{\omega(X)\}+1
    \right]
    D_{\nu}^{\rm KL}
    (\omega\|w_\pi)
  \right\}^{1/2}.
\]
\end{corollary}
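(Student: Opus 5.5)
The first bound follows from Theorem~\ref{thm:population-l2-calibration-improvement} together with the variational formula for the \(L^1(\nu)\) norm. For bounded \(g\) with \(\|g\|_\infty\le1\),
\[
  |\Psi_\omega(g)-\Psi_\pi(g)|
  =
  \left|\mathbb E_{\nu}[\{\omega(X)-w_\pi(X)\}g(X)]\right|
  \le
  \|\omega-w_\pi\|_{L^1(\nu)}.
\]
The hypotheses \(\omega,\mathcal B_\pi^\star\omega,w_\pi\in L^2(\nu)\) are exactly those of Theorem~\ref{thm:population-l2-calibration-improvement}. Invoking that theorem and taking the supremum over \(g\) gives the first display, since the bound does not depend on \(g\).

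For the second bound, we again reduce to \(\|\omega-w_\pi\|_{L^1(\nu)}\) via the same display. What remains is a Pinsker-type inequality for the generalized KL divergence between unnormalized nonnegative functions,
\[
  \|f-g\|_{L^1(\nu)}^2
  \le
  2\{\mathbb E_\nu f+\mathbb E_\nu g\}\,D_\nu^{\rm KL}(f\|g),
\]
applied with \(f=\omega\) and \(g=w_\pi\), using \(\mathbb E_\nu w_\pi=1\).

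We prove this inequality pointwise, then average. The first step is the scalar bound
\[
  (a-b)^2\le \tfrac{2}{3}(2a+b)\,\phi(a,b)\le 2(a+b)\,\phi(a,b),
  \qquad
  \phi(a,b):=a\log(a/b)-a+b\ge0,
\]
valid for \(a\ge0\) and \(b>0\). This is the classical inequality behind Pinsker's inequality. Dividing by \(b\) reduces it to the one-variable inequality
\[
  (t-1)^2\le\tfrac23(2t+1)(t\log t-t+1),
  \qquad t\ge0,
\]
which follows by checking that the difference vanishes together with its first derivative at \(t=1\) and has nonnegative second derivative. The case \(t=0\) holds directly, since \(1\le\tfrac23\).

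Then Cauchy--Schwarz gives
\[
  \mathbb E_\nu|\omega-w_\pi|
  =
  \mathbb E_\nu\frac{|\omega-w_\pi|}{\sqrt{\omega+w_\pi}}\sqrt{\omega+w_\pi}
  \le
  \Bigl\{\mathbb E_\nu\frac{(\omega-w_\pi)^2}{\omega+w_\pi}\Bigr\}^{1/2}
  \{\mathbb E_\nu(\omega+w_\pi)\}^{1/2}
  \le
  \{2D_\nu^{\rm KL}(\omega\|w_\pi)\}^{1/2}\{\mathbb E_\nu\omega+1\}^{1/2},
\]
which is the claim. The ratio is well defined because \(w_\pi>0\) almost surely.

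The main obstacle is verifying the scalar inequality cleanly, in particular near \(t=0\) and for large \(t\). If the constant \(\tfrac23(2t+1)\) proves awkward, the fallback is to establish the cruder bound \((t-1)^2\le2(t+1)(t\log t-t+1)\) directly by the same derivative check, which is all the statement requires. Integrability poses no problem, since \(\omega\in L^1(\nu)\) and the KL divergence is finite.
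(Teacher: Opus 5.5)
\paragraph{Overall structure.} Your route is the paper's. You bound each functional by $\|\omega-w_\pi\|_{L^1(\nu)}$ and invoke Theorem~\ref{thm:population-l2-calibration-improvement} for the first display. For the second display you combine the scalar bound $(a-b)^2\le 2(a+b)\{a\log(a/b)-a+b\}$ with Cauchy--Schwarz and $\mathbb E_\nu w_\pi=1$.

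\paragraph{The sharper scalar inequality is false.} The inequality you state first is wrong. Write $\phi(t)=t\log t-t+1$. The claim $(t-1)^2\le\frac23(2t+1)\phi(t)$ fails at $t=0$, where it reads $1\le\frac23$; your verification of that case is an arithmetic error. It also fails at $t=\frac12$, where the left side is $0.25$ and the right side is $\frac43\phi(\frac12)=\frac23(1-\log 2)\approx 0.205$. Expanding at $t=1+s$ shows why. The difference right side minus left side equals $\frac13 s^3+O(s^4)$, which is negative for small $s<0$. Its second derivative, $\frac83\log t+\frac{2}{3t}-\frac23$, is negative on part of $(0,1)$, for example at $t=\frac12$. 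So the convexity check you describe cannot succeed. You have swapped the roles of $a$ and $b$. The classical Pinsker lemma is $(a-b)^2\le\frac23(a+2b)\phi(a,b)$, that is, $(t-1)^2\le\frac23(t+2)\phi(t)$. This version still implies what you need, because $\frac23(a+2b)\le 2(a+b)$.

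\paragraph{Your fallback is correct.} The fallback is all the statement requires. Set $G(t)=2(t+1)\phi(t)-(t-1)^2$. Then $G(1)=G'(1)=0$, and $G''(t)=4\log t+2/t\ge 4(1-\log 2)>0$ for $t>0$. Together with $G(0)=1\ge 0$, this gives $G\ge0$ on $[0,\infty)$. This is the inequality the paper proves, written there as $F(t)=G(t)/(t+1)$ with $F''(t)=2/t-8/(t+1)^3\ge0$. With this replacement, the rest of your argument goes through as written: the Cauchy--Schwarz step, well-definedness of the ratio via $w_\pi>0$, and finiteness from $\omega\in L^1(\nu)$ and $D_\nu^{\rm KL}(\omega\|w_\pi)<\infty$.
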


\begin{proof}[Proof of Corollary~\ref{cor:calibrated-target-functional}]
For every \(g\) with \(\|g\|_\infty\leq1\),
\begin{equation}
\label{eq:functional-error-l1-bound}
  \left|\Psi_\omega(g)-\Psi_\pi(g)\right|
  =
  \left|
    \mathbb E_{\nu}\{(\omega-w_\pi)(X)g(X)\}
  \right|
  \leq
  \|\omega-w_\pi\|_{L^1(\nu)} .
\end{equation}
Combining \eqref{eq:functional-error-l1-bound} with
Theorem~\ref{thm:population-l2-calibration-improvement} proves the
squared-error calibration bound in the corollary.

For the generalized-Pinsker claim, the scalar inequality
\begin{equation}
\label{eq:scalar-generalized-pinsker}
  \frac{(a-b)^2}{a+b}
  \le
  2\left\{a\log\left(\frac{a}{b}\right)-a+b\right\},
  \qquad a,b\ge0,
\end{equation}
holds with the usual extended-value conventions. To verify it when \(b>0\),
set \(t=a/b\) and subtract the left-hand side of
\eqref{eq:scalar-generalized-pinsker} from its right-hand side. The resulting
function satisfies
\[
  F(t):=2(t\log t-t+1)-\frac{(t-1)^2}{t+1},
  \qquad
  F(1)=F'(1)=0,
  \qquad
  F''(t)=\frac2t-\frac8{(t+1)^3}\ge0,
\]
where the final inequality follows from \((t+1)^3\ge4t\). The cases \(b=0\)
follow from the extended-value convention.

For the generalized-KL bound in
Corollary~\ref{cor:calibrated-target-functional}, fix
\(\omega\in L^1(\nu)\) such that
\(w_\pi>0\) and \(\omega\geq0\) \(\nu\)-almost surely and
\(D_{\nu}^{\rm KL}(\omega\|w_\pi)<\infty\).
Inequality~\eqref{eq:scalar-generalized-pinsker} and Cauchy--Schwarz give
\begin{equation}
\label{eq:generalized-pinsker-l1-bound}
\begin{aligned}
  \|\omega-w_\pi\|_{L^1(\nu)}^2
  &\le
  \mathbb E_{\nu}\{\omega(X)+w_\pi(X)\}
  \mathbb E_{\nu}
  \left\{
    \frac{\{\omega(X)-w_\pi(X)\}^2}
         {\omega(X)+w_\pi(X)}
  \right\}\\
  &\le
  2\left[\mathbb E_{\nu}\{\omega(X)\}+1\right]
  D_{\nu}^{\rm KL}(\omega\|w_\pi),
\end{aligned}
\end{equation}
where \(\mathbb E_{\nu}w_\pi=1\). Combining
\eqref{eq:functional-error-l1-bound} and
\eqref{eq:generalized-pinsker-l1-bound} proves the generalized-KL bound.
\end{proof}

\begin{lemma}[KL contraction for the adjoint Bellman update]
\label{lem:calibration-adjoint-kl-contraction}
For any normalized \(\omega,\widetilde\omega\),
\begin{equation}
\label{eq:adjoint-bellman-kl-contraction}
  D_{\nu}^{\rm KL}(
    \mathcal B_\pi^\star\omega
    \|
    \mathcal B_\pi^\star\widetilde\omega
  )
  \le
  \gamma D_{\nu}^{\rm KL}(\omega\|\widetilde\omega).
\end{equation}
In particular,
\begin{equation}
\label{eq:adjoint-bellman-kl-fixed-point-contraction}
  D_{\nu}^{\rm KL}(\mathcal B_\pi^\star\omega\|w_\pi)
  \le
  \gamma D_{\nu}^{\rm KL}(\omega\|w_\pi).
\end{equation}
\end{lemma}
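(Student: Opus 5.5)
The plan is to reduce the statement to a contraction of the Markov kernel \(P_\pi\) in generalized KL, via the data-processing (joint convexity) inequality, and to handle the initial-state term by convexity of the KL functional in its pair of arguments. Since \(\omega\) and \(\widetilde\omega\) are normalized, the measures \(\omega\nu\) and \(\widetilde\omega\nu\) are probability measures, so \(D_{\nu}^{\rm KL}(\omega\|\widetilde\omega)=\mathrm{KL}(\omega\nu\,\|\,\widetilde\omega\nu)\). Moreover, \(\mathcal B_\pi^\star\omega\) is the \(\nu\)-density of the probability measure
\[
(1-\gamma)\mu_{0,\pi}+\gamma(\omega\nu)P_\pi ,
\]
and similarly for \(\widetilde\omega\). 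Thus the generalized KL on the left of \eqref{eq:adjoint-bellman-kl-contraction} is the ordinary KL divergence between these two probability measures, because densities relative to the common reference \(\nu\) cancel inside the log-ratio.

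First, I will invoke joint convexity of \(\mathrm{KL}\) in its two arguments for the mixture with weights \((1-\gamma,\gamma)\):
\[
\mathrm{KL}\bigl((1-\gamma)\mu_{0,\pi}+\gamma\mu_1\,\big\|\,(1-\gamma)\mu_{0,\pi}+\gamma\mu_2\bigr)
\le(1-\gamma)\,\mathrm{KL}(\mu_{0,\pi}\|\mu_{0,\pi})+\gamma\,\mathrm{KL}(\mu_1\|\mu_2),
\]
where \(\mu_1=(\omega\nu)P_\pi\) and \(\mu_2=(\widetilde\omega\nu)P_\pi\). The first term vanishes. Second, I will apply the data-processing inequality for the Markov kernel \(P_\pi\) to obtain \(\mathrm{KL}(\mu_1\|\mu_2)\le\mathrm{KL}(\omega\nu\|\widetilde\omega\nu)\). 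Together these give \eqref{eq:adjoint-bellman-kl-contraction}. Both facts follow from the joint convexity of \((a,b)\mapsto a\log(a/b)\) via the log-sum inequality. For data processing, I would write the joint law \((\omega\nu)(dx)P_\pi(dx'\mid x)\), note that the chain rule gives KL of the joint equal to \(\mathrm{KL}(\omega\nu\|\widetilde\omega\nu)\), and then use that KL of a marginal is at most KL of the joint.

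The fixed-point statement \eqref{eq:adjoint-bellman-kl-fixed-point-contraction} follows by taking \(\widetilde\omega=w_\pi\), which is normalized because \(\mu_\pi\) is a probability measure, and using \(\mathcal B_\pi^\star w_\pi=w_\pi\) from \eqref{eq:adjoint-bellman-fixed-point}.

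The main obstacle is measure-theoretic bookkeeping rather than the inequality itself. Two points need care. The first is absolute continuity: if \(D_{\nu}^{\rm KL}(\omega\|\widetilde\omega)=\infty\) there is nothing to prove. Otherwise \(\omega\nu\ll\widetilde\omega\nu\), hence \((\omega\nu)P_\pi\ll(\widetilde\omega\nu)P_\pi\), and both are \(\ll\nu\) by the standing assumption used to define \(\mathcal B_\pi^\star\). The second is the set where \(\widetilde\omega\) or its image vanishes, which is handled by the conventions \(0\log0=0\) and \(a\log(a/0)=\infty\) for \(a>0\). I would also remark that the generalized KL reduces to ordinary KL here only because normalization is preserved by \(\mathcal B_\pi^\star\): indeed \(\mathbb E_\nu\mathcal B_\pi^\star\omega=(1-\gamma)+\gamma\,\mathbb E_\nu\omega=1\), so the \(-f+g\) terms cancel.
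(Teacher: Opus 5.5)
Your proposal is correct and follows the paper's proof: both write \(\mathcal B_\pi^\star\omega\) as the \(\nu\)-density of the mixture \((1-\gamma)\mu_{0,\pi}+\gamma(\omega\nu)P_\pi\). Both then use joint convexity so that the shared initial component contributes nothing, apply data processing under \(P_\pi\), and specialize to \(\widetilde\omega=w_\pi\) via the fixed-point equation. Your extra bookkeeping fills in details the paper leaves implicit, namely that normalization is preserved by \(\mathcal B_\pi^\star\) (so the generalized KL reduces to ordinary KL), the absolute-continuity check, and the chain-rule proof of data processing.
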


\begin{proof}
In measure form,
\((\mathcal B_\pi^\star\omega)\nu=(1-\gamma)\mu_{0,\pi}
+\gamma(\omega\nu)P_\pi\). Therefore, joint convexity of KL divergence and
data processing under \(P_\pi\) give
\[
\begin{aligned}
  D_{\nu}^{\rm KL}(
    \mathcal B_\pi^\star\omega
    \|
    \mathcal B_\pi^\star\widetilde\omega
  )
  &\le
  \gamma
  D_{\rm KL}\{(\omega\nu)P_\pi\|(\widetilde\omega\nu)P_\pi\}\\
  &\le
  \gamma D_{\nu}^{\rm KL}(\omega\|\widetilde\omega).
\end{aligned}
\]
The data-processing inequality establishes
\eqref{eq:adjoint-bellman-kl-contraction}.
Taking \(\widetilde\omega=w_\pi\) and using its fixed-point property proves
\eqref{eq:adjoint-bellman-kl-fixed-point-contraction}.
\end{proof}

\begin{lemma}[KL fixed-point recursion for projected Bellman updates]
\label{lem:kl-fixed-point-projection-recursion}
Under the conditions of
Lemma~\ref{lem:calibration-adjoint-kl-contraction}, let
\(\omega,\omega^+\) be normalized. If, for some \(\varepsilon\ge0\),
\begin{equation}
\label{eq:kl-projection-one-step}
  D_{\nu}^{\rm KL}(\omega^+\|w_\pi)
  \le
  D_{\nu}^{\rm KL}(\mathcal B_\pi^\star\omega\|w_\pi)
  +
  \varepsilon,
\end{equation}
then
\[
  D_{\nu}^{\rm KL}(\omega^+\|w_\pi)
  \le
  \gamma D_{\nu}^{\rm KL}(\omega\|w_\pi)+\varepsilon .
\]
Consequently, if iterates \(\omega^{(k+1)}\) satisfy
\eqref{eq:kl-projection-one-step} with
\(\omega=\omega^{(k)}\) and errors \(\varepsilon_k\), then
\begin{equation}
\label{eq:kl-projection-recursion}
  D_{\nu}^{\rm KL}(\omega^{(K)}\|w_\pi)
  \le
  \gamma^K D_{\nu}^{\rm KL}(\omega^{(0)}\|w_\pi)
  +
  \sum_{k=0}^{K-1}\gamma^{K-1-k}\varepsilon_k .
\end{equation}
If \(\varepsilon_k\le\varepsilon\) for all \(k\), the summation term in
\eqref{eq:kl-projection-recursion} is at most
\((1-\gamma)^{-1}\varepsilon\).
\end{lemma}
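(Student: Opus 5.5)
The plan is to use Lemma~\ref{lem:calibration-adjoint-kl-contraction} together with the projection hypothesis \eqref{eq:kl-projection-one-step}, and then unroll the resulting one-step inequality by induction. Both $\omega$ and $\omega^+$ are normalized, and $\mathcal B_\pi^\star\omega$ is normalized as well, because
\[
\mathbb E_\nu\{\mathcal B_\pi^\star\omega\}=(1-\gamma)+\gamma\,\mathbb E_\nu\{\omega\}=1 .
\]
So every divergence that appears is a generalized KL divergence between normalized densities, which places us in the setting of that lemma.

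For the one-step bound, start from \eqref{eq:kl-projection-one-step}, which says $D_\nu^{\rm KL}(\omega^+\|w_\pi)\le D_\nu^{\rm KL}(\mathcal B_\pi^\star\omega\|w_\pi)+\varepsilon$. Then apply \eqref{eq:adjoint-bellman-kl-fixed-point-contraction} to the first term on the right. This uses the fixed-point identity $w_\pi=\mathcal B_\pi^\star w_\pi$ and gives $D_\nu^{\rm KL}(\mathcal B_\pi^\star\omega\|w_\pi)\le\gamma D_\nu^{\rm KL}(\omega\|w_\pi)$. Substituting yields the claimed inequality. If $D_\nu^{\rm KL}(\omega\|w_\pi)=\infty$, the bound holds trivially, so there is no integrability issue.

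For the recursion \eqref{eq:kl-projection-recursion}, write $d_k:=D_\nu^{\rm KL}(\omega^{(k)}\|w_\pi)$. The one-step bound gives $d_{k+1}\le\gamma d_k+\varepsilon_k$. Induction on $K$ then finishes the argument. The case $K=0$ is trivial. If the bound holds at $K$, multiply it by $\gamma$ and add $\varepsilon_K$. This gives
\[
\gamma^{K+1}d_0+\sum_{k=0}^{K-1}\gamma^{K-k}\varepsilon_k+\varepsilon_K ,
\]
which is exactly the claimed bound at $K+1$.

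When $\varepsilon_k\le\varepsilon$ for every $k$, bound the sum by a geometric series:
\[
\sum_{j=0}^{K-1}\gamma^j\varepsilon\le\frac{\varepsilon}{1-\gamma}.
\]

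I do not expect any real obstacle. The only point to check is that the contraction lemma requires normalization. For $\omega$ this is assumed in the statement, and for $w_\pi$ it holds because $\mathbb E_\nu w_\pi=1$.
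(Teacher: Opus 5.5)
Your proposal is correct and matches the paper's proof: it combines the one-step hypothesis \eqref{eq:kl-projection-one-step} with the fixed-point contraction \eqref{eq:adjoint-bellman-kl-fixed-point-contraction} and then iterates. Your explicit induction and geometric-series bound simply write out the step the paper summarizes as ``iterating this one-step inequality.''
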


\begin{proof}
The one-step bound \eqref{eq:kl-projection-one-step} and
Lemma~\ref{lem:calibration-adjoint-kl-contraction} give
\[
  D_{\nu}^{\rm KL}(\omega^+\|w_\pi)
  \le
  D_{\nu}^{\rm KL}(\mathcal B_\pi^\star\omega\|w_\pi)+\varepsilon
  \le
  \gamma D_{\nu}^{\rm KL}(\omega\|w_\pi)+\varepsilon .
\]
Iterating this one-step inequality gives
\eqref{eq:kl-projection-recursion} and completes the proof.
\end{proof}

\subsection{Proof of the isotonic \textnormal{\textsc{FORE}} KL-regret theorem}
\label{app:kl-regret-proof}

\begin{lemma}[Entropy of the isotonic log-calibration class]
\label{lem:isotonic-entropy}
Let \(\mathcal M_R\) be the class of nondecreasing functions
\(\mathbb R\to[-R,R]\). There is a universal constant \(C<\infty\) such that,
for every probability distribution \(Q\) on \(\mathbb R\) and every
\(0<\epsilon\le R\),
\begin{equation}
\label{eq:isotonic-log-entropy}
  \log N\{\epsilon,\mathcal M_R,L^2(Q)\}
  \le
  \log N_{[]}\{\epsilon,\mathcal M_R,L^2(Q)\}
  \le
  \frac{CR}{\epsilon}.
\end{equation}
Consequently, for the fixed initial occupancy-ratio estimate
\(\widehat w_\pi\),
\[
  \sup_Q
  \log N\{
    \epsilon,
    \{h\circ\widehat w_\pi:h\in\mathcal M_R\},
    L^2(Q)
  \}
  \le
  \frac{CR}{\epsilon},
\]
where the supremum is over probability distributions on \(\sX\).
\end{lemma}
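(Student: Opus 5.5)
The plan is to reduce everything to the classical bracketing-entropy bound for uniformly bounded monotone functions and then transfer it to the composed class by a pushforward argument. The first inequality in \eqref{eq:isotonic-log-entropy} is generic. If \([l,u]\) is an \(\epsilon\)-bracket in \(L^2(Q)\), then every \(h\) in it lies within \(L^2(Q)\)-distance \(\|u-l\|_{L^2(Q)}\le\epsilon\) of \(l\), since \(0\le h-l\le u-l\). So the lower endpoints of an \(\epsilon\)-bracketing cover form an \(\epsilon\)-cover, giving \(N\le N_{[]}\). These centers need not be monotone, but only external covering numbers are needed in Lemma~\ref{lem:localized-entropy-bound}.

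For the bracketing bound, I would rescale to reduce to the unit case. The map \(h\mapsto (h+R)/(2R)\) sends \(\mathcal M_R\) onto the class \(\mathcal M\) of nondecreasing maps \(\mathbb R\to[0,1]\). It transports \(\epsilon\)-brackets to \(\epsilon/(2R)\)-brackets, so
\[
  N_{[]}\{\epsilon,\mathcal M_R,L^2(Q)\}=N_{[]}\{\epsilon/(2R),\mathcal M,L^2(Q)\}.
\]
I would then invoke the classical bound \(\log N_{[]}\{\eta,\mathcal M,L^r(Q)\}\le K/\eta\), valid for every probability measure \(Q\) on \(\mathbb R\), every \(0<\eta\le1\), and \(r\ge1\), with \(K\) depending only on \(r\) (van der Vaart and Wellner, Theorem~2.7.5; see also van de Geer). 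Taking \(r=2\) and \(\eta=\epsilon/(2R)\le 1/2\) gives \(CR/\epsilon\) with \(C=2K\).

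If a self-contained argument were preferred, I would sketch the standard construction. First, apply the quantile transform \(t\mapsto F_Q(t)\), with a randomization at atoms, to reduce to monotone functions on \([0,1]\) under Lebesgue measure. Then build brackets by a dyadic/greedy scheme: partition the range \([0,1]\) into \(1/\eta\) levels, and on each level set use an adaptive partition whose cell sizes are chosen so that the total \(L^2\) bracket width is at most \(\eta\). Counting the resulting configurations yields \(\exp(K/\eta)\) brackets; the key point is that \(L^2\), rather than \(L^1\), is still of order \(1/\eta\) because the bracket widths are bounded by \(1\). This counting step is the only delicate part, and I would simply cite it rather than reprove it, since it is textbook and uniform in \(Q\).

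For the consequence, fix a probability measure \(Q\) on \(\sX\) and let \(Q_{\widehat w}:=Q\circ\widehat w_\pi^{-1}\) be the pushforward law on \(\mathbb R\); here \(\widehat w_\pi\) is fixed, because the training data are conditioned on. For any \(h_1,h_2\),
\[
  \|h_1\circ\widehat w_\pi-h_2\circ\widehat w_\pi\|_{L^2(Q)}=\|h_1-h_2\|_{L^2(Q_{\widehat w})}.
\]
Hence any \(\epsilon\)-cover \(\{g_k\}\) of \(\mathcal M_R\) in \(L^2(Q_{\widehat w})\) yields the \(\epsilon\)-cover \(\{g_k\circ\widehat w_\pi\}\) of the composed class in \(L^2(Q)\). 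Applying \eqref{eq:isotonic-log-entropy} with \(Q_{\widehat w}\) in place of \(Q\), and noting that the bound does not depend on the measure, gives the stated bound uniformly in \(Q\). Taking the supremum over \(Q\) completes the argument.
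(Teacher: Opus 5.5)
Your proposal is correct and follows the paper's proof. Both cite Theorem~2.7.5 of van der Vaart and Wellner for the bracketing bound, use that brackets yield covers, and pass to the composed class through the pushforward law of $\widehat w_\pi(X)$; your rescaling to $[0,1]$ just makes explicit the $R$-dependence that the paper leaves implicit. One caution about your optional sketch: bounded bracket widths only give $\|u-l\|_{L^2(Q)}^2\le\|u-l\|_{L^1(Q)}$, hence a $1/\eta^2$ bound, so the $1/\eta$ rate in $L^2$ really rests on the adaptive construction behind the cited theorem, and citing it, as you do, is what carries the argument.
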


\begin{proof}
The bracketing bound in \eqref{eq:isotonic-log-entropy} is Theorem~2.7.5 of
\citet{vanDerVaartWellner1996WeakConvergence}; the covering-number inequality
follows because bracketing numbers dominate covering numbers. For the
composition class, apply that theorem to the pushforward distribution of
\(\widehat w_\pi(X)\) under \(Q\). Composition with \(\widehat w_\pi\)
preserves the \(L^2\) bracket radius, so every bracket for \(\mathcal M_R\)
under the pushforward distribution induces an \(L^2(Q)\) bracket for
\(\{h\circ\widehat w_\pi:h\in\mathcal M_R\}\). This proves the entropy bound
for the composition class.
\end{proof}

\begin{lemma}[Localized complexity of bounded isotonic calibration]
\label{lem:explicit-isotonic-localized-complexity}
Let \(0<\tau\le M\), and let
\[
  \overline{\mathcal W}_N
  =
  \{u=\theta\circ\widehat w_\pi:
    \theta\text{ nondecreasing},\
    \tau\le u\le M\}.
\]
Define
\[
  \mathcal H_N=\{\log u:u\in\overline{\mathcal W}_N\},
  \qquad
  \mathcal H_N^\circ
  =
  \{h-\mathbb E_{\nu}h:h\in\mathcal H_N\},
  \qquad
  \mathcal H_{N,\Delta}
  =
  \{h_1-h_2:h_1,h_2\in\mathcal H_N^\circ\},
\]
and
\[
  \mathcal W_N^{\rm norm}
  =
  \left\{
    \frac{u}{\mathbb E_{\nu}u}:u\in\overline{\mathcal W}_N
  \right\}.
\]
For \(Q_{\nu,\Delta}\) and \(Q_{\nu,\pi}\) as in
\eqref{eq:calibration-critical-radius}, define
\[
  \mathcal G_{N,\times}
  =
  \{(x,x^+)\mapsto \omega(x)b(x^+):
    \omega\in\mathcal W_N^{\rm norm},\ b\in\mathcal H_{N,\Delta}\}.
\]
There is a constant \(C_{\tau,M}<\infty\) such that, for every \(r>0\),
\begin{equation}
\label{eq:isotonic-local-rademacher-bounds}
\begin{aligned}
  \mathcal R_N(\mathcal H_{N,\Delta},r;P)
  &\le
  C_{\tau,M}
  \left\{\sqrt{\frac{r}{N}}+\frac{1}{Nr}\right\},
  \qquad P\in\{\nu,\mu_{0,\pi}\},\\
  \mathcal R_N(\mathcal G_{N,\times},r;Q)
  &\le
  C_{\tau,M}
  \left\{\sqrt{\frac{r}{N}}+\frac{1}{Nr}\right\},
  \qquad Q\in\{Q_{\nu,\Delta},Q_{\nu,\pi}\}.
\end{aligned}
\end{equation}
Consequently, the corresponding critical radius satisfies
\[
  \mathfrak r_{N,\rm fit}
  \le
  C_{\tau,M}N^{-1/3}.
\]
Moreover, one may take
\[
  C_{\tau,M}
  \le
  C\{1+M/\tau\}^{4}
\]
for a universal finite \(C\).
\end{lemma}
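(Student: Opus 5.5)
The plan is to reduce every Rademacher average in \eqref{eq:isotonic-local-rademacher-bounds} to the localized entropy bound of Lemma~\ref{lem:localized-entropy-bound}. For that we need two things for each class: a uniform sup-norm envelope, and a uniform entropy bound of the form \(H(\epsilon)\lesssim c_{\tau,M}/\epsilon\). With such a bound, \(J(r)=\int_0^r\sqrt{1+c/\epsilon}\,d\epsilon\lesssim r+\sqrt{c\,r}\). Plugging into Lemma~\ref{lem:localized-entropy-bound} then gives
\[
\mathcal R_N\lesssim \frac{\sqrt{c r}+r}{\sqrt N}+\frac{M'(c r+r^2)}{r^2N},
\]
which is of the claimed order \(\sqrt{r/N}+1/(Nr)\) up to constants polynomial in \(M/\tau\). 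This holds in the relevant range \(r\lesssim\) envelope; for larger \(r\) the localization is vacuous and a trivial bound suffices.

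\textbf{Entropy for \(\mathcal H_{N,\Delta}\).} Each \(h\in\mathcal H_N\) equals \(\log\theta\circ\widehat w_\pi\) with \(\log\theta\) nondecreasing and taking values in \([\log\tau,\log M]\). Since \(\mathcal M_R\) is closed under constant shifts within a bounded range, Lemma~\ref{lem:isotonic-entropy} applies with \(R=\log(M/\tau)\le M/\tau\). Centering by \(\mathbb E_\nu h\) adds a one-dimensional bounded constant, which costs only an additive \(\log(R/\epsilon)\) term, absorbed into \(C R/\epsilon\). Differences \(h_1-h_2\) are covered by pairs of covers at radius \(\epsilon/2\), which doubles the entropy. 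The envelope is \(4R\).

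\textbf{Entropy for \(\mathcal G_{N,\times}\).} Elements of \(\mathcal W_N^{\rm norm}\) are nondecreasing transforms of \(\widehat w_\pi\) with values in \([\tau/M,M/\tau]\). Their entropy is handled directly by Lemma~\ref{lem:isotonic-entropy}, after absorbing the normalizing scalar as above. For a product \(\omega(x)b(x^+)\), use
\[
\|\omega b-\omega'b'\|_{L^2(Q)}\le \|b\|_\infty\|\omega-\omega'\|_{L^2(Q_1)}+\|\omega'\|_\infty\|b-b'\|_{L^2(Q_2)},
\]
where \(Q_1,Q_2\) are the marginals of \(Q\) on the two coordinates. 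This gives an entropy bound of order \((M/\tau)^2/\epsilon\) uniformly over \(Q\), with envelope \(4(M/\tau)\log(M/\tau)\).

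\textbf{Critical radius and constants.} Solving \(C_{\tau,M}(\sqrt{r/N}+1/(Nr))\le r^2\) gives \(r\asymp C_{\tau,M}^{2/3}N^{-1/3}\) from the first term. The second term requires only \(r\gtrsim (C/N)^{1/3}\), the same order. Hence \(\mathfrak r_{N,\rm fit}\lesssim N^{-1/3}\), with the constant inflated to a power of \(C_{\tau,M}\) (\(\ge1\)), which we absorb by redefining \(C_{\tau,M}\).

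The main obstacle is tracking the polynomial dependence on \(M/\tau\) to reach the exponent \(4\). The entropy constant is \(c\sim(M/\tau)^2\) and the envelope is \(M'\sim (M/\tau)^{2}\), the logarithm being bounded by \(M/\tau\). This gives a first term \(\sqrt{c}\sim M/\tau\) and a second term \(M'c\sim (M/\tau)^4\). The exponent \(4\) should therefore follow, possibly after using \(\log(M/\tau)\le M/\tau\) generously. The remaining care is in justifying uniformity over \(Q\) for the product class, which is achieved by the marginal reduction above.
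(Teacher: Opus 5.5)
Your proposal is correct and follows the paper's proof in all essentials. Both arguments obtain uniform $1/\epsilon$ entropy for $\mathcal H_{N,\Delta}$, $\mathcal W_N^{\rm norm}$, and (via a marginal product cover) $\mathcal G_{N,\times}$ from the monotone bracketing bound of Lemma~\ref{lem:isotonic-entropy}, then apply Lemma~\ref{lem:localized-entropy-bound} and solve $C_{\tau,M}(\sqrt{r/N}+1/(Nr))\le r^2$. The differences are only technical: you absorb the centering constant by covering a bounded interval of shifts, and you note that $\mathcal W_N^{\rm norm}$ is itself a monotone class with range $[\tau/M,M/\tau]$, whereas the paper uses the mixture measure $(Q+\nu)/2$ and a Lipschitz bound for normalization. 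Your product entropy constant of order $(M/\tau)^2$, tighter than the paper's $(1+M/\tau)^3$, together with $C_{\tau,M}^{2/3}\le C_{\tau,M}$ in the critical-radius step, gives the exponent $4$ cleanly without any further inflation of the constant.
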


\begin{proof}
Multiplying every raw weight by \(\tau^{-1}\) changes neither its centered
log-ratio representative nor its population-normalized ratio. We may therefore
work with raw weights in \([1,L]\), where \(L=M/\tau\). For any probability
measure \(Q\), Lemma~\ref{lem:isotonic-entropy}, applied to the pushforward law
of \(\widehat w_\pi(X)\), gives
\[
  \log N\{\epsilon,\mathcal H_N,L^2(Q)\}
  \le
  \frac{C(1+\log L)}{\epsilon}.
\]
To account for centering, set \(\overline Q=(Q+\nu)/2\). For
\(h,h'\in\mathcal H_N\),
\[
\begin{aligned}
  \|&(h-\mathbb E_{\nu}h)-(h'-\mathbb E_{\nu}h')\|_{L^2(Q)}
  \\
  \le
  \|h-h'\|_{L^2(Q)}+\|h-h'\|_{L^2(\nu)}
  \le
  2\sqrt 2\|h-h'\|_{L^2(\overline Q)}.
\end{aligned}
\]
Thus, the same uniform \(1/\epsilon\) entropy bound holds for
\(\mathcal H_N^\circ\) and, by a product-cover argument, for
\(\mathcal H_{N,\Delta}\).

The monotone bracketing bound on the weight scale gives
\[
  \log N\{\epsilon,\overline{\mathcal W}_N,L^2(Q)\}
  \le
  \frac{CL}{\epsilon}.
\]
Moreover, if \(\bar u=u/\mathbb E_{\nu}u\) and
\(\bar u'=u'/\mathbb E_{\nu}u'\), then
\[
  \|\bar u-\bar u'\|_{L^2(Q)}
  \le
  \|u-u'\|_{L^2(Q)}
  +
  L\|u-u'\|_{L^2(\nu)}.
\]
Applying the mixture argument with \((Q+\nu)/2\) shows that
\(\mathcal W_N^{\rm norm}\) also has uniform \(1/\epsilon\) entropy. Since
\(\mathcal W_N^{\rm norm}\) has envelope \(L\) and
\(\mathcal H_{N,\Delta}\) has envelope \(2\log L\), a product cover yields
\[
  \log N\{\epsilon,\mathcal G_{N,\times},L^2(Q)\}
  \le
  \frac{C(1+L)^3}{\epsilon}
\]
uniformly over \(Q\). Lemma~\ref{lem:localized-entropy-bound}, with
\(J(r)\le C(1+L)^{3/2}\sqrt r\), gives
\eqref{eq:isotonic-local-rademacher-bounds} with
\(C_{\tau,M}\le C(1+L)^4\).

Let
\[
  \mathfrak C_{N,\rm iso}(r)
  =
  \max\left\{
    \mathcal R_N(\mathcal H_{N,\Delta},r;\nu),
    \mathcal R_N(\mathcal H_{N,\Delta},r;\mu_{0,\pi}),
    \mathcal R_N(\mathcal G_{N,\times},r;Q_{\nu,\Delta}),
    \mathcal R_N(\mathcal G_{N,\times},r;Q_{\nu,\pi})
  \right\}.
\]
The Rademacher bounds imply \(\mathfrak C_{N,\rm iso}(r)\le r^2\) whenever
\(r\ge C_{\tau,M}N^{-1/3}\). Since
\(N^{-1/2}\le N^{-1/3}\),
the definition in \eqref{eq:calibration-critical-radius} gives
\(\mathfrak r_{N,\rm fit}\le C_{\tau,M}N^{-1/3}\), as claimed.
\end{proof}
 
\begin{proof}[Proof of Theorem~\ref{thm:isotonic-kl-regret}]
Set \(N=n\wedge m\) and
\(R_N:=\log(T_N/\varepsilon_{\rm b})\). For a measurable function \(h\),
define
\[
  \Lambda(h)
  :=
  \log\mathbb E_{\nu}\exp\{h(X)\},
  \qquad
  \omega_h
  :=
  \exp\{h-\Lambda(h)\}.
\]
Consider the bounded isotonic class
\[
  \overline{\mathcal H}_{{\rm b},N}
  :=
  \left\{
    g\circ\widehat w_\pi:
    g\ \text{is nondecreasing},\quad
    \log\varepsilon_{\rm b}
    \leq g\circ\widehat w_\pi
    \leq \log T_N
  \right\},
\]
and its empirical step-function subclass
\[
  \mathcal H_{{\rm b},N}
  :=
  \left\{
    g\circ\widehat w_\pi:
    g\in\mathcal F_{{\rm iso},n},\quad
    \log\varepsilon_{\rm b}
    \leq g\circ\widehat w_\pi
    \leq \log T_N
  \right\}.
\]
Because exponentiation preserves monotonicity,
\(\{\omega_h:h\in\mathcal H_{{\rm b},N}\}=\mathcal W_{{\rm b},N}\).

We first verify initialization and finiteness of the approximation error.
Since
\(\varepsilon_{\rm b}\leq1\leq T_N\), the zero function belongs to
\(\mathcal H_{{\rm b},N}\). Thus,
\(\widehat h_0=0\) and
\(\widehat\omega_{\rm KL}^{(0)}\equiv1\), as required by
Theorem~\ref{thm:fore-restated}. Moreover,
Condition~\ref{cond:kl-target} and Tonelli's theorem give
\[
  \mathbb E_{\nu}
  \left[
    \log\{1/w_\pi(X)\}
    \mathbf 1\{w_\pi(X)\leq1\}
  \right]
  \leq
  \int_0^\infty
  \{1\wedge Ae^{-\alpha s}\}\,ds
  <\infty.
\]
The opposite tail is also integrable because
\(\log w_\pi\leq w_\pi\) on \(\{w_\pi>1\}\) and
\(\mathbb E_{\nu}w_\pi=1\). Hence
\(D_{\nu}^{\rm KL}(1\|w_\pi)<\infty\).
Since \(1\in\mathcal W_{{\rm b},N}\), the infimum defining
\(\varepsilon_{{\rm iso},N}\) is finite.

We next verify the geometric conditions on the function classes. The class
\(\overline{\mathcal H}_{{\rm b},N}\) is convex because monotonicity and the
pointwise box constraints are preserved under convex combinations. To prove
closedness, let \(T=\widehat w_\pi(X)\), let \(P_T\) be its distribution under
\(\nu\), and suppose
\[
  h_j=g_j(T)\in\overline{\mathcal H}_{{\rm b},N},
  \qquad
  h_j\longrightarrow h
  \quad\text{in }L^2(\nu).
\]
The \(\sigma(T)\)-measurable functions form a closed subspace of
\(L^2(\nu)\), so \(h=g(T)\) almost surely for some measurable \(g\), and
\(g_j\to g\) in \(L^2(P_T)\). Along a subsequence, this convergence holds
pointwise on a set \(A\subseteq\mathbb R\) with \(P_T(A)=1\). Because every
\(g_j\) is nondecreasing, \(g\) is nondecreasing on \(A\). Define
\[
  \widetilde g(t)
  :=
  \sup\{g(s):s\in A,\ s\leq t\},
\]
with the supremum set equal to
\(\log\varepsilon_{\rm b}\) when the indexing set is empty. Then
\(\widetilde g\) is nondecreasing, agrees with \(g\) on \(A\), and takes
values in
\([\log\varepsilon_{\rm b},\log T_N]\). Therefore
\(h=\widetilde g(T)\) almost surely, proving that
\(\overline{\mathcal H}_{{\rm b},N}\) is closed in \(L^2(\nu)\).

Lemma~\ref{lem:isotonic-entropy} shows that
\(\overline{\mathcal H}_{{\rm b},N}\) is totally bounded. It is therefore
compact in \(L^2(\nu)\). For every
\(h\in\overline{\mathcal H}_{{\rm b},N}\),
\[
\begin{gathered}
  \log\varepsilon_{\rm b}\leq h(x)\leq\log T_N,
  \qquad
  \log\varepsilon_{\rm b}\leq\Lambda(h)\leq\log T_N,\\
  \|\log\omega_h\|_\infty
  =\|h-\Lambda(h)\|_\infty\leq R_N.
\end{gathered}
\]

If the behavior-sample fitted values have \(J\) distinct values, then
\(\mathcal H_{{\rm b},N}\) is parameterized by
\[
  \left\{
    (a_1,\ldots,a_J):
    \log\varepsilon_{\rm b}
    \leq a_1\leq\cdots\leq a_J
    \leq\log T_N
  \right\}.
\]
This parameter set is compact and convex, and the associated step-function
extension is continuous in \(L^2(\nu)\). Consequently,
\(\mathcal H_{{\rm b},N}\) is a closed, convex, compact subset of
\(\overline{\mathcal H}_{{\rm b},N}\) containing zero.

We now relate the algorithmic iterates to the population-normalized ratio
class. Define
\(\widehat\Lambda_n(h):=\log[n^{-1}\sum_{i=1}^n\exp\{h(X_i)\}]\). If
\(\widehat h_k\in\mathcal H_{{\rm b},N}\) is the fitted log-calibration
function at iteration \(k\), then
\(\widehat\omega_{\rm KL}^{(k)}
=\exp\{\widehat h_k-\widehat\Lambda_n(\widehat h_k)\}\).
Its population-normalized version is
\[
  \frac{\widehat\omega_{\rm KL}^{(k)}}
       {\mathbb E_{\nu}\widehat\omega_{\rm KL}^{(k)}}
  =
  \frac{\exp\{\widehat h_k\}}
       {\mathbb E_{\nu}\exp\{\widehat h_k(X)\}}
  =
  \omega_{\widehat h_k}
  \in
  \mathcal W_{{\rm b},N}.
\]
Thus, the restricted recursion in
Theorem~\ref{thm:fore-restated} has approximation error exactly
\(\varepsilon_{{\rm iso},N}\).

It remains to bound the statistical complexity. Apply
Lemma~\ref{lem:explicit-isotonic-localized-complexity} with
\[
  \tau=\varepsilon_{\rm b},
  \qquad
  M=T_N,
  \qquad
  \overline{\mathcal W}_N
  =
  \{\exp(h):h\in\overline{\mathcal H}_{{\rm b},N}\}.
\]
The lemma controls the centered difference class
\(\mathcal H_\Delta\) and the population-normalized product class
\(\mathcal G_\times\) in
\eqref{eq:calibration-critical-radius}, and gives
\[
  \mathfrak r_{N,\rm fit}^2
  \leq
  C_{\rm iso,N}N^{-2/3},
  \qquad
  C_{\rm iso,N}
  \leq
  C\{1+T_N/\varepsilon_{\rm b}\}^{8}.
\]

All conditions of Theorem~\ref{thm:fore-restated} and its uniform-subclass
extension are now verified. Condition~\ref{cond:iso-coverage} gives the
required smoothing bound uniformly over \(\mathcal W_N\) and the
population-normalized ratio class generated by
\(\overline{\mathcal H}_{{\rm b},N}\), while
Condition~\ref{cond:kl-target} gives the required lower-tail bound. Therefore,
with probability at least \(1-\delta\),
\[
\begin{aligned}
  D_{\nu}^{\rm KL}
  \bigl(\widehat\omega_{\rm KL}^{(K)}\|w_\pi\bigr)
  &\leq
  C_N\left(\frac{1+\gamma}{2}\right)^K
  D_{\nu}^{\rm KL}
  \bigl(\widehat\omega_{\rm KL}^{(0)}\|w_\pi\bigr)
  +
  \frac{C_N}{1-\gamma}\,
  \varepsilon_{{\rm iso},N}
  \\
  &\quad+
  \frac{C_N}{(1-\gamma)^2}
  \log^2(eN)
  \left\{
    N^{-2/3}
    +
    \frac{\log(1/\delta)}{N}
  \right\}.
\end{aligned}
\]
Finally, the polynomial bounds in
Theorem~\ref{thm:fore-restated} and
Lemma~\ref{lem:explicit-isotonic-localized-complexity} imply that, for
universal finite exponents \(p,q\),
\[
  C_N
  \leq
  C_0(A,\alpha)(1+K_0+K_+)^q
  \{1+(T_N/\varepsilon_{\rm b})^2\}^{p}.
\]
This proves the theorem.
\end{proof}

\subsection{Proof of finite-sample Bellman calibration}
\label{app:calibration-bound-proof}

\begin{proof}[Proof of the dual representation in
\eqref{eq:occupancy-calibration-dual}]
By \eqref{eq:bellman-moment-update},
\[
  \mathbb E_{\nu}[\{\mathcal B_\pi^\star\omega\}(X)g\{\omega(X)\}]
  =
  (1-\gamma)\mathbb E_{\mu_{0,\pi}}[g\{\omega(X_0)\}]
  +
  \gamma\mathbb E_{\nu}[\omega(X)g\{\omega(X^+)\}].
\]
Hence, the Bellman-balance functional in
\eqref{eq:occupancy-calibration-dual} equals
\[
  \mathbb E_{\nu}[
    \{\omega(X)-\mathcal B_\pi^\star\omega(X)\}g\{\omega(X)\}
  ].
\]
Let
\[
  r_\omega(X)
  =
  \mathbb E_{\nu}[
    \omega(X)-\mathcal B_\pi^\star\omega(X)
    \mid \omega(X)
  ].
\]
The tower property gives
\[
  \mathbb E_{\nu}[
    \{\omega(X)-\mathcal B_\pi^\star\omega(X)\}g\{\omega(X)\}
  ]
  =
  \mathbb E_{\nu}[
    r_\omega(X)g\{\omega(X)\}
  ].
\]
Taking the supremum over the unit ball of
square-integrable functions of \(\omega(X)\) gives
\(\|r_\omega\|_{L^2(\nu)}\), which is
\(\mathrm{Cal}_2(\omega)\) by
\eqref{eq:l2-calibration-refinement-errors}. This proves the dual
representation.
\end{proof}

\begin{lemma}[Exact empirical Bellman balance]
\label{lem:empirical-kkt-balance}
For every iteration \(k\ge0\) and every bounded measurable
\(g:\mathbb R\to\mathbb R\),
\begin{equation}
\label{eq:empirical-kkt-balance}
  P_n[
    \widehat\omega^{(k+1)}g\{\widehat\omega^{(k+1)}\}
  ]
  -
  \gamma Q_n[
    \widehat\omega^{(k)}(X)g\{\widehat\omega^{(k+1)}(X^+)\}
  ]
  -
  (1-\gamma)P_{0,m}[
    g\{\widehat\omega^{(k+1)}\}
  ]
  =0.
\end{equation}
\end{lemma}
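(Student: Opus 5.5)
The plan is to prove the identity in the finite-dimensional representation of Appendix~\ref{app:pava} and then translate it back to the empirical measures \(P_n\), \(Q_n\) and \(P_{0,m}\).

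\emph{Reduction to the separable problem.} Fix \(k\) and take the cells \(I_1,\ldots,I_J\) and the masses \(a_j,b_j\) from \eqref{eq:pava-cell-masses}. By the reduction in Appendix~\ref{app:pava}, the normalized iterate is \(\widehat\omega^{(k+1)}(x)=\widehat u_j\) whenever \(\widehat w_\pi(x)\in I_j\). Here \(\widehat u\) solves the separable problem \eqref{eqn:pava-separable}. The normalization step of Algorithm~\ref{alg:isotonic-bellman-calibration} removes any dependence on which minimizer \(\widehat h_{k+1}\) of the log objective was selected.

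Two facts about \(\widehat u\) will be used.
\begin{enumerate}
\item If \(\widehat u_j=0\), then \(b_j=0\). Otherwise the term \(-b_j\log\widehat u_j\) is infinite, which contradicts finiteness at the optimum.
\item \(\sum_j a_j\widehat u_j=1\). This follows from stationarity along the scaling path \(c\widehat u\): the map \(c\mapsto c\sum_j a_j\widehat u_j-\log c\sum_j b_j\) is minimized at \(c=1\), and \(\sum_j b_j=1\).
\end{enumerate}

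\emph{Perturbation and first-order condition.} Fix a bounded measurable \(g\). Define the path
\[
u_j(\varepsilon)=\widehat u_j\exp\{\varepsilon g(\widehat u_j)\}.
\]
Coordinates with equal values \(\widehat u_j\) receive the same factor, so ties are preserved. Distinct values are separated by a positive gap, so strict inequalities persist for all \(|\varepsilon|\) small. Zero coordinates stay zero. The path is therefore two-sided feasible for the constraint \(0\le u_1\le\cdots\le u_J\).

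Because \(g\) enters only through the finitely many values \(g(\widehat u_j)\), no continuity or regularity of \(g\) is needed. This also handles the obstacle I expect to be most delicate: feasibility under ties and boundary zeros.

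Along the path, the objective equals
\[
\sum_j a_j u_j(\varepsilon)-\sum_{j:\widehat u_j>0}b_j\{\log\widehat u_j+\varepsilon g(\widehat u_j)\}
\]
plus a constant; by fact 1, coordinates with \(\widehat u_j=0\) have \(b_j=0\) and contribute nothing. This is differentiable in \(\varepsilon\), and it is minimized at the interior point \(\varepsilon=0\). The derivative there must vanish:
\[
\sum_j a_j\widehat u_j g(\widehat u_j)=\sum_j b_j g(\widehat u_j).
\]
The terms with \(\widehat u_j=0\) have been restored on both sides. On the left they vanish because \(\widehat u_j=0\), and on the right because \(b_j=0\).

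\emph{Translation back to empirical measures.} The left side equals \(P_n[\widehat\omega^{(k+1)}g(\widehat\omega^{(k+1)})]\). This holds because \(a_j\) is the behavior frequency of cell \(j\) and \(\widehat\omega^{(k+1)}\) equals \(\widehat u_j\) on that cell. By fact 2, no renormalization factor appears.

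For the right side, substitute the definition of \(b_j\). Use that \(\widehat\omega^{(k+1)}(X_{0,\ell})=\widehat u_j\) when \(\widehat w_\pi(X_{0,\ell})\in I_j\), and likewise \(\widehat\omega^{(k+1)}(X_i^+)=\widehat u_j\) when \(\widehat w_\pi(X_i^+)\in I_j\). Together with \(\sum_i\widehat\omega^{(k)}(X_i)=n\), which holds by the empirical normalization of the iterates (trivially for \(k=0\)), the right side becomes
\[
(1-\gamma)P_{0,m}[g(\widehat\omega^{(k+1)})]+\gamma Q_n[\widehat\omega^{(k)}(X)g\{\widehat\omega^{(k+1)}(X^+)\}].
\]
Rearranging gives \eqref{eq:empirical-kkt-balance}.
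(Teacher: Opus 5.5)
Your proposal is correct and follows essentially the same route as the paper's proof: the same support-preserving tilt \(u_j(\varepsilon)=\widehat u_j\exp\{\varepsilon g(\widehat u_j)\}\), the same finiteness argument forcing \(b_j=0\) on zero blocks, the same first-order identity \(\sum_j a_j\widehat u_jg(\widehat u_j)=\sum_j b_jg(\widehat u_j)\), and the same translation back through the cell masses in \eqref{eq:pava-cell-masses}. The only additions are that you derive the normalization \(\sum_j a_j\widehat u_j=1\) from the scaling path and state the use of \(\sum_i\widehat\omega^{(k)}(X_i)=n\) explicitly; the paper takes both from the reduction in Appendix~\ref{app:pava}.
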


\begin{proof}
Let \(\widehat u_j\) be the normalized PAVA solution at the fitted-value
design point \(t_j\), so that \(\sum_j a_j\widehat u_j=1\). Finiteness of the
objective implies \(b_j=0\) whenever \(\widehat u_j=0\). For bounded \(g\),
consider the support-preserving perturbation
\(u_{\varepsilon,j}:=\widehat u_j
\exp\{\varepsilon g(\widehat u_j)\}\).
Zero blocks remain zero, tied positive values receive the same perturbation,
and the finitely many distinct positive fitted block values remain ordered for
all sufficiently small positive and negative \(\varepsilon\). Differentiating the
finite-dimensional objective along this two-sided feasible path gives
\[
  0
  =
  \sum_{j=1}^J a_j\widehat u_jg(\widehat u_j)
  -
  \sum_{j=1}^J b_jg(\widehat u_j).
\]
Substituting the definitions of \(a_j\) and \(b_j\), together with the
fitted-value cell convention in Appendix~\ref{app:pava}, yields
\eqref{eq:empirical-kkt-balance}.
\end{proof}

\begin{lemma}[Bounded-density condition for subexponential smoothing]
\label{lem:calibration-bounded-kernel-smoothing}
Suppose
\[
  \frac{d\mu_{0,\pi}}{d\nu}\le L_0
  \qquad \nu\text{-almost surely},
\]
and suppose that \(P_\pi(\cdot\mid x)\) admits a jointly measurable density
\(p_\pi(\cdot\mid x)\) relative to \(\nu\) satisfying
\[
  \operatorname*{ess\,sup}_{(x,y)\sim\nu\otimes\nu}
  p_\pi(y\mid x)\le L_P.
\]
Then Condition~\ref{cond:iso-coverage} holds with
\[
  K_0\le \frac{L_0}{\log 2},
  \qquad
  K_+\le \frac{2L_P}{\log 2}.
\]
The same constants verify the enlarged smoothing requirement over
\(\mathcal W_N\cup\mathcal W_{{\rm b},N}\) in
Theorem~\ref{thm:isotonic-kl-regret}.
\end{lemma}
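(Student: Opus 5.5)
We have to prove Lemma~\ref{lem:calibration-bounded-kernel-smoothing}. The plan is to bound each induced density pointwise by a constant and then convert that to a $\psi_1$ bound. The elementary fact used throughout is this: if $|Z|\le L$ almost surely, then $\mathbb E\exp(|Z|/s)\le e^{L/s}$, which is at most $2$ once $s\ge L/\log 2$. Hence $\|Z\|_{\psi_1}\le L/\log 2$.

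\emph{Initial coverage.} By assumption $d\mu_{0,\pi}/d\nu\le L_0$ $\nu$-a.s. It is also nonnegative, so the elementary fact gives $K_0\le L_0/\log 2$ immediately.

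\emph{One-step smoothing.} Fix any $\omega\ge 0$ with $\mathbb E_\nu\omega\le c$. The measure $(\omega\nu)P_\pi$ has density
\[
y\mapsto\int\omega(x)p_\pi(y\mid x)\,\nu(dx)
\]
with respect to $\nu$. This follows by Tonelli and joint measurability, since for any measurable set $B$ both sides integrate to $\int\omega(x)P_\pi(B\mid x)\,\nu(dx)$.

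Next we bound this density. The kernel bound holds $\nu\otimes\nu$-a.e., so by Fubini there is a $\nu$-full set of $y$ on which $p_\pi(y\mid x)\le L_P$ for $\nu$-a.e.\ $x$. On that set the density is at most $L_P\,\mathbb E_\nu\omega\le cL_P$.

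Every $\omega\in\mathcal W_N$ satisfies $\omega\ge 0$ and $\mathbb E_\nu\omega\le 2$. The density is therefore bounded by $2L_P$, and the elementary fact yields $K_+\le 2L_P/\log 2$. This bound is uniform in $N$ and in $\omega$.

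\emph{Enlarged class.} Elements of $\mathcal W_{{\rm b},N}$, and the normalized ratios induced by bounded isotonic $f$, are nonnegative with $\nu$-mean exactly $1\le 2$. The same argument applies to them with the better constant $L_P/\log 2$, which is still below the stated bound. So the enlarged supremum required by Theorem~\ref{thm:isotonic-kl-regret} is covered.

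The only delicate step is the measure-theoretic justification: identifying the density of $(\omega\nu)P_\pi$ and passing the a.e.\ joint bound through the integral. I would handle both with Tonelli applied to indicators of measurable sets together with a Fubini null-set argument. Everything else is the routine $\psi_1$ bound for bounded variables.
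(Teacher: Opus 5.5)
Your proposal is correct and follows essentially the same route as the paper. Both bound the one-step density $\int\omega(x)p_\pi(y\mid x)\,\nu(dx)$ by $L_P\,\mathbb{E}_\nu\omega\le 2L_P$ via Fubini, and both convert the pointwise bounds into $\psi_1$ bounds of the form $L/\log 2$. Both then reuse the argument at unit mass for the enlarged class. Your explicit Tonelli identification of the density and your null-set argument fill in details that the paper leaves implicit.
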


\begin{proof}
For any nonnegative \(\omega\) with
\(E_{\nu}\omega\le2\), Fubini's theorem gives
\[
  \frac{d\{(\omega\nu)P_\pi\}}{d\nu}(y)
  =
  \int \omega(x)p_\pi(y\mid x)\,d\nu(x)
  \le
  2L_P
\]
for \(\nu\)-almost every \(y\). Finally, if \(0\le Z\le L\), then
\(E\exp\{Z\log(2)/L\}\le2\), so
\(\|Z\|_{\psi_1}\le L/\log 2\). Applying this observation to the initial
density and the one-step density proves the bounds in
Lemma~\ref{lem:calibration-bounded-kernel-smoothing}. Every element of
\(\mathcal W_{{\rm b},N}\) has \(\nu\)-mass one. Applying the one-step
density bound with this normalization also verifies the enlarged smoothing
requirement.
\end{proof}

\begin{lemma}[Subexponential tails of the adjoint Bellman image and norm transfer]
\label{lem:coverage-norm-transfer}
Suppose Condition~\ref{cond:iso-coverage} holds. Let
\(\omega\in\mathcal W_N\), and put
\(Y_\omega:=\mathcal B_\pi^\star\omega\) and
\(K_{\rm sm}:=(1-\gamma)K_0+\gamma K_+\).
Then
\[
  \|Y_\omega\|_{\psi_1}\le K_{\rm sm},
\]
and, for every \(T\ge0\),
\[
  E_{\nu}(Y_\omega-T)_+
  \le2K_{\rm sm}e^{-T/K_{\rm sm}},
  \qquad
  \|(Y_\omega-T)_+\|_{L^2(\nu)}
  \le2K_{\rm sm}e^{-T/(2K_{\rm sm})}.
\]
If \(r\) is bounded, then, for every \(t\ge0\),
\begin{equation}
\label{eq:coverage-norm-transfer-bounds}
\begin{aligned}
  \|r\|_{L^2(\mu_{0,\pi})}^2
  &\le t\|r\|_{L^2(\nu)}^2
      +2K_0\|r\|_\infty^2e^{-t/K_0},\\
  Q_{\nu,\pi}\{\omega(X)^2r(X^+)^2\}
  &\le
  M_N\left\{
    t\|r\|_{L^2(\nu)}^2
    +2K_+\|r\|_\infty^2e^{-t/K_+}
  \right\}.
\end{aligned}
\end{equation}
Replacing \(K_0\) by \(K_+\) in the first inequality of
\eqref{eq:coverage-norm-transfer-bounds} also bounds
\(Q_{\nu,\pi}\{r(X^+)^2\}\).
\end{lemma}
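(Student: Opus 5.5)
The plan has three parts: write \(Y_\omega\) as a mixture of densities, convert subexponential norms into tail bounds, and then split the norm transfer on whether the relevant density is below or above a threshold \(t\).

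\emph{Subexponential norm of \(Y_\omega\).} In measure form,
\[
Y_\omega=(1-\gamma)\omega_0+\gamma\,d\{(\omega\nu)P_\pi\}/d\nu .
\]
Since \(\|\cdot\|_{\psi_1}\) is a norm (a Luxemburg norm, hence subadditive), Condition~\ref{cond:iso-coverage} gives \(\|Y_\omega\|_{\psi_1}\le(1-\gamma)K_0+\gamma K_+=K_{\rm sm}\) for \(\omega\in\mathcal W_N\).

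\emph{Tail bounds.} Let \(K=K_{\rm sm}\). From \(E e^{|Y|/K}\le2\), Markov's inequality gives \(\nu(Y>s)\le 2e^{-s/K}\). Integrating the tail,
\[
E(Y-T)_+=\int_T^\infty \nu(Y>s)\,ds\le 2Ke^{-T/K}.
\]
For the \(L^2\) bound, use \(E(Y-T)_+^2=\int_T^\infty 2(s-T)\,\nu(Y>s)\,ds\le 4K^2e^{-T/K}\). Taking square roots yields \(2Ke^{-T/(2K)}\).

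\emph{Norm transfer.} Write
\[
\|r\|_{L^2(\mu_{0,\pi})}^2=E_\nu\{\omega_0 r^2\}.
\]
Split on \(\omega_0\le t\) and \(\omega_0>t\). On the first event the contribution is at most \(t\|r\|_{L^2(\nu)}^2\). On the second it is at most \(\|r\|_\infty^2E_\nu\{\omega_0\mathbf 1(\omega_0>t)\}\). The step needing care is this tail term: applying the same integration as above, now keeping the extra mass \(t\,\nu(\omega_0>t)\), gives \(E\,\omega_0\mathbf 1(\omega_0>t)\le 2(t+K_0)e^{-t/K_0}\), which is not quite \(2K_0e^{-t/K_0}\). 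I would first try to absorb the \(t\)-dependence using \(e^{|Z|/K_0}\ge 1+|Z|/K_0\) or a more careful Markov step. If that fails, I would accept a slightly weaker constant, since the statement is only used with \(t\) of order \(K\log N\), where the difference is immaterial.

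For the transition term, use \(\omega\le M_N\) to get
\[
Q_{\nu,\pi}\{\omega(X)^2r(X^+)^2\}\le M_N\,E_\nu\{\omega(X)r(X^+)^2\}=M_N\int r^2\,d\{(\omega\nu)P_\pi\}.
\]
The density of \((\omega\nu)P_\pi\) has \(\psi_1\)-norm at most \(K_+\), so the same threshold split at level \(t\) applies with \(K_0\) replaced by \(K_+\).

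For \(Q_{\nu,\pi}\{r(X^+)^2\}\), take \(\omega\equiv1\). This lies in \(\mathcal W_N\) because it is the constant nondecreasing transform, so its one-step density is also controlled by \(K_+\), and the first inequality applies with \(K_+\) in place of \(K_0\).
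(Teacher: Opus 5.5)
\textbf{Where the gap is.} The Orlicz triangle inequality for $\|Y_\omega\|_{\psi_1}$, the two tail integrals, the reduction $\omega^2\le M_N\omega$, and the choice $\omega\equiv1\in\mathcal W_N$ all match the paper's proof. The gap is the one you flagged in the norm-transfer step. Bounding the piece on $\{\omega_0>t\}$ by $\|r\|_\infty^2E_\nu\{\omega_0\mathbf 1(\omega_0>t)\}$ only gives the remainder $2(t+K_0)\|r\|_\infty^2e^{-t/K_0}$, so the inequality as stated is not proved.

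Your suggested repairs cannot close it. We have $E_\nu\{\omega_0\mathbf 1(\omega_0>t)\}=t\,\nu(\omega_0>t)+E_\nu(\omega_0-t)_+$, and this can genuinely be of order $t\,e^{-t/K_0}$ under $\|\omega_0\|_{\psi_1}\le K_0$: let $\omega_0$ place mass of order $e^{-t/K_0}$ just above $t$. So no sharper Markov step will bring it down to $2K_0e^{-t/K_0}$ once $t$ is large relative to $K_0$.

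\textbf{The missing step.} Charge only the excess over the threshold to $\|r\|_\infty^2$. Pointwise,
$\omega_0r^2=(\omega_0\wedge t)r^2+(\omega_0-t)_+r^2\le t\,r^2+\|r\|_\infty^2(\omega_0-t)_+$. Hence
\[
\|r\|_{L^2(\mu_{0,\pi})}^2\le t\|r\|_{L^2(\nu)}^2+\|r\|_\infty^2E_\nu(\omega_0-t)_+\le t\|r\|_{L^2(\nu)}^2+2K_0\|r\|_\infty^2e^{-t/K_0},
\]
where the last step uses the tail-integral bound you already proved. Equivalently, in your event split, write $\omega_0=t+(\omega_0-t)$ on $\{\omega_0>t\}$ and merge the $t\,r^2$ term with the piece on $\{\omega_0\le t\}$.

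This is exactly the paper's argument. It is stated once for an arbitrary nonnegative density $s$ and then applied to $\omega_0$, to the density of $(\omega\nu)P_\pi$, and to that of $\nu P_\pi$. Your fallback of accepting the weaker constant would be harmless downstream, where $t\asymp\log(eN)$, but it does not establish the lemma as stated.
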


\begin{proof}
The Bellman image is
\[
  Y_\omega
  =
  (1-\gamma)\frac{d\mu_{0,\pi}}{d\nu}
  +
  \gamma\frac{d\{(\omega\nu)P_\pi\}}{d\nu}.
\]
The triangle inequality for the Orlicz norm gives
\(\|Y_\omega\|_{\psi_1}\le K_{\rm sm}\). If
\(\|Z\|_{\psi_1}\le K\), Markov's inequality gives
\(P(Z>s)\le2e^{-s/K}\). Tonelli's theorem therefore gives
\begin{equation}
\label{eq:subexponential-tail-integrals}
\begin{aligned}
  E(Z-T)_+
  &=\int_T^\infty P(Z>s)\,ds
    \le2Ke^{-T/K},\\
  E(Z-T)_+^2
  &=2\int_T^\infty(s-T)P(Z>s)\,ds
    \le4K^2e^{-T/K}.
\end{aligned}
\end{equation}
Applying \eqref{eq:subexponential-tail-integrals} with \(Z=Y_\omega\) proves
the two tail inequalities in Lemma~\ref{lem:coverage-norm-transfer}.

For any nonnegative density \(s\),
\begin{equation}
\label{eq:density-norm-transfer}
  \int r^2s\,d\nu
  \le
  t\|r\|_{L^2(\nu)}^2
  +\|r\|_\infty^2E_{\nu}(s-t)_+.
\end{equation}
Apply \eqref{eq:density-norm-transfer} to the initial density and to the
successor density generated by \(\omega\), using
\(\omega^2\le M_N\omega\). Taking
\(\omega\equiv1\), which belongs to every \(\mathcal W_N\), proves the final
norm-transfer bound.
\end{proof}

For \(V<\infty\), define the bounded-variation fitted-value class
\[
  \mathcal V_{\widehat w_\pi}(V)
  =
  \left\{
    x\mapsto v\{\widehat w_\pi(x)\}:
    \|v\|_\infty+\operatorname{TV}(v)\le V
  \right\}.
\]

\begin{lemma}[Bounded variation of the upper-truncated calibration residual]
\label{lem:calibration-residual-bv}
Assume Condition~\ref{cond:bv-residual}. Put
\(\omega=\widehat\omega^{(K)}\), \(T=M_N\), and
\(q_T(t):=\mathbb E_{\nu}[
\{\mathcal B_\pi^\star\omega\}(X)\mid \omega(X)=t]\wedge T\).
Then both \(\omega\) and
\(e_T=\omega-q_T(\omega)\) belong to
\(\mathcal V_{\widehat w_\pi}(V_T)\), where one may take
\(V_T=4T+V_{\rm cal}\).
\end{lemma}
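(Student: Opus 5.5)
The plan is to represent everything as functions of the scalar fitted value \(\widehat w_\pi(x)\) and to bound sup-norm and total variation separately.

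First, \(\omega=\widehat\omega^{(K)}\). By construction, \(\omega=\theta\circ\widehat w_\pi\) with \(\theta\) a nondecreasing step function: exponentiate the isotonic transformation \(\widehat h_K\) and divide by the empirical normalizer. Condition~\ref{cond:iso-bounded-iterates} gives \(0\le\omega\le M_N=T\) \(\nu\)-a.s., so we may modify \(\theta\) off the support and truncate it to \([0,T]\). The truncated map is still nondecreasing. Hence \(\|\theta\|_\infty\le T\), and since a monotone function on \([0,T]\) has \(\operatorname{TV}(\theta)\le T\), we get \(\|\theta\|_\infty+\operatorname{TV}(\theta)\le 2T\le V_T\).

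Second, the residual \(e_T=\omega-q_T(\omega)\). Since \(\omega(X)=\theta(\widehat w_\pi(X))\), we have \(e_T(x)=v(\widehat w_\pi(x))\) with \(v:=\theta-q_T\circ\theta\). Two estimates are needed.

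\textbf{Sup norm.} We have \(\mathcal B_\pi^\star\omega\ge0\), because it is the density of a nonnegative measure (the image of \(\omega\ge0\)). Therefore \(0\le q_T\le T\), and so \(\|v\|_\infty\le\|\theta\|_\infty+\|q_T\|_\infty\le 2T\).

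\textbf{Total variation.} By subadditivity, \(\operatorname{TV}(v)\le\operatorname{TV}(\theta)+\operatorname{TV}(q_T\circ\theta)\). The key step is the composition inequality \(\operatorname{TV}(q_T\circ\theta)\le\operatorname{TV}(q_T)\). This holds because \(\theta\) is monotone: for any partition \(s_0<\dots<s_k\), the images \(\theta(s_0)\le\dots\le\theta(s_k)\) form a (possibly degenerate) ordered partition of the range of \(\theta\). The variation sum of \(q_T\circ\theta\) over the \(s\)-partition is therefore a variation sum of \(q_T\) over an ordered sequence, and is bounded by \(\operatorname{TV}(q_T)\). Condition~\ref{cond:bv-residual} gives \(\operatorname{TV}(q_T)\le V_{\rm cal}\vee T\le V_{\rm cal}+T\). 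Combining, \(\|v\|_\infty+\operatorname{TV}(v)\le 2T+T+V_{\rm cal}+T=4T+V_{\rm cal}\).

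The main subtlety is measure-theoretic. The calibration function \(q_T\) is defined only \(P_{\omega(X)}\)-a.s. We must fix the version for which Condition~\ref{cond:bv-residual} holds, and check that composing with \(\theta\) gives a version of \(e_T\) valid \(\nu\)-a.s. This works because the conditioning variable is exactly \(\theta(\widehat w_\pi(X))\), so any version evaluated at \(\omega(X)\) agrees a.s. The other point to verify is that truncating \(\theta\) to \([0,T]\) does not change \(\omega\) \(\nu\)-a.s.; this follows from Condition~\ref{cond:iso-bounded-iterates}.
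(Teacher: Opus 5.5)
Your proof is correct and follows essentially the same route as the paper. Both write $\omega=\theta\circ\widehat w_\pi$ with $\theta$ nondecreasing and valued in $[0,T]$, bound $\|\theta\|_\infty+\operatorname{TV}(\theta)\le 2T$, use that composition with a nondecreasing map cannot increase total variation together with Condition~\ref{cond:bv-residual} to get $\|q_T\circ\theta\|_\infty+\operatorname{TV}(q_T\circ\theta)\le 2T+V_{\rm cal}$, and conclude by the triangle inequality. Your extra remarks make explicit two points the paper leaves implicit: that $\mathcal B_\pi^\star\omega\ge 0$, so $0\le q_T\le T$, and that truncating $\theta$ via Condition~\ref{cond:iso-bounded-iterates} changes $\omega$ only on a $\nu$-null set.
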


\begin{proof}
Write \(\omega=\theta\circ\widehat w_\pi\), where \(\theta\) is
nondecreasing and takes values in \([0,T]\). Hence
\[
\begin{aligned}
  \|\theta\|_\infty+\operatorname{TV}(\theta)
  &\le2T,\\
  \|q_T\circ\theta\|_\infty
  +\operatorname{TV}(q_T\circ\theta)
  &\le T+(V_{\rm cal}\vee T)
  \le2T+V_{\rm cal},\\
  \|\theta-q_T\circ\theta\|_\infty
  +\operatorname{TV}(\theta-q_T\circ\theta)
  &\le4T+V_{\rm cal}.
\end{aligned}
\]
Here the second line uses Condition~\ref{cond:bv-residual} and the fact that
composition with a nondecreasing map cannot increase total variation; the
third uses the triangle inequality for total variation.
Thus, both functions belong to
\(\mathcal V_{\widehat w_\pi}(V_T)\) with \(V_T=4T+V_{\rm cal}\).
\end{proof}

\begin{lemma}[Entropy of bounded-variation calibration classes]
\label{lem:bv-calibration-entropy}
Fix \(V<\infty\). There is a constant \(C_V<\infty\) such that, uniformly
over probability measures \(Q\),
\[
  \log N\{\epsilon,\mathcal V_{\widehat w_\pi}(V),L^2(Q)\}
  \le
  \frac{C_V}{\epsilon},
  \qquad 0<\epsilon\le 1 .
\]
The same bound, with a possibly larger \(C_V\), holds for the classes
\[
\begin{gathered}
  \{ab:a,b\in\mathcal V_{\widehat w_\pi}(V)\},\qquad
  \{(x,x^+)\mapsto a(x)b(x^+):
    a,b\in\mathcal V_{\widehat w_\pi}(V)\},\\
  \{(x,x^+)\mapsto b(x^+)^2:
    b\in\mathcal V_{\widehat w_\pi}(V)\}.
\end{gathered}
\]
For all four classes, the entropy constant and the uniform envelope may be
chosen no larger than \(C(1+V)^2\), for a universal \(C\).
\end{lemma}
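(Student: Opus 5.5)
The plan reduces everything to the one-dimensional structure of the class: every element of \(\mathcal V_{\widehat w_\pi}(V)\) has the form \(v\circ\widehat w_\pi\), with \(v\) a function on \(\mathbb R\) whose sup norm plus total variation is at most \(V\).

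\emph{Step 1: reduce to monotone classes.} By the Jordan decomposition, such a \(v\) can be written as \(v=v_1-v_2+c\), where \(v_1,v_2\) are nondecreasing with values in \([0,V]\) and \(|c|\le V\). For instance, take \(v_1\) to be the positive variation plus a shift and \(v_2\) the negative variation, and absorb the shift into the constant. Hence
\[
  \mathcal V_{\widehat w_\pi}(V)\subseteq\mathcal M_V\circ\widehat w_\pi-\mathcal M_V\circ\widehat w_\pi+[-V,V].
\]
Lemma~\ref{lem:isotonic-entropy} gives \(\log N\{\epsilon,\mathcal M_V\circ\widehat w_\pi,L^2(Q)\}\le CV/\epsilon\) uniformly over \(Q\). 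The constants in \([-V,V]\) have entropy \(\log(1+2V/\epsilon)\le C(1+V)/\epsilon\) for \(\epsilon\le1\). Covering each summand at scale \(\epsilon/3\) and using the triangle inequality gives
\[
  \log N\{\epsilon,\mathcal V_{\widehat w_\pi}(V),L^2(Q)\}\le C(1+V)/\epsilon
  \qquad\text{uniformly in }Q.
\]
The envelope is \(V\).

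\emph{Step 2: product classes.} Every element of the base class is bounded by \(V\). Therefore
\[
  \|ab-a'b'\|_{L^2(Q)}\le V\bigl(\|a-a'\|_{L^2(Q)}+\|b-b'\|_{L^2(Q)}\bigr).
\]
An \(\epsilon/(2V)\) cover of the base class thus induces an \(\epsilon\) cover of \(\{ab\}\). The log-cardinality is at most \(2\cdot 2CV(1+V)/\epsilon\le C(1+V)^2/\epsilon\), and the envelope is \(V^2\).

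For the two-argument class \((x,x^+)\mapsto a(x)b(x^+)\), fix a probability \(Q\) on \(\mathcal X\times\mathcal X\) with marginals \(Q_1,Q_2\). Then \(\|a(x)-a'(x)\|_{L^2(Q)}=\|a-a'\|_{L^2(Q_1)}\), and similarly for \(b\) with \(Q_2\). Because Step 1 is uniform in the measure, the marginal covers exist with the same bound, and the same product argument applies.

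The class \(b(x^+)^2\) is the special case \(a=b\) evaluated at the second coordinate, via \(|b^2-b'^2|\le 2V|b-b'|\).

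\emph{Step 3: constants and the main obstacle.} It remains to adjust the constants so they are at most \(C(1+V)^2\), and to restrict to \(\epsilon\le1\) so the constant-function covering term is absorbed. The only delicate point is this uniformity in \(Q\), together with making sure the Jordan decomposition yields \emph{bounded} monotone pieces, so that Lemma~\ref{lem:isotonic-entropy} with \(R=V\) applies. Measurability of the components is automatic, since monotone functions are Borel. I expect everything else to be routine bookkeeping.
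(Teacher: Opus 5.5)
Your proposal is correct and follows essentially the same route as the paper: a Jordan decomposition into bounded monotone pieces, the monotone entropy bound of Lemma~\ref{lem:isotonic-entropy} transferred through composition with \(\widehat w_\pi\), and marginal covers for the two-coordinate product class. The one minor difference is in the classes \(\{ab\}\) and \(b^2\). The paper uses \(\operatorname{TV}(v_av_b)\le\|v_a\|_\infty\operatorname{TV}(v_b)+\|v_b\|_\infty\operatorname{TV}(v_a)\) to place them inside \(\mathcal V_{\widehat w_\pi}(3V^2)\) and then applies the base bound directly. You instead use the Lipschitz product-cover estimate, which needs cover centers bounded by a multiple of \(V\), for example an internal cover; that adjustment is harmless.
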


\begin{proof}
Let \(\mathcal{BV}_V\) be the class of real functions \(v\) with
\(\|v\|_\infty+\operatorname{TV}(v)\le V\). By the Jordan decomposition, each
element of \(\mathcal{BV}_V\) is the difference of two nondecreasing functions
whose ranges are bounded by a constant depending only on \(V\). The monotone
bracketing bound used in Lemma~\ref{lem:isotonic-entropy} therefore gives
\[
  \sup_{\widetilde Q}
  \log N_{[]}\{\epsilon,\mathcal{BV}_V,L^2(\widetilde Q)\}
  \le
  \frac{C(1+V)}{\epsilon}.
\]
Composition with \(\widehat w_\pi\) transfers brackets under \(Q\) to brackets
under the pushforward distribution of \(\widehat w_\pi(X)\) when \(X\sim Q\).
Because bracketing numbers dominate covering numbers, this proves the entropy
bound for \(\mathcal V_{\widehat w_\pi}(V)\).

If \(a(x)=v_a\{\widehat w_\pi(x)\}\) and
\(b(x)=v_b\{\widehat w_\pi(x)\}\) with
\(v_a,v_b\in\mathcal{BV}_V\), then
\(\|v_av_b\|_\infty\le V^2\) and
\[
  \operatorname{TV}(v_av_b)
  \le
  \|v_a\|_\infty\operatorname{TV}(v_b)
  +
  \|v_b\|_\infty\operatorname{TV}(v_a)
  \le
  2V^2 .
\]
Thus the pointwise product class is contained in a bounded-variation class
with radius at most \(3V^2\), so the entropy bound for
\(\mathcal V_{\widehat w_\pi}(V)\) applies.
Taking \(v_a=v_b\) gives the bound for \(b^2\).

For the two-coordinate product class, let \(Q_1\) and \(Q_2\) be the two
marginals of \(Q\). If
\(\|a-a'\|_{L^2(Q_1)}\le\epsilon\) and
\(\|b-b'\|_{L^2(Q_2)}\le\epsilon\), then
\[
  \|a(X)b(X^+)-a'(X)b'(X^+)\|_{L^2(Q)}
  \le
  C_V\epsilon .
\]
Covering the two marginal classes at radius \(\epsilon/(2V)\) and multiplying
the covering numbers gives the entropy bound for the product class. The same
calculations bound the entropy constants and envelopes by \(C(1+V)^2\), as
claimed.
\end{proof}

\begin{lemma}[Local maximal inequality for \(1/\epsilon\)-entropy classes]
\label{lem:localized-maximal-bv}
Let \(\mathcal F\) be a uniformly bounded class satisfying
\[
  \sup_Q\log N\{\epsilon,\mathcal F,L^2(Q)\}
  \le
  \frac{A}{\epsilon},
  \qquad 0<\epsilon\le1 .
\]
If \(Z_1,\ldots,Z_N\sim P\) independently and
\(\rho_N=N^{-1/3}+\sqrt{\log(1/\delta)/N}\), then, with probability at least
\(1-\delta\),
\[
  |(P_N-P)f|
  \le
  C_{A,M}
  \left\{
    \rho_N\|f\|_{L^2(P)}
    +
    \rho_N^2
  \right\}
  \qquad\text{for all } f\in\mathcal F ,
\]
where \(M\) is a uniform envelope for \(\mathcal F\).
One may take \(C_{A,M}\le C(1+A+M)\), with \(C\) universal.
\end{lemma}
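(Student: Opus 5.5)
We prove Lemma~\ref{lem:localized-maximal-bv} by a peeling argument over shells of \(\|f\|_{L^2(P)}\). Each shell is handled by combining Lemma~\ref{lem:localized-entropy-bound}, to control the expected supremum, with Lemma~\ref{lem:tool-bousquet}, to obtain concentration. Replacing \(f\) by \(f-Pf\) changes the entropy and envelope only by constants, so we work with centered functions. Separability of the bounded class lets us pass to a countable dense subclass, which is what Bousquet's inequality requires.

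\textbf{Localized expectation bound.} For a radius \(r>0\), let \(\mathcal F_r=\{f\in\mathcal F:\|f\|_{L^2(P)}\le r\}\). Under the entropy hypothesis \(H(\epsilon)=A/\epsilon\), the truncated Dudley integral satisfies
\[
J(r)=\int_0^r\sqrt{1+A/\epsilon}\,d\epsilon\lesssim r+\sqrt{Ar}\lesssim(1+\sqrt A)\sqrt r \qquad (r\le M\vee1).
\]
Symmetrization and Lemma~\ref{lem:localized-entropy-bound} then give
\[
\phi_N(r):=\mathbb E\sup_{f\in\mathcal F_r}|(P_N-P)f|\lesssim (1+A)\left\{\sqrt{r/N}+\frac{M}{rN}\right\}.
\]
At the critical level \(r_N=N^{-1/3}\) we have \(\sqrt{r_N/N}=N^{-2/3}=r_N^2\) and \(M/(r_NN)=MN^{-2/3}\). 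Hence, for every \(r\ge r_N\),
\[
\phi_N(r)\lesssim(1+A+M)\,r\,r_N .
\]

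\textbf{Concentration and peeling.} Apply Lemma~\ref{lem:tool-bousquet} on each shell \(\mathcal F_{2^j\rho_N}\setminus\mathcal F_{2^{j-1}\rho_N}\), for \(j=0,1,\ldots,j_{\max}\). Here \(b=2M\) and \(v=(2^j\rho_N)^2\). Since \(\|f\|\le M\), we have \(j_{\max}\lesssim\log N\), and functions with \(\|f\|_{L^2(P)}\le\rho_N\) go into a base shell of radius \(\rho_N\). On shell \(j\), with probability at least \(1-e^{-u_j}\), the deviation is at most
\[
\phi_N(2^j\rho_N)+\sqrt{\frac{2u_j}{N}\left\{(2^j\rho_N)^2+4M\phi_N(2^j\rho_N)\right\}}+\frac{2Mu_j}{3N}.
\]
On that shell \(2^j\rho_N\le 2\max(\|f\|_{L^2(P)},\rho_N)\). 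Using \(\sqrt{u/N}\,2^j\rho_N\lesssim\sqrt{u/N}\,\max(\|f\|,\rho_N)\) and \(\sqrt{ab}\le a+b\) on the cross term, every term is bounded by a multiple of \(\rho_N\|f\|+\rho_N^2\). Two facts make this work: \(r_N\le\rho_N\), and \(\sqrt{\log(1/\delta)/N}\le\rho_N\) so that \(u/N\le\rho_N^2\).

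\textbf{Main obstacle.} The hard step is the union bound over shells without losing a \(\log\log N\) or \(\log N\) factor. We choose \(u_j=\log(1/\delta)+2\log(j+2)\), so that \(\sum_j e^{-u_j}\le\delta\). The extra \(\log(j+2)\) must then be absorbed. On shell \(j\), \(\sqrt{\log(j+2)/N}\cdot2^j\rho_N\) is dominated by \(2^j\rho_N\cdot N^{-1/3}\), because \(\log(j+2)\lesssim\log\log N\le N^{1/3}\) for all \(N\). The term \(M\log(j+2)/N\) is likewise at most \(M\rho_N^2\).

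Collecting constants, every contribution is linear in \(1+A+M\), after using \(\sqrt A\le1+A\) and \(M\phi\lesssim\) products absorbed via the elementary inequalities above. This yields \(C_{A,M}\le C(1+A+M)\) and proves the lemma.
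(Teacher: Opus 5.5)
Your proposal is correct and follows essentially the same route as the paper's proof. Both arguments use the localized Dudley bound $\sqrt{r/N}+1/(Nr)$, valid for $r\ge N^{-1/3}$, together with Bousquet's inequality on dyadic shells of radius $2^j\rho_N$ plus an inner ball of radius $\rho_N$. Both then take a union bound with $u_j=\log(1/\delta)+O(\log j)$, and the extra logarithm is absorbed because $N\rho_N^2\ge N^{1/3}$.

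One small caveat, which the paper shares: the term $MJ(r)^2/(r^2N)$ carries a factor of order $M(1+A)$. So this argument really gives $C_{A,M}\lesssim(1+A)(1+M)$ rather than a constant linear in $1+A+M$. This affects only constant tracking, not the rate.
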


\begin{proof}
Let \(u=\log(1/\delta)\) and
\(\mathcal F(r)=\{f\in\mathcal F:\|f\|_{L^2(P)}\le r\}\). The entropy
assumption and Lemma~\ref{lem:localized-entropy-bound} imply
\begin{equation}
\label{eq:localized-rademacher-rate}
  \mathcal R_N(\mathcal F,r;P)
  \le
  C(1+A+M)
  \left\{
    \sqrt{\frac{r}{N}}
    +
    \frac{1}{Nr}
  \right\}.
\end{equation}
For every radius \(r\ge\rho_N\ge N^{-1/3}\), the term \(1/(Nr)\) in
\eqref{eq:localized-rademacher-rate} is bounded by \(\sqrt{r/N}\).
Symmetrization therefore gives
\[
  \mathbb E
  \sup_{f\in\mathcal F(r)}
  |(P_N-P)f|
  \le
  2\mathcal R_N(\mathcal F,r;P)
  \le
  C_{A,M}\sqrt{\frac{r}{N}} .
\]
Apply Lemma~\ref{lem:tool-bousquet} to the centered class
\(\{f-Pf:f\in\mathcal F(r)\}\). The centered envelope is at most \(2M\), and
the variance is at most \(r^2\). Hence, with probability at least \(1-e^{-v}\),
\begin{equation}
\label{eq:localized-bousquet-bound}
  \sup_{f\in\mathcal F(r)}
  |(P_N-P)f|
  \le
  C_{A,M}
  \left\{
    \sqrt{\frac{r}{N}}
    +
    r\sqrt{\frac{v}{N}}
    +
    \frac{v}{N}
  \right\}.
\end{equation}
We apply \eqref{eq:localized-bousquet-bound} on the dyadic shells
\(\{f:2^{j-1}\rho_N<\|f\|_{L^2(P)}\le2^j\rho_N\}\), together with the ball
\(\{f:\|f\|_{L^2(P)}\le\rho_N\}\). Use
\(v_j=u+2\log(j+1)+\log(\pi^2/6)\) and take a union bound over \(j\). Since
\(\rho_N\ge N^{-1/3}\) and \(\rho_N^2N\ge u+N^{1/3}\), the Bousquet bound is
at most
\[
  C_{A,M}\{\rho_N r_j+\rho_N^2\},
  \qquad r_j=2^j\rho_N,
\]
on the \(j\)th shell and at most \(C_{A,M}\rho_N^2\) on the innermost ball.
Replacing \(r_j\) by \(\|f\|_{L^2(P)}\) and enlarging the numerical constant
gives the asserted local maximal inequality and the stated dependence on
\(A,M\). The four classes in
Lemma~\ref{lem:bv-calibration-empirical-process} have finite \(L^2(Q)\)
covering numbers at every positive radius. The local maximal inequality
therefore extends from a countable dense subclass to each full class by taking
limits over the net radius.
\end{proof}

\begin{lemma}[Localized empirical-process bound for bounded-variation calibration classes]
\label{lem:bv-calibration-empirical-process}
Fix \(V<\infty\) and condition on the training data used to fit
\(\widehat w_\pi\). Let
\[
  \rho_{n,m}
  =
  (n\wedge m)^{-1/3}
  +
  \sqrt{\frac{\log(1/\delta)}{n\wedge m}} .
\]
With conditional probability at
least \(1-\delta\), the following bounds hold simultaneously for
\(P_n\), \(Q_n\), and \(P_{0,m}\). For all
\(a,b\in\mathcal V_{\widehat w_\pi}(V)\),
\begin{equation}
\label{eq:bv-empirical-process-bounds}
\begin{aligned}
  |(P_n-\nu)(ab)|
  &\le
  C_V\{\rho_{n,m}\|ab\|_{L^2(\nu)}+\rho_{n,m}^2\},\\
  |(P_{0,m}-\mu_{0,\pi})a|
  &\le
  C_V\{\rho_{n,m}\|a\|_{L^2(\mu_{0,\pi})}+\rho_{n,m}^2\},\\
  |(Q_n-Q_{\nu,\pi})\{a(X)b(X^+)\}|
  &\le
  C_V\{\rho_{n,m}\|a(X)b(X^+)\|_{L^2(Q_{\nu,\pi})}+\rho_{n,m}^2\},\\
  |(Q_n-Q_{\nu,\pi})\{b(X^+)^2\}|
  &\le
  C_V\{\rho_{n,m}\|b(X^+)^2\|_{L^2(Q_{\nu,\pi})}+\rho_{n,m}^2\},
\end{aligned}
\end{equation}
where \(Q_{\nu,\pi}\) is the law of \((X,X^+)\) under
\(X\sim\nu\) and \(X^+\mid X\sim P_\pi(\cdot\mid X)\).
The constant in all four inequalities in
\eqref{eq:bv-empirical-process-bounds} may be chosen no larger than
\(C(1+V)^2\).
\end{lemma}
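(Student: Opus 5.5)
The plan is to reduce Lemma~\ref{lem:bv-calibration-empirical-process} to four applications of the local maximal inequality in Lemma~\ref{lem:localized-maximal-bv}, one for each of the three empirical measures, followed by a union bound. The four function classes in \eqref{eq:bv-empirical-process-bounds} are
\[
\begin{gathered}
\mathcal F_1=\{ab:a,b\in\mathcal V_{\widehat w_\pi}(V)\}\ \text{under }\nu,\qquad
\mathcal F_2=\mathcal V_{\widehat w_\pi}(V)\ \text{under }\mu_{0,\pi},\\
\mathcal F_3=\{(x,x^+)\mapsto a(x)b(x^+)\}\ \text{under }Q_{\nu,\pi},\qquad
\mathcal F_4=\{(x,x^+)\mapsto b(x^+)^2\}\ \text{under }Q_{\nu,\pi}.
\end{gathered}
\]
Lemma~\ref{lem:bv-calibration-entropy} gives each of them a uniform entropy bound of the form $C_V/\epsilon$ over all probability measures $Q$, with entropy constant and uniform envelope at most $C(1+V)^2$. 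Because $\widehat w_\pi$ is fixed once we condition on the training data, these classes are deterministic. The samples $\{X_i\}$, $\{(X_i,X_i^+)\}$ and $\{X_{0,j}\}$ are then i.i.d.\ from $\nu$, $Q_{\nu,\pi}$ and $\mu_{0,\pi}$ respectively, so the hypotheses of Lemma~\ref{lem:localized-maximal-bv} hold conditionally.

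The steps are as follows.
\begin{enumerate}
\item Apply Lemma~\ref{lem:localized-maximal-bv} to $\mathcal F_1$ with sample size $n$, to $\mathcal F_2$ with sample size $m$, and to $\mathcal F_3$ and $\mathcal F_4$ with sample size $n$. Use confidence level $\delta/4$ in each application. This yields, on each event, a bound of the form $C\{\rho\|f\|_{L^2(P)}+\rho^2\}$, where $\rho$ is the corresponding $\rho_n$ or $\rho_m$ with $\log(4/\delta)$ in place of $\log(1/\delta)$.
\item Note that $\rho_n\le\rho_{n,m}$ and $\rho_m\le\rho_{n,m}$, since $N^{-1/3}$ and $N^{-1/2}$ are both decreasing in $N$ and $n\wedge m\le n,m$.
\item Absorb $\log(4/\delta)\le\log 4+\log(1/\delta)$ into the constant. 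This is possible because $\sqrt{\log 4/N}\le CN^{-1/3}$ whenever $N\ge 1$.
\item Take a union bound over the four events to obtain all bounds simultaneously with probability at least $1-\delta$.
\item Track constants: since $C_{A,M}\le C(1+A+M)$ and $A,M\le C(1+V)^2$, the final constant is at most $C(1+V)^2$.
\end{enumerate}

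The main point needing care is measurability and separability, so that Bousquet's inequality (Lemma~\ref{lem:tool-bousquet}, stated for countable classes) is legitimate. I would handle this as Lemma~\ref{lem:localized-maximal-bv} already indicates. Each class is totally bounded in $L^2$ under the relevant measure and is uniformly bounded, so $f\mapsto(P_N-P)f$ is $L^2$-continuous on it, almost surely along the finite sample. The supremum over a countable dense subclass therefore equals the supremum over the full class. Given that reduction, the proof is a routine instantiation.
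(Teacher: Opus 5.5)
Your proposal is correct and follows the paper's proof: Lemma~\ref{lem:bv-calibration-entropy} verifies the $1/\epsilon$ entropy condition for the four classes, and Lemma~\ref{lem:localized-maximal-bv} is applied to each with failure probability $\delta/4$. A union bound then gives the bounds simultaneously, and enlarging $\rho_n,\rho_m$ to $\rho_{n,m}$ finishes the argument; your explicit absorption of $\log 4$ into the constant is a detail the paper leaves implicit.

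Your measurability aside is unnecessary, because Lemma~\ref{lem:localized-maximal-bv} already gives the bound for every $f$ in the full class. As phrased it is also slightly off: $f\mapsto P_Nf$ is not continuous in $L^2(P)$, so a direct countable reduction would need pointwise approximation of each class member by a countable subclass, not $L^2$ total boundedness.
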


\begin{proof}
Lemma~\ref{lem:bv-calibration-entropy} verifies the entropy condition in
Lemma~\ref{lem:localized-maximal-bv} for each of the four classes appearing in
the statement. Applying Lemma~\ref{lem:localized-maximal-bv} to the
corresponding samples, with failure probability \(\delta/4\) for each class,
and taking a union bound gives simultaneous control. Since
\(n\wedge m\le n,m\), replacing either sample size by \(n\wedge m\) only
enlarges the upper bounds. This proves the lemma.
\end{proof}

\begin{lemma}[Truncated calibration comparison]
\label{lem:truncated-calibration-bridge}
Put
\[
  \omega=\widehat\omega^{(K)},\qquad
  \omega^-=\widehat\omega^{(K-1)},\qquad
  T=M_N,
\]
and define
\[
\begin{aligned}
  Y&=\mathcal B_\pi^\star\omega,&
  \Gamma_\omega(t)&=E_{\nu}\{Y(X)\mid\omega(X)=t\},\\
  q_T(t)&=\Gamma_\omega(t)\wedge T,&
  e_T&=\omega-q_T(\omega),\\
  \eta_2(T)&=\|(Y-T)_+\|_{L^2(\nu)}.&
\end{aligned}
\]
Then
\begin{equation}
\label{eq:truncated-calibration-error-bound}
  \mathrm{Cal}_2(\omega)
  \le
  \|e_T\|_{L^2(\nu)}+\eta_2(T),
\end{equation}
and
\begin{equation}
\label{eq:truncated-kkt-comparison}
\begin{aligned}
  \|e_T\|_{L^2(\nu)}^2
  \le{}&
  |(\nu-P_n)(\omega e_T)|
  +(1-\gamma)|(\mu_{0,\pi}-P_{0,m})e_T|\\
  &+\gamma\left|
    (Q_{\nu,\pi}-Q_n)\{\omega(X)e_T(X^+)\}
  \right|\\
  &+\gamma\left|
    Q_n\{(\omega-\omega^-)(X)e_T(X^+)\}
  \right|.
\end{aligned}
\end{equation}
\end{lemma}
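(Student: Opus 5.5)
The plan is to prove the two displays separately. Both rest on three facts: the decomposition \(\Gamma_\omega\circ\omega-\omega=-e_T+(\Gamma_\omega(\omega)-T)_+\); the bound \(0\le\omega\le M_N=T\) from Condition~\ref{cond:iso-bounded-iterates}; and the exact empirical balance of Lemma~\ref{lem:empirical-kkt-balance}.

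\emph{First display.} Since \(q_T=\Gamma_\omega\wedge T\), we have \(\Gamma_\omega(\omega)-\omega=\{q_T(\omega)-\omega\}+\{\Gamma_\omega(\omega)-T\}_+\). The first bracket is \(-e_T\). For the second, note that \(x\mapsto(x-T)_+\) is convex and \(\Gamma_\omega(\omega)=\mathbb E_\nu(Y\mid\omega)\). Conditional Jensen therefore gives \(\{\Gamma_\omega(\omega)-T\}_+\le\mathbb E_\nu\{(Y-T)_+\mid\omega\}\) pointwise. Applying conditional Jensen again for the square, together with \(L^2\)-contraction of conditional expectation, gives \(\|\{\Gamma_\omega(\omega)-T\}_+\|_{L^2(\nu)}\le\eta_2(T)\). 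The triangle inequality then yields \eqref{eq:truncated-calibration-error-bound}. Lemma~\ref{lem:coverage-norm-transfer} guarantees that \(Y\) has a finite \(\psi_1\) norm, so every quantity here is finite.

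\emph{Second display, population step.} Write \(\|e_T\|_{L^2(\nu)}^2=\mathbb E_\nu[e_T\{\omega-q_T(\omega)\}]\) and split \(\omega-q_T(\omega)=\{\omega-\Gamma_\omega(\omega)\}+\{\Gamma_\omega(\omega)-T\}_+\). The truncation term has a sign. On the set \(\{\Gamma_\omega(\omega)>T\}\) we have \(q_T(\omega)=T\ge\omega\), so \(e_T\le0\) there. Hence \(\mathbb E_\nu[e_T\{\Gamma_\omega(\omega)-T\}_+]\le0\), and
\[
\|e_T\|_{L^2(\nu)}^2\le\mathbb E_\nu[e_T\{\omega-\Gamma_\omega(\omega)\}]=\mathbb E_\nu[e_T(\omega-Y)].
\]
The last equality is the tower property, which applies because \(e_T\) is a bounded function of \(\omega(X)\). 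Now use the moment identity \eqref{eq:bellman-moment-update} with \(f=e_T\). This rewrites the right-hand side as
\[
\nu(\omega e_T)-(1-\gamma)\mu_{0,\pi}(e_T)-\gamma Q_{\nu,\pi}\{\omega(X)e_T(X^+)\}.
\]

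\emph{Second display, empirical step.} Here \(e_T=g(\omega)\) with \(g(t)=t-q_T(t)\), which is bounded. Lemma~\ref{lem:empirical-kkt-balance} at iteration \(K\), with this \(g\), gives
\[
0=P_n(\omega e_T)-(1-\gamma)P_{0,m}(e_T)-\gamma Q_n\{\omega^-(X)e_T(X^+)\}.
\]
Subtract this from the population expression. Then substitute \(\omega^-=\omega-(\omega-\omega^-)\) inside the \(Q_n\) term. This produces exactly the three centered empirical-process terms plus the iteration term \(\gamma Q_n\{(\omega-\omega^-)(X)e_T(X^+)\}\). Taking absolute values gives \eqref{eq:truncated-kkt-comparison}.

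\emph{Expected obstacle.} The main point needing care is the sign argument for the truncation remainder. It is exactly where \(\omega\le T\) is used, and it is what lets the comparison avoid any \(\eta\) term in \eqref{eq:truncated-kkt-comparison}. The application of the empirical balance lemma to the particular test function \(g(t)=t-q_T(t)\) is routine once boundedness is checked.
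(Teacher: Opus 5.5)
Your proof is correct and follows the paper's argument essentially step for step. For the first display you use the same decomposition $\omega-\Gamma_\omega(\omega)=e_T-(\Gamma_\omega-T)_+\circ\omega$ with conditional Jensen. For the second you use the same sign argument (using $\omega\le T$), then add and subtract the empirical balance identity with $g(t)=t-q_T(t)$. One minor point: your appeal to Lemma~\ref{lem:coverage-norm-transfer} for finiteness is unnecessary, since integrability of $Y$ already follows from $0\le\omega\le T$; that lemma also presupposes $\omega\in\mathcal W_N$, which is only verified later in the main proof.
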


\begin{proof}
Conditional Jensen's inequality gives
\[
  \|(\Gamma_\omega-T)_+\circ\omega\|_2
  \le\|(Y-T)_+\|_2=\eta_2(T).
\]
Since
\(\omega-\Gamma_\omega(\omega)
=e_T-(\Gamma_\omega-T)_+\circ\omega\), the triangle inequality proves the
calibration-error bound \eqref{eq:truncated-calibration-error-bound}.

To prove \eqref{eq:truncated-kkt-comparison}, observe that
\begin{equation}
\label{eq:truncated-population-balance-lower-bound}
\begin{aligned}
  E_{\nu}[\{\omega-\Gamma_\omega(\omega)\}e_T]
  &=
  \|e_T\|_2^2
  -E_{\nu}[\{(\Gamma_\omega-T)_+\circ\omega\}e_T]\\
  &\ge\|e_T\|_2^2,
\end{aligned}
\end{equation}
because \(e_T=\omega-T\le0\) wherever
\(\Gamma_\omega(\omega)>T\). The left-hand side of
\eqref{eq:truncated-population-balance-lower-bound} is the population
Bellman-balance functional evaluated at \(e_T\).
Lemma~\ref{lem:empirical-kkt-balance}, applied with
\(g(t)=t-q_T(t)\), shows that the corresponding empirical functional is zero.
Adding and subtracting this empirical functional yields
\eqref{eq:truncated-kkt-comparison}, which proves the lemma.
\end{proof}

\begin{proof}[Proof of Theorem~\ref{thm:isotonic-calibration-error}]
Abbreviate
\[
\begin{aligned}
  T&=M_N,
  &\omega&=\widehat\omega^{(K)},
  &\omega^-&=\widehat\omega^{(K-1)},\\
  e&=e_T=\omega-q_T(\omega),
  &a&=\|e\|_{L^2(\nu)},
  &\Delta_K&=\|\omega-\omega^-\|_{n,2},\\
  \rho_N
  &=N^{-1/3}+\sqrt{\frac{\log(1/\delta)}{N}},
  &L_N&=\log(eN),
  &V_N&=4T+V_{\rm cal}+2.
\end{aligned}
\]
Lemma~\ref{lem:calibration-residual-bv} gives
\(e,\omega\in\mathcal V_{\widehat w_\pi}(V_N)\), and the same class contains
\(\omega^-\). We work on the event in
Lemma~\ref{lem:bv-calibration-empirical-process} with \(V=V_N\).

We first verify that the realized iterates belong to \(\mathcal W_N\), which
permits the use of Condition~\ref{cond:iso-coverage}. Since
\(P_n\omega=P_n\omega^-=1\), the source-sample product bound in
\eqref{eq:bv-empirical-process-bounds} gives
\begin{equation}
\label{eq:calibration-iterate-mass-control}
  |E_{\nu}\omega-1|
  \vee
  |E_{\nu}\omega^--1|
  \le
  C(1+V_N)^2\{T\rho_N+\rho_N^2\}.
\end{equation}
The upper bound in \eqref{eq:calibration-iterate-mass-control} is \(o(1)\)
because \(T=O(\log N)\). Hence, for all
sufficiently large \(N\), both population masses lie in \([1/2,2]\), so
\(\omega,\omega^-\in\mathcal W_N\).

Take \(t_N=(1+K_0+K_+)L_N\) in
Lemma~\ref{lem:coverage-norm-transfer}. Since \(\|e\|_\infty\le T\), its
transfer bounds give
\begin{equation}
\label{eq:calibration-transfer-bounds}
\begin{aligned}
  \|e\|_{L^2(\mu_{0,\pi})}
  &\le C_1\sqrt{L_N}(a+\rho_N),\\
  \|\omega(X)e(X^+)\|_{L^2(Q_{\nu,\pi})}
  &\le C_1\sqrt{L_N}(a+\rho_N),
\end{aligned}
\end{equation}
where one may take
\(C_1\le C(1+K_0+K_++T)^2\). The exponential remainders in
\eqref{eq:calibration-transfer-bounds} are bounded by
\(CT^{3/2}(eN)^{-1/2}\), and hence by
\(C\rho_N\), for all sufficiently large \(N\). Since
\(\|\omega e\|_2\le Ta\),
Lemma~\ref{lem:bv-calibration-empirical-process} implies that the first three
empirical-process terms in \eqref{eq:truncated-kkt-comparison} are bounded by
\begin{equation}
\label{eq:calibration-empirical-process-terms}
  C_2\sqrt{L_N}\{\rho_Na+\rho_N^2\},
\end{equation}
with
\(C_2\le C(1+K_0+K_++V_{\rm cal}+T)^5\).

For the iteration term, the unweighted successor transfer gives
\begin{equation}
\label{eq:successor-square-transfer}
  Q_{\nu,\pi}\{e(X^+)^2\}
  \le
  C_1L_N(a+\rho_N)^2.
\end{equation}
Moreover,
\begin{equation}
\label{eq:successor-square-fourth-moment}
  \|e(X^+)^2\|_{L^2(Q_{\nu,\pi})}
  \le
  T\|e(X^+)\|_{L^2(Q_{\nu,\pi})}.
\end{equation}
Applying the successor-square bound in
\eqref{eq:bv-empirical-process-bounds}, together with
\eqref{eq:successor-square-transfer} and
\eqref{eq:successor-square-fourth-moment}, yields
\[
\begin{aligned}
  Q_n\{e(X^+)^2\}
  \le{}&
  C_1L_N(a+\rho_N)^2\\
  &+C(1+V_N)^2
  \left\{
    \rho_NT\sqrt{C_1L_N}(a+\rho_N)+\rho_N^2
  \right\}
\end{aligned}
\]
and therefore
\begin{equation}
\label{eq:successor-square-empirical-bound}
  Q_n\{e(X^+)^2\}
  \le
  C_3L_N(a+\rho_N)^2,
\end{equation}
where
\(C_3\le C(1+K_0+K_++V_{\rm cal}+T)^5\). By
\eqref{eq:successor-square-empirical-bound}, Cauchy--Schwarz on the common
transition sample gives
\begin{equation}
\label{eq:iteration-transition-product-bound}
  \left|
    Q_n\{(\omega-\omega^-)(X)e(X^+)\}
  \right|
  \le
  \sqrt{C_3L_N}\,\Delta_K(a+\rho_N).
\end{equation}

Combining \eqref{eq:calibration-empirical-process-terms} and
\eqref{eq:iteration-transition-product-bound} with
\eqref{eq:truncated-kkt-comparison} gives
\begin{equation}
\label{eq:calibration-quadratic-inequality}
  a^2
  \le
  \kappa_{0,N}\sqrt{L_N}
  (\rho_N+\Delta_K)(a+\rho_N),
\end{equation}
where \(\kappa_{0,N}
\le C\{1+K_0+K_++V_{\rm cal}+T\}^{6}\).
To track this constant, set
\(H_N=1+K_0+K_++V_{\rm cal}+T\). The bounded-variation empirical-process
constant is at most \(CH_N^2\). The initial-law and weighted-successor
transfers contribute, respectively,
\(C\sqrt{L_N}(a+\rho_N)\) and
\(C\sqrt{TL_N}(a+\rho_N)\). The factor \(\sqrt T\), together with
\(\|\omega e\|_2\le Ta\), is absorbed into the polynomial in \(H_N\).
Consequently, the first three terms in
\eqref{eq:truncated-kkt-comparison} are bounded by
\(CH_N^5\sqrt{L_N}\{\rho_Na+\rho_N^2\}\), and its iteration term contributes
no larger factor. Thus, \(CH_N^6\) bounds every coefficient in the quadratic
inequality \eqref{eq:calibration-quadratic-inequality}.

Since \(\rho_N\le\rho_N+\Delta_K\), solving
\eqref{eq:calibration-quadratic-inequality} gives
\begin{equation}
\label{eq:calibration-pre-absorption}
  a
  \le
  \kappa_{{\rm cal},N}\sqrt{L_N}(\rho_N+\Delta_K),
\end{equation}
where \(\kappa_{{\rm cal},N}
\le C\{1+K_0+K_++V_{\rm cal}+T\}^{8}\).
Lemma~\ref{lem:truncated-calibration-bridge} now yields the explicit
bound before tail absorption:
\[
  \mathrm{Cal}_2(\omega)
  \le
  \kappa_{{\rm cal},N}\sqrt{L_N}(\rho_N+\Delta_K)
  +\eta_2(T).
\]

Because \(\omega\in\mathcal W_N\),
Lemma~\ref{lem:coverage-norm-transfer} gives
\begin{equation}
\label{eq:calibration-tail-remainder}
  \eta_2(T)
  \le2K_{\rm sm}e^{-T/(2K_{\rm sm})}.
\end{equation}
Since \(T=1\vee A_{\rm env}\log(eN)\) and
\(A_{\rm env}>2K_{\rm sm}/3\), the upper bound in
\eqref{eq:calibration-tail-remainder} is \(o(N^{-1/3})\), even after
multiplication by any fixed polynomial in \(T\). For all sufficiently large
\(N\), this remainder is therefore absorbed by
\(\kappa_{{\rm cal},N}\sqrt{L_N}\rho_N\). Substituting the definitions of
\(\rho_N\), \(L_N\), and \(\Delta_K\) gives the bound in
Theorem~\ref{thm:isotonic-calibration-error}.
\end{proof}

 \end{document}